\def\eqref#1{equation~\ref{#1}}
\def\1{\bm{1}}
\def\rx{{\textnormal{x}}}
\DeclareMathAlphabet{\mathsfit}{\encodingdefault}{\sfdefault}{m}{sl}
\SetMathAlphabet{\mathsfit}{bold}{\encodingdefault}{\sfdefault}{bx}{n}
\def\gE{{\mathcal{E}}}
\def\gG{{\mathcal{G}}}
\def\gN{{\mathcal{N}}}
\newcommand{\E}{\mathbb{E}}
\newtheorem{definition}{Definition}
\newtheorem{proposition}{Proposition}
\newtheorem{lemma}{Lemma}
\newcommand{\dsfunc}{\mathtt{Peptides}\text{-}\mathtt{func}}
\newcommand{\dsstruct}{\mathtt{Peptides}\text{-}\mathtt{struct}}
\newcommand{\dsvocsp}{\mathtt{PascalVOC}\text{-}\mathtt{SP}}
\newcommand{\cmark}{\textcolor{green!80!black}{\ding{51}}}
\newcommand{\xmark}{\textcolor{red}{\ding{55}}}
\newcommand\norm[1]{\left\lVert#1\right\rVert}
\newcommand{\stdv}[1]{{\scriptsize#1}}
\newcommand{\msg}[3]{h^{(#1)}_{#2\rightarrow #3}}
\newcommand{\sens}{(\Tilde{C}^{\top}\widehat{B}^{T-1}\Tilde{C})_{j, i}}
\definecolor{byzantium}{rgb}{0.44, 0.16, 0.39}
\definecolor{grn}{rgb}{0.1, 0.6, 0.1}
\definecolor{mgt}{rgb}{0.7, 0.3, 0.7}
\definecolor{chamoisee}{rgb}{0.63, 0.47, 0.35}
\definecolor{purp}{rgb}{0.65, 0.16, 0.65}
\definecolor{alizarin}{rgb}{0.82, 0.1, 0.26}
\definecolor{azure(colorwheel)}{rgb}{0.0, 0.5, 1.0}
\definecolor{brown}{rgb}{0.65, 0.16, 0.16}
\definecolor{lblue}{rgb}{0, 0.2, 0.8}
\definecolor{orange}{rgb}{1.0, 0.5, 0.0}
\definecolor{myblue}{RGB}{0, 0, 255}
\definecolor{mydarkblue}{RGB}{0, 0, 139}
\definecolor{first}{RGB}{80,160,90}
\definecolor{second}{RGB}{200,120,60}
\definecolor{third}{RGB}{100,100,200}
\title{Non-backtracking Graph Neural Networks}
\author{\name Seonghyun Park$^{1\dag}$, Narae Ryu$^{2\dag}$, Gahee Kim$^2$, Dongyeop Woo$^1$, \\
        \name Se-Young Yun$^{2\ddag}$, Sungsoo Ahn$^{1\ddag}$ \\
        \email \{shpark26, dongyeop.woo, sungsoo.ahn\}@postech.ac.kr, \quad \{nrryu, gaheekim, yunseyoung\}@kaist.ac.kr \\
        \addr $^1$POSTECH\qquad$^2$KAIST \\
        }
\begin{document}

\maketitle

\begin{abstract}
The celebrated message-passing updates for graph neural networks allow representing large-scale graphs with local and computationally tractable updates. However, the updates suffer from backtracking, i.e., a message flowing through the same edge twice and revisiting the previously visited node. Since the number of message flows increases exponentially with the number of updates, the redundancy in local updates prevents the graph neural network from accurately recognizing a particular message flow relevant for downstream tasks. In this work, we propose to resolve such a redundancy issue via the non-backtracking graph neural network (NBA-GNN) that updates a message without incorporating the message from the previously visited node. We theoretically investigate how NBA-GNN alleviates the over-squashing of GNNs, and establish a connection between NBA-GNN and the impressive performance of non-backtracking updates for stochastic block model recovery. Furthermore, we empirically verify the effectiveness of our NBA-GNN on the long-range graph benchmark and transductive node classification problems.
{\let\thefootnote\relax\footnote{{$^{\dag}$ Equal Contribution, $^{\ddag}$ Co-corresponding author.}}}
\end{abstract}

\addtocontents{toc}{\protect\setcounter{tocdepth}{0}}

\section{Introduction}
\begin{wrapfigure}{r}{0.5\textwidth}
    \centering
    \vspace{-.1in}
    \includegraphics[width=0.88\linewidth]{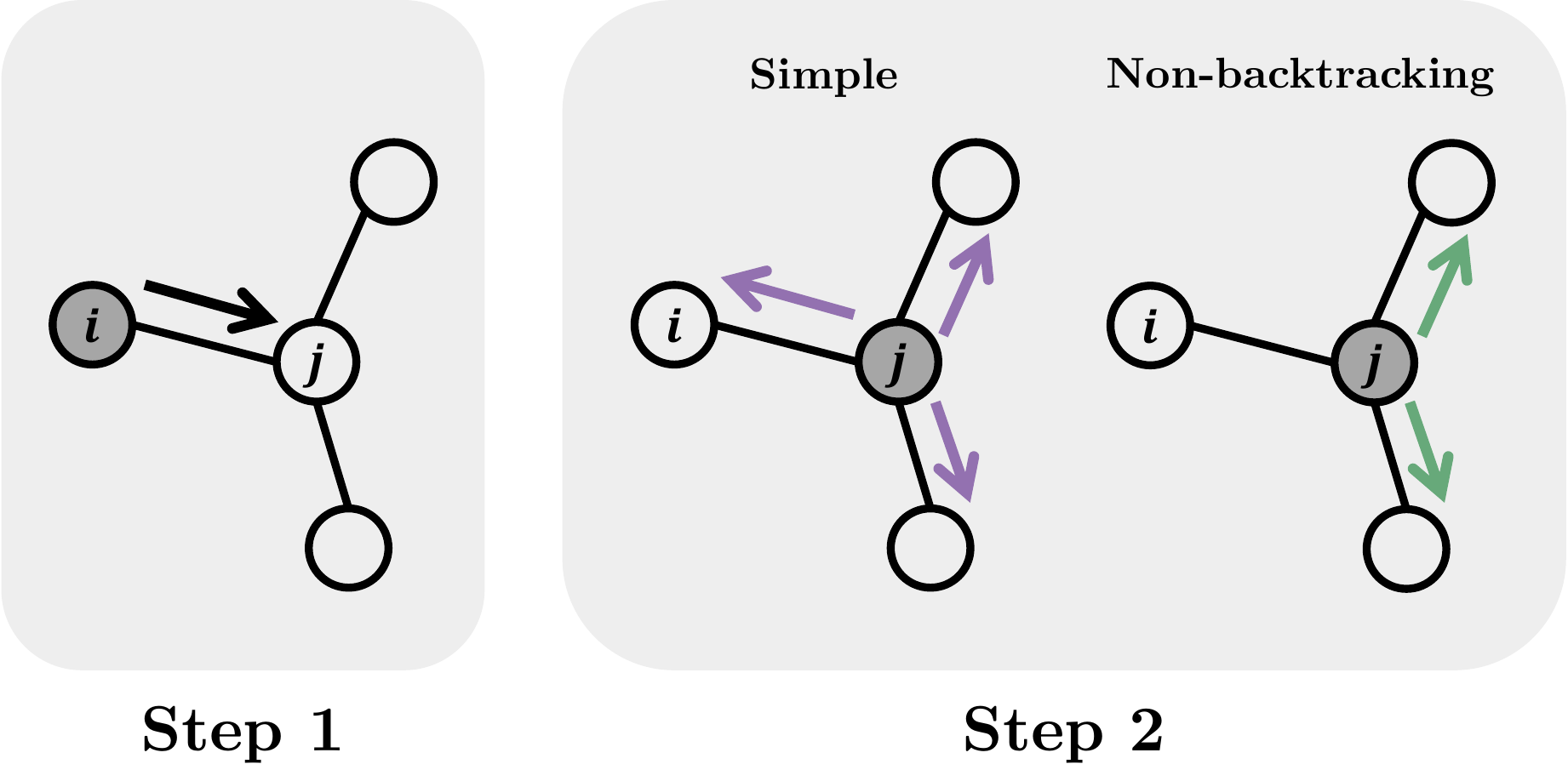}
    \caption{Comparison of two types of message flow. \textbf{Step 1}: Message flow from node $i$ to node $j$. \textbf{Step 2}: The simple update includes the message flow from node $j$ back to node $i$ (left) at step 1, while the non-backtracking update removes this redundant message flow (right).}
    \label{fig:intro}
    \vspace{-.2in}
\end{wrapfigure}
Recently, graph neural networks (GNNs)~\citep{kipf2016semi, hamilton2017inductive, xu2018how} have shown great success in various applications, such as molecular property prediction \citep{gilmer2017neural} and community detection \citep{bruna2017community}. Such success can be largely attributed to the message-passing structure of GNNs, which provides a computationally tractable way of incorporating the overall graph through iterative updates based on local neighborhoods. However, the message-passing structure also brings challenges due to the parallel updates and memoryless behavior of messages passed along the graph. 

In particular, the message flow in a GNN is prone to backtracking, where the message from vertex $i$ to vertex $j$ is reincorporated in the subsequent message from $j$ to $i$, e.g. the left image of step 2 in \cref{fig:intro}. Since the message-passing iteratively aggregates the information, the GNN inevitably encounters an exponential surge in the number of message flows, proportionate to the vertex degrees. This issue is compounded by backtracking, which accelerates the growth of message flows with redundant information.

The effectiveness of non-backtracking updates has been extensively explored in non-GNN message-passing algorithms or random walks \citep{fitzner2013non, rappaport2017faster} (\cref{fig:intro}). For example, given a pair of vertices $i, j$, the belief propagation algorithm \citep{pearl2022reverend} forbids an $i\rightarrow j$ message from incorporating the $j\rightarrow i$ message. Researchers have similarly considered non-Markovian walks \citep{alon2007non}, i.e., walks that do not consecutively traverse the same edge twice. Such classic algorithms have demonstrated great success in applications such as probabilistic graphical model inference and stochastic block models \citep{bordenave2015non}. In particular, the spectrum of the non-backtracking operator contains more useful information than that of the adjacency matrix in revealing the hidden structure of a graph model~\citep{bordenave2015non}. 

However, despite their promise, non-backtracking updates have received limited attention in the context of GNNs. While \citet{chen2017supervised} have considered combining non-backtracking updates with the typical GNN updates, the updates are not designed to prevent backtracking updates, and the analysis is limited to the optimization landscape of linear GNNs. Hence, a thorough investigation of the benefits of non-backtracking updates in GNNs is warranted.

\textbf{Contribution.}  
In this paper, we propose to use a non-backtracking graph neural network (NBA-GNN) to resolve the redundant messages, the over-squashing phenomenon. We employ non-backtracking updates on the messages, i.e., forbid a message from vertex $i$ to vertex $j$ from being incorporated in the message from vertex $j$ to $i$. To this end, we associate hidden features with transitions between a pair of vertices, e.g., $h_{j\rightarrow i}$, and update them from features associated with non-backtracking transitions, e.g., $h_{k\rightarrow j}$ for $k\neq i$.

To motivate our work, we formulate ``message flows'' as the sensitivity of a GNN concerning walks in the graph. Then, we explain how the message flows are redundant; the GNN's sensitivity of a walk with backtracking transitions is covered by other non-backtracking walks. The redundancy harms the GNN since the number of walks increases exponentially as the number of layers grows and the GNN becomes insensitive to a particular walk information. Hence, reducing the redundancy by simply considering non-backtracking walks would benefit the message-passing updates to recognize each walk's information better. As a motivating example, we provide a connection between our sensitivity analysis to the over-squashing phenomenon for GNN \citep{topping2022understanding, black2023understanding, di2023over} in terms of access time.

Furthermore, we analyze our NBA-GNNs from the perspective of over-squashing and their expressive capability to recover sparse stochastic block models (SBMs). To this end, we prove that NBA-GNN improves the Jacobian-based measure of over-squashing \citep{topping2022understanding} compared to its original GNN counterpart. 

Next, we investigate NBA-GNN’s proficiency in node classification within SBMs and its ability to distinguish between graphs originating from the Erdős–Rényi model or the SBM, from the results of \citep{stephan2022non,bordenave2015non}. Unlike traditional GNNs that operate on adjacency matrices and necessitate an average degree of at least $\Omega(\log n)$, NBA-GNN demonstrates the ability to perform node classification with a substantially lower average degree bound of $\omega(1)$ and $n^{o(1)}$. Furthermore, the algorithm can accurately classify graphs even when the average degree remains constant.

Finally, we empirically evaluate our NBA-GNN on the long-range graph benchmark \citep{dwivedi2022long} and transductive node classification problems \citep{sen2008collective,pei2019geom}. We observe that our NBA-GNN demonstrates competitive performance and even achieves state-of-the-art performance on the long-range graph benchmark. For the node classification tasks, we demonstrate that NBA-GNN consistently improves over its conventional GNN counterpart.

To summarize, our contributions are as follows:
\begin{itemize}[leftmargin=0.156in]
    \item We propose NBA-GNN as a solution for the message flow redundancy problem in GNNs.
    \item We analyze how NBA-GNN alleviates over-squashing and is expressive enough to recover sparse stochastic block models with an average degree of $o(\log n)$.
    \item We empirically verify our NBA-GNN to show state-of-the-art performance on the long-range graph benchmark and consistently improve over the conventional GNNs for the transductive node classification tasks.
\end{itemize}

\section{Related works}

\paragraph{Non-backtracking Algorithms.}
Many classical algorithms have considered non-backtracking updates \citep{newman2013spectral, kempton2016non}. Belief propagation \citep{pearl2022reverend} infers the marginal distribution on probabilistic graphical models, and has demonstrated success for tree graphs \citep{kim1983computational} and graphs with large girth \citep{murphy2013loopy}. Moreover, \cite{mahe2004extensions} and \cite{aziz2013backtrackless} suggested graph kernels between labeled graphs utilizing non-backtracking walks, and \cite{krzakala2013spectral} first used it for node classification. Furthermore, the non-backtracking has been shown to yield better spectral separation properties, and its eigenspace contains information about the hidden structure of a graph model~\citep{bordenave2015non, stephan2022non}.

\paragraph{Non-backtracking and GNNs.} 
We also note that there have been similar approaches of applying non-backtracking to GNNs. \cite{chen2017supervised} first used the non-backtracking operator in GNNs, though they do not prevent backtracking and only target community detection tasks. Also, \citet{chen2022redundancy} have computed a non-redundant tree for every node to eliminate redundancy, but inevitably suffers from high complexity. We emphasize the distinction between prior works and our NBA-GNN in \cref{appx:compare-related-work} from the lens of (i) computational complexity and (ii) empirical performance.

\paragraph{Over-squashing of GNNs.} When a node receives information from a $k$-hop neighbor node, an exponential number of messages pass through node representations with fixed-sized vectors. This leads to the loss of information known as \textit{over-squashing} \citep{alon2020bottleneck}, and has been formalized in terms of sensitivity~\citep{topping2022understanding, di2023over}. Hence, sensitivity is defined as the Jacobian of a final node feature at a GNN layer to the initial node representation and can be upper bounded via the graph topology. Stemming from this, graph rewiring methods alleviate over-squashing by adding or removing edges to compute an optimal graph \citep{topping2022understanding, black2023understanding,di2023over}.  Another line of work uses global aspects, e.g., Transformers have been applied to consider global aspects to avoid over-squashing~\citep{ying2021transformers, kreuzer2021rethinking, rampavsek2022recipe, shirzad2023exphormer}.

\paragraph{Expressivity of GNNs for the SBM.}
Certain studies focus on analyzing the expressive power of GNNs using variations of the SBM \citep{holland1983stochastic}. \cite{fountoulakis2022graph} established conditions for the existence of graph attention networks (GATs) that can precisely classify nodes in the contextual stochastic block model (CSBM) with high probability. Similarly, \cite{baranwal2022effects} investigated the effects of graph convolutions within a network on the XOR-CSBM. The preceding works primarily focused on the probability distribution of node features, such as the distance between the means of feature vectors. On the other hand, \cite{kanatsoulis2023representation} analyzed the expressivity utilizing linear algebraic tools and eigenvalue decomposition of graph operators.
\section{Non-backtracking Graph Neural Network}\label{sec:nba-gnn}

\subsection{Motivation from Sensitivity Analysis}\label{subsec:motivation}
We first explain how the conventional message-passing updates are prone to backtracking. To be specific, consider a simple, undirected graph $\mathcal{G}=(\mathcal{V}, \mathcal{E})$ and let $\mathcal{N}(i)$ denote the set of neighbor nodes of the node $i$. Each node $i\in\mathcal{V}$ is associated with a feature $x_{i}$. 
Then, the conventional graph neural networks (GNNs), i.e., message-passing neural networks (MPNNs) \citep{gilmer2017neural}, iteratively update the node-wise hidden feature at the $t$-th layer $h_{i}^{(t)}$ as follows:

\begin{equation}\label{eq:gnn}
h_{i}^{(t+1)} = \phi^{(t)}\biggl(
    h_{i}^{(t)},\left\{
        \psi^{(t)} \left( h_{i}^{(t)}, h_{j}^{(t)} \right)
        : j \in \mathcal{N}(i)
    \right\}
\biggl),
\end{equation}
where $\phi^{(t)}$ and $\psi^{(t)}$ are architecture-specific non-linear update and permutation invariant aggregation functions, respectively. Our key observation is that the message from a node feature $h_i^{(t)}$ to the node feature $h_{j}^{(t+1)}$ is reincorporated in the node feature $h_{i}^{(t+2)}$, e.g., \cref{subfig:comp_graph_typ} shows the computation graph of conventional GNNs with redundant messages.

\paragraph{Sensitivity Analysis.} To concretely describe the consequences of backtracking in message-passing updates, we formulate the sensitivity of the final node feature $h_{i}^{(T)}$ with respect to the input as follows:
\begin{equation}\label{eq:sensitivity}
    \sum_{j\in\mathcal{V}}\frac{\partial h_{i}^{(T)}}{\partial h_{j}^{(0)}} 
= \sum_{s\in \mathcal{W}(i)}\prod_{t=1}^{T} \frac{\partial h_{s(t)}^{(t)}}{\partial h_{s(t-1)}^{(t-1)}},
\end{equation}
where $h_{i}^{(0)} = x_{i}$, $\mathcal{W}(i)$ denotes the set of $T$-step walks ending at node $i,$ and $s(t)$ denotes the $t$-th node in the walk $s\in\mathcal{W}(i)$. Intuitively, this equation shows that a GNN with $T$ layers recognize the graph via aggregation of random walks with length $T$. Our key observation from \cref{eq:sensitivity} is on how the feature $h_{i}^{(T)}$ is insensitive to the information from an initial node feature $h^{(0)}_{j}$, due to the information being ``squashed'' by the aggregation over the exponential number of walks $\mathcal{W}(i)$. A similar analysis has been conducted on how a node feature $h_{i}^{(T)}$ is insensitive to the far-away initial node feature $h_{j}^{(0)}=x_{j}$, i.e., the over-squashing phenomenon of GNNs \citep{topping2022understanding}. 

\begin{wrapfigure}{r}{0.55\textwidth}
    \centering
    \vspace{-.3in}
    \includegraphics[width=\linewidth]{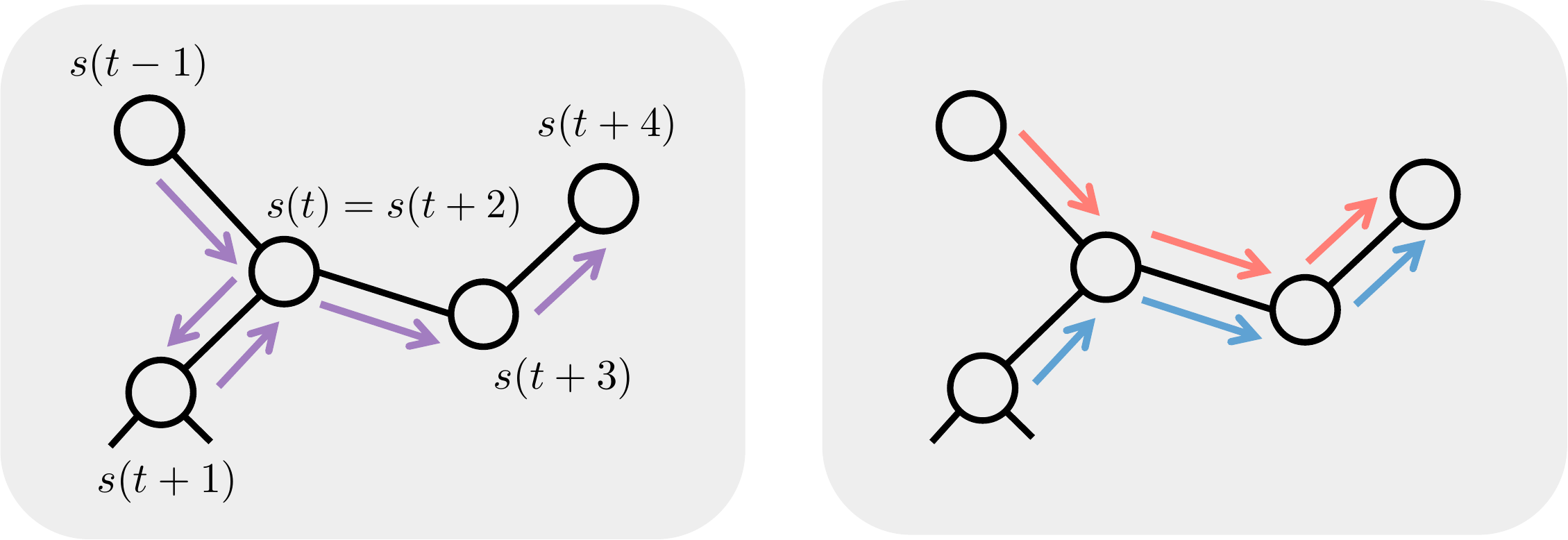}
    \caption{Two non-backtracking walks (right) are sufficient to express information contained in a walk with backtracking transition (left).}
    \label{fig:redundancy}
    \vspace{-.1in}
\end{wrapfigure}

\paragraph{Redundancy of walks with backtracking.} In particular, a walk $s$ randomly sampled from $\mathcal{W}(i)$ is likely to contain a transition that backtracks, i.e., $s(t) = s(t+2)$ for some $t < T$. The walk $s$ would be redundant since the information is contained in two other walks in $\mathcal{W}(i)$: $s(0), \ldots, s(t+1)$ and $s(0), \ldots, s(t) s(t+1), s(t+2)=s(t), s(t+3), \ldots s(T)$, as illustrated in \cref{fig:redundancy}. This leads to the conclusion that non-backtracking walks, i.e., walks that do not contain backtracking transitions, are sufficient to express the information in the walks $\mathcal{W}(i)$. Since the exponential number of walks in $\mathcal{W}(i)$ causes the GNN to be insensitive to a particular walk information, it makes sense to design a non-backtracking GNN that is sensitive to the constrained set of non-backtracking walks.

\paragraph{Relation to Over-squashing.} Finally, we point out an intriguing motivation for our work in terms of over-squashing. \citet{di2023over} analyzed the lower bound for the Jacobian obstruction that measures the degree of over-squashing in terms of access time with respect to a simple random walk. They concluded that the degree of over-squashing, i.e., the size of Jacobian obstruction, is higher for a pair of nodes with longer access time. Hence, for a GNN architecture robust to over-squashing, one could (i) propose a random walk that has shorter access time for a pair of nodes in the graph, and (ii) design a GNN that aligns with the random walk. Since non-backtracking random walks have been empirically shown and believed to generally yield faster access time than simple random walks \citep{lin2019non, fasino2023hitting}, one could aim to design a GNN architecture that aligns with the non-backtracking random walks.
\paragraph{Access Time of Random Walks.}
However, to the best of our knowledge, there is no formal proof of scenarios where non-backtracking random walks yield a shorter access time. As a motivating example, we provide a theoretical result comparing the access time of non-backtracking and simple random walks for tree graphs.
Since non-backtracking random walks do not guarantee a walk of a certain length, we make use of begrudgingly backtracking random walks \citep{rappaport2017faster}, which modifies non-backtracking random walks to remove ``dead ends'' for tree graphs. For the full proof, please refer to \cref{appx:access-time}.
\begin{restatable}[]{proposition}{BBRWAT}\label{prop:nbrw}
Given a tree $\mathcal{G}=(\mathcal{V}, \mathcal{E})$ and a pair of nodes $i, j \in \mathcal{V}$, the access time of begrudgingly backtracking random walk is equal to or smaller than that of a simple random walk. The equality holds if and only if the walk length is 1.
\end{restatable}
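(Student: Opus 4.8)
The plan is to reduce the claim — that the begrudgingly backtracking walk ($\mathrm{BBRW}$, which moves non‑backtrackingly except at leaves, where it is forced to backtrack) has access (expected hitting) time $H_{\mathrm{BBRW}}(i\to j)\le H_{\mathrm{SRW}}(i\to j)$ — to a term‑by‑term comparison along the unique $i$–$j$ path. Write $i=v_0,v_1,\dots,v_d=j$ for this path and root $\mathcal{G}$ at $j$, so $v_m$ is the parent of $v_{m-1}$. Since $v_m$ is a cut vertex separating $v_0$ from $\{v_{m+1},\dots,v_d\}$, any walk from $i$ hits $v_1,\dots,v_d$ in order; with $\tau_m$ the first hitting time of $v_m$, this gives $H(i\to j)=\sum_{m=1}^d \mathbb{E}[\tau_m-\tau_{m-1}]$ for both walks. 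The key point I would isolate for the $\mathrm{BBRW}$ is that at time $\tau_{m-1}$ its memory is deterministic — ``just arrived from $v_{m-2}$'' — because before first reaching $v_{m-1}$ the walk is confined to the component of $v_0$ in $\mathcal{G}\setminus\{v_{m-1}\}$, whose only vertex adjacent to $v_{m-1}$ is $v_{m-2}$. Hence each increment $\mathbb{E}[\tau_m-\tau_{m-1}]$ is a well‑defined number $\tilde{A}_m$, and it suffices to compare it with the corresponding $\mathrm{SRW}$ increment.

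For the simple walk I would invoke the classical tree formula: $\mathbb{E}[\tau_m-\tau_{m-1}]=\sum_{v\in\mathcal{S}_{m-1}}\deg_{\mathcal{G}}(v)=2N_{m-1}-1$, where $\mathcal{S}_{m-1}$ is the component of $v_{m-1}$ in $\mathcal{G}\setminus\{v_m\}$ and $N_{m-1}=|\mathcal{S}_{m-1}|$ (the subtree $\mathcal{S}_{m-1}$ has $N_{m-1}-1$ internal edges plus the single edge leaving it). Summing, $H_{\mathrm{SRW}}(i\to j)=\sum_{m=1}^d(2N_{m-1}-1)$.

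For the $\mathrm{BBRW}$ I would first establish a small lemma: the expected time to leave a rooted subtree $\mathcal{T}$, having entered at its root from outside, is again $2|\mathcal{T}|-1$ — exactly as for the $\mathrm{SRW}$. This follows from the recursion $E(\mathcal{T})=1+\sum_{c}\bigl(1+E(\mathcal{T}_c)\bigr)$ over the children $c$ of the root, which coincides with the $\mathrm{SRW}$ recursion (the single extra non‑backtracking constraint at the root is exactly compensated) and unrolls to $\sum_{u\in\mathcal{T}}(1+c_u)=2|\mathcal{T}|-1$, with $c_u$ the number of children of $u$ and $\sum_u c_u=|\mathcal{T}|-1$. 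Using the lemma, each $\mathrm{BBRW}$ excursion from $v_{m-1}$ into a side subtree, and each ``retreat'' through $v_{m-2}$ back into $\mathcal{S}_{m-2}$, contributes a fixed expected cost ($2n-1$ plus one entry step, $n$ the relevant subtree size), so computing $\tilde{A}_m$ reduces to a finite linear system at $v_{m-1}$ whose states record only which neighbour the walk last came from ($v_m$, $v_{m-2}$, or a side child). Solving it I expect to obtain, for $m\ge2$,
\[
\tilde{A}_m \;=\; 1+\frac{(s+1)(2N_{m-1}-2)-2N_{m-2}}{s+2},\qquad s:=\deg_{\mathcal{G}}(v_{m-1})-2,
\]
and $\tilde{A}_1 = 1+\tfrac{2l(N_0-1)}{l+1}$ with $l:=\deg_{\mathcal{G}}(i)-1$.

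The comparison is then elementary: for $m\ge2$, $\tilde{A}_m\le 2N_{m-1}-1$ reduces after clearing denominators to $-2N_{m-2}\le 2N_{m-1}-2$, which is strict since $N_{m-2}\ge1$; for $m=1$ it reduces to $\tfrac{l}{l+1}\le1$, with equality iff $l=0$, i.e.\ iff $i$ is a leaf adjacent to $j$. Summing over $m$ gives $H_{\mathrm{BBRW}}(i\to j)\le H_{\mathrm{SRW}}(i\to j)$, with equality exactly when every term is tight, i.e.\ when $d=1$ and $i$ is a leaf — precisely the case in which the walk reaches $j$ deterministically in a single step (walk length $1$). I expect the main obstacle to be the memory bookkeeping for the begrudgingly backtracking walk: proving rigorously that the first visit to each $v_{m-1}$ always occurs in a fixed memory state (legitimizing the decomposition into the $\tilde{A}_m$) and incorporating the ``retreat'' term correctly into the recursion; the escape‑time lemma is what keeps everything after that point purely arithmetic.
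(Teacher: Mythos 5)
Your proposal is correct and follows essentially the same strategy as the paper: decompose the access time along the unique $i$–$j$ path (exploiting that each $v_m$ is a cut vertex, which also fixes the BBRW memory state at each first arrival), derive a closed-form per-step cost for each walk in terms of subtree sizes, and compare term by term. The main presentational difference is that you package the BBRW analysis around an ``escape-time'' invariant $E(\mathcal{T}) = 2|\mathcal{T}|-1$ and solve a local Markov-chain linear system directly at $v_{m-1}$, whereas the paper goes through a return-time formula $\tilde{\mathtt{t}}(i;\mathcal{G}) = 2|\mathcal{E}|/d_i$ (its Lemma A.6, which your escape-time lemma implies by averaging over the first step) and a recursive identity (its Lemma A.5) relating the transition-conditioned access time to the unconditioned one plus a return-time correction; both routes yield the same per-step quantity, and I checked that your conjectured $\tilde{A}_m = 1 + \frac{(s+1)(2N_{m-1}-2)-2N_{m-2}}{s+2}$ agrees with the paper's derived formula. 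Two minor points worth noting: the local linear system for $\tilde{A}_m$ is set up but not actually solved in your write-up (the algebra is routine and the stated answer is right, but it should be carried out), and the claim that the BBRW visits each side subtree exactly once in expectation before escaping — which underlies your recursion $E(\mathcal{T}) = 1 + \sum_c (1 + E(\mathcal{T}_c))$ — deserves an explicit argument (e.g.\ by symmetry among the children and the fact that the number of side-excursions before escaping is geometric with mean $d-1$), since it is not immediately obvious that the non-backtracking constraint ``exactly compensates.''
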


\subsection{Method Description}
\label{sec:method}

\begin{figure*}[t]
    \centering
    \begin{subfigure}[t]{0.49\linewidth}
        \centering
        \includegraphics[scale=0.27]{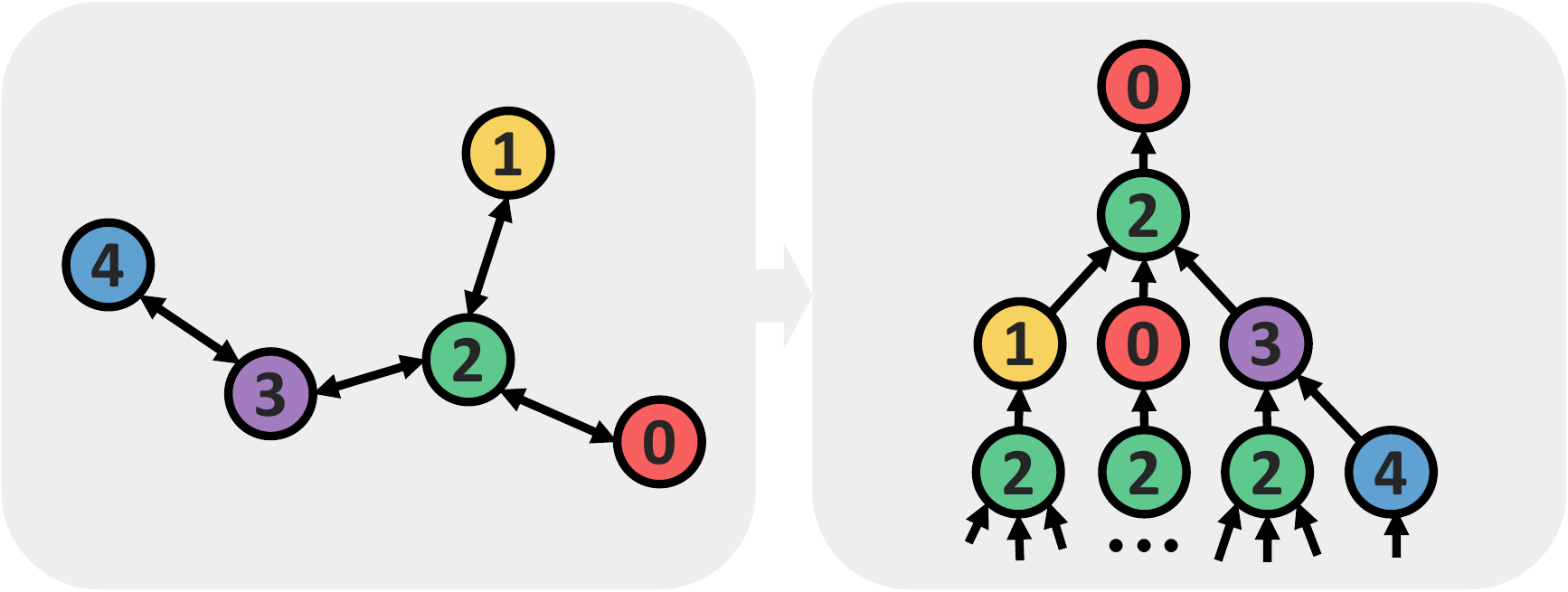}
        \caption{Computation graph of typical GNN}
        \label{subfig:comp_graph_typ}
    \end{subfigure}
    \begin{subfigure}[t]{0.49\linewidth}
        \centering
        \includegraphics[scale=0.27]{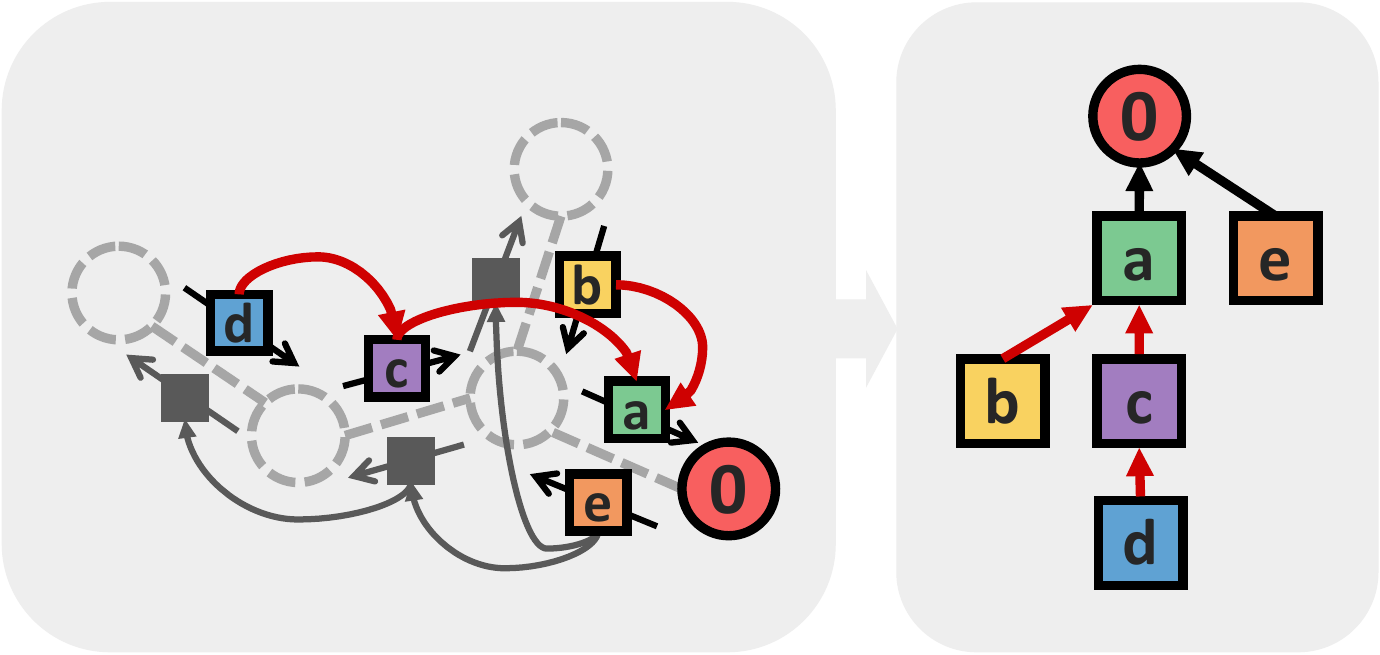}
        \caption{Computation graph of NBA-GNN}\label{subfig:comp_graph_nba}
    \end{subfigure}
    \caption{
        Computation graph of typical GNN and NBA-GNN predicting node ``$0$''. (\subref{subfig:comp_graph_typ}) Redundant messages increase the size of the computation graph, proportional to the number of layers.
        (\subref{subfig:comp_graph_nba}) NBA-GNN assigns a pair of features for each edge and updates them via non-backtracking message passing. By reducing redundant messages, it results in a simplified computation graph compared to typical GNNs.
    }
    \label{fig:method-overview}
\end{figure*}

In this section, we present the \textbf{N}on-\textbf{BA}cktracking GNN (NBA-GNN) with the motivation described in \cref{subsec:motivation}. Given an undirected graph $\mathcal{G} = (\mathcal{V}, \mathcal{E})$, our NBA-GNN associates a pair of hidden features $h_{i\rightarrow j}^{(t)}, h_{j\rightarrow i}^{(t)}$ for each edge $\{i, j\}$. Then the non-backtracking message passing update for a hidden feature $\msg{t}{j}{i}$ is defined as follows:
\begin{equation}
\label{eq:nba-gnn}
    h_{j\rightarrow i}^{(t+1)} = \phi^{(t)} \biggl(
        h_{j\rightarrow i}^{(t)}, 
        \left\{
        \psi^{(t)} \left(
        h_{k\rightarrow j}^{(t)}, h_{j\rightarrow i}^{(t)}\right)
        : k\in\mathcal{N}(j)\setminus \{i\}
        \right\}
    \biggl),
\end{equation}
where $\phi^{(t)}$ and $\psi^{(t)}$ are backbone-specific non-linear update and permutation-invariant aggregation functions at the $t$-th layer, respectively. For example, $\psi^{(t)}$ and $\phi^{(t)}$ are multi-layer perceptron and summation over a set for the graph isomorphism network \citep[GIN]{xu2018how}, respectively. 
Given the update in \cref{eq:nba-gnn}, one can observe that the message $h_{i \rightarrow j}^{(t)}$ is never incorporated in the message $h_{j \rightarrow i}^{(t+1)}$, and hence the update is free from backtracking. 
Note that line graph neural networks \citep[LGNN]{chen2017supervised} also applied non-backtracking operators in GNNs. However, it does not address the issue of redundant messages, as it continues to use the adjacency matrix. Our key contribution is the use of the non-backtracking operator specifically to tackle the message redundancy issue. Please refer to \cref{appx:compare-related-work} for a detailed comparison.

\paragraph{Initialization and Node-wise Aggregation of Messages.} The message at the $0$-th layer $h_{i\rightarrow j}^{(0)}$ is initialized by encoding the node features $x_{i}, x_{j}$, and the edge feature $e_{ij}$ using a non-linear function $\phi$. After updating hidden features for each edge based on \cref{eq:nba-gnn}, we apply a permutation-invariant pooling over all the messages for graph-wise predictions. Since we use hidden features for each edge, we construct the node-wise predictions at the final $T$-th layer as follows:
\begin{equation}
\label{eq:final-agg}
h_{i} = \sigma \left(
    \rho \left\{ h_{j\rightarrow i}^{(T)} : {j \in \mathcal{N}(i)} \right\},
    \rho \left\{ h_{i\rightarrow j}^{(T)} : {j \in \mathcal{N}(i)} \right\}
\right),
\end{equation}
where $\sigma$ is a non-linear aggregation function with different weights for incoming edges $j\rightarrow i$ and outgoing edges $i\rightarrow j$, $\rho$ is a non-linear aggregation function invariant to the permutation of nodes in $\mathcal{N}(i)$. We provide a computation graph of NBA-GNN in \cref{subfig:comp_graph_nba} to summarize our algorithm.

\paragraph{Begrudgingly Backtracking Update.} While the messages from our update are resistant to backtracking, a message $h_{j\rightarrow i}^{(t)}$ may get trapped in node $i$ for the special case when $\mathcal{N}(i) = \{j\}$. To resolve this issue, we introduce a simple trick coined begrudgingly backtracking update \citep{rappaport2017faster} that updates $h_{i\rightarrow j}^{(t+1)}$ using $h_{j\rightarrow i}^{(t)}$ only when $\mathcal{N}(i) = \{j\}$. We empirically verify the effectiveness of begrudgingly backtracking updates in \cref{subsec:abl}.

\paragraph{Implementation.} To better understand our NBA-GNN, we provide an example of non-backtracking message-passing updates with a GCN backbone \citep{kipf2016semi}, coined NBA-GCN. The message update at the $t$-th layer of NBA-GCN can be written as follows:
\begin{equation}
    \label{eq:nba-gcn}
    h_{j\rightarrow i}^{(t+1)} = h_{j\rightarrow i}^{(t)} + \sigma_{\text{GCN}} \left(
        \frac{1}{\vert \mathcal{N}(j) \vert -1}
        \textbf{W}^{(t)} 
        \sum_{k \in \mathcal{N}(j) \setminus \{i\}}
        h_{k\rightarrow j}^{(t)}
    \right),
\end{equation}
where $\sigma_{\text{GCN}}$ is an element-wise nonlinear function, e.g., rectified linear unit \citep[ReLU]{agarap2018deep},  $\mathbf{W}^{(t)}$ is the weight matrix, and messages are normalized by their number of neighbors $\vert\mathcal{N}(j)\vert - 1$.

\section{Theoretical Analysis}

In this section, we provide a theoretical analysis of the proposed NBA-GNN framework. To be specific, we show that (a) our NBA-GNNs improve the upper bound for sensitivity-based measures of GNN over-squashing and (b) NBA-GNNs can detect the underlying structure of SBMs even for very sparse graphs.

\subsection{Sensitivity Analysis on Over-squashing} 
\label{sec:osq-sensitivity}

While \cite{chen2017supervised} initially introduced the non-backtracking operator in GNNs, they did not explore its theoretical implications. Hence, we first analyze how NBA-GNNs alleviate the over-squashing issue. A well-known quantification to assess the over-squashing effect is the \textit{sensitivity bound} presented in \cref{prop:sensitivity-bound}, i.e., the Jacobian of node-wise output for another initial node feature. Note that \citet{topping2022understanding} assumes the node features and hidden representations as scalars for better understanding.
\begin{restatable}[\textbf{Sensitivity bounds}]{proposition}{Sensbound}
    \label{prop:sensitivity-bound}
    \citep{topping2022understanding}
    Assume an MPNN defined in \cref{eq:gnn}.
    Let two nodes $i, j \in \mathcal{V}$ with distance $T$. If $\norm{\nabla \phi^{(t)}}_1 \leq \alpha $ and $ \norm{\nabla \psi^{(t)}}_1 \leq \beta $ for $ 0 \leq t < T$, then the sensitivity bound can be defined as the following:
    \begin{equation}
        \label{eq:sensitivity-bound}
        \norm{\dfrac{\partial h^{(T)}_{j}}{\partial x_{i}}}_1  \leq (\alpha\beta)^{T} (\widehat{A}^{T})_{j, i}\,,
    \end{equation}
    where $\widehat{A}$ denotes the degree-normalized adjacency matrix.
\end{restatable}

Over-squashing occurs when the right-hand side of \cref{eq:sensitivity-bound} is too small, i.e., the hidden representation of node $j$ becomes insensitive to the initial feature of node $i$ \citep{topping2022understanding,di2023over}. To address this, we interpret the topology, i.e., the adjacency matrix, of the \textit{sensitivity bound} in terms of the \textit{random walk} that a GNN aligns with. In the following, we show that a non-backtracking random walk yields a higher sensitivity bound than simple random walks.

For our analysis, we bring the notation of non-backtracking matrix $B\in\{0,1\}^{2|\mathcal{E}|\times 2|\mathcal{E}|}$ and the incidence matrix $C\in \mathbb{R}^{2|\mathcal{E}| \times |\mathcal{V}|}$ following \cite{chen2017supervised} to describe the message-passing of NBA-GNNs and node-wise aggregation via linear operation, respectively. To be specific, the backtracking matrix $B$ and the incidence matrix $C$ are defined as follows:
\begin{align*}
B_{(\ell \rightarrow k), (j\rightarrow i)} = \begin{cases}
    1 \quad \text{if $k=j, \ell \neq i$}\\
    0 \quad \text{otherwise}
\end{cases}, \qquad
C_{(k\rightarrow j), i} = 
\begin{cases}
    1 \quad \text{if $j=i$ or $k=i$}\\
    0 \quad \text{otherwise}
\end{cases}.
\end{align*}

We also define $D_{out}$ and $D_{in}$ as the matrices representing out-degree and in-degree of NBA-GNNs, respectively, capturing the count of outgoing and incoming edges for each edge. These are diagonal matrices with $(D_{out})_{(j\rightarrow i), (j\rightarrow i)} = \sum_{\ell \rightarrow k}B_{(j \rightarrow i), (\ell \rightarrow k)}$ and $(D_{in})_{(j\rightarrow i), (j\rightarrow i)} = \sum_{\ell \rightarrow k}B_{(\ell \rightarrow k),(j \rightarrow i)}$. Next, we introduce $\widehat{B}$ as the normalized non-backtracking matrix augmented with self-loops: $\widehat{B} = (D_{out} + I)^{-\frac{1}{2}} (B+I) (D_{in} + I)^{-\frac{1}{2}}$. Finally, we let $\tilde{C}$ as the matrix where $\tilde{C}_{(k\rightarrow j), i} = C_{(k\rightarrow j), i}+C_{(j\rightarrow k), i}$. Then, one obtains the following sensitivity bound of NBA-GNNs. 
\begin{restatable}[\textbf{Sensitivity bounds of NBA-GNNs}]{lemma}{NBABound}
    \label{lmm:nonback-bound}
    Consider two nodes $i, j\in \mathcal{V}$ with a random walk distance $T$ given a $(T-1)$-layer NBA-GNN as described in \cref{eq:nba-gnn} and \cref{eq:final-agg}. Suppose $\norm{\nabla \phi^{(t)}}_1, \norm{\nabla \sigma}_1 \leq \alpha$, $\norm{\nabla \psi^{(t)}}_1, \norm{\nabla \rho}_1 \leq \beta$, and $\norm{\nabla \phi}_1 \leq \gamma$ for $ 0 \leq t < T$. Then the following holds:
    \begin{equation*}
        \norm{\dfrac{\partial h_{j}^{}}{\partial x_{i}}}_1
        \leq
        (\alpha \beta)^{T} \gamma (\tilde{C}^{\top} \widehat{B}^{(T-1)} \tilde{C})_{j, i}\,.
    \end{equation*}
\end{restatable}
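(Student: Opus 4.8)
The strategy is to unfold the Jacobian $\partial h_j/\partial x_i$ into a sum over directed walks using the chain rule, exactly mirroring the sensitivity identity in \cref{eq:sensitivity} but adapted to the edge-indexed updates of \cref{eq:nba-gnn}. First I would write $h_j$ via \cref{eq:final-agg} as a function of the terminal edge-messages $\{h_{k\rightarrow j}^{(T-1)}\}$ and $\{h_{j\rightarrow k}^{(T-1)}\}$, so that $\partial h_j/\partial x_i$ decomposes as a sum over the two orientations of edges incident to $j$; the $\sigma$ and $\rho$ gradients each contribute a factor bounded by $\alpha$ and $\beta$ respectively, which is where $\tilde C$ (summing both orientations $C_{(k\rightarrow j),i}+C_{(j\rightarrow k),i}$) enters on the ``$j$'' side.

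\textbf{Key steps.}
(1) Apply the chain rule to $\partial h_{j\rightarrow i}^{(t+1)}/\partial h_{k\rightarrow j}^{(t)}$ from \cref{eq:nba-gnn}: the derivative is nonzero only when $k\in\mathcal N(j)\setminus\{i\}$, i.e.\ exactly the support of $B_{(k\rightarrow j),(j\rightarrow i)}$, and its norm is bounded by $|\nabla\phi^{(t)}|\,|\nabla\psi^{(t)}|\le\alpha\beta$; the ``self'' term $h_{j\rightarrow i}^{(t)}$ appearing inside $\phi^{(t)}$ contributes the $+I$ in $\widehat B$, and the normalization by $|\mathcal N(j)|-1$ (or, after adding self-loops, $|\mathcal N(j)|$) produces the $(D+I)^{-1/2}(\cdot)(D+I)^{-1/2}$ symmetric scaling. (2) Iterate this bound over the $T-1$ message-passing layers: composing the per-layer operators and taking norms, $\|\partial h^{(T-1)}/\partial h^{(0)}\|$ is bounded entrywise by $(\alpha\beta)^{T-1}\widehat B^{T-1}$ acting between edge-indices. (3) Bound the initialization layer $\partial h_{k\rightarrow j}^{(0)}/\partial x_i$ by $\gamma$ times the indicator that $i\in\{k,j\}$, i.e.\ $\gamma\,\tilde C_{(k\rightarrow j),i}$. (4) Contract the terminal edge-index against $\tilde C^\top$ (absorbing $\alpha\beta$ from $\sigma,\rho$) and the initial edge-index against $\tilde C$ (absorbing $\gamma$), yielding $(\alpha\beta)^{T}\gamma\,(\tilde C^\top\widehat B^{T-1}\tilde C)_{j,i}$. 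One should also check the distance-$T$ hypothesis is used only to guarantee no shorter walk contributes, so the bound is not vacuous, but it is not needed for the inequality itself.

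\textbf{Main obstacle.}
The delicate point is bookkeeping the orientation/indexing so that the combinatorial object that appears is precisely $\tilde C^\top\widehat B^{T-1}\tilde C$ and not some looser quantity: one must verify that summing a message's incident edges in \cref{eq:final-agg} over \emph{both} directions corresponds exactly to the symmetrized incidence $\tilde C$, that the self-loop augmentation $B+I$ correctly accounts for the retained $h_{j\rightarrow i}^{(t)}$ slot in $\phi^{(t)}$, and that the degree normalization in the backbone (illustrated by \cref{eq:nba-gcn}) matches $(D+I)^{-1/2}$ on both sides rather than a one-sided $D^{-1}$. Handling the begrudgingly-backtracking exception ($\mathcal N(i)=\{j\}$) cleanly — arguing it only adds entries already dominated by $\widehat B$, or restricting to the generic case — is a minor but necessary caveat. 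Beyond that the argument is a routine layer-by-layer triangle-inequality and sub-multiplicativity estimate.
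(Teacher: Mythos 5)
Your outline tracks the paper's own proof closely: chain rule through the final aggregation $(\sigma,\rho)$ giving one factor $\alpha\beta$ and the incidence $\tilde C$ on the $j$ side, iteration through the $T-1$ message layers giving $(\alpha\beta)^{T-1}$ and the normalized non-backtracking operator $\widehat B^{T-1}$, and the initialization $\phi$ giving $\gamma$ and $\tilde C$ on the $i$ side. The paper also identifies the resulting walk sum with $(\tilde C^\top\widehat B^{T-1}\tilde C)_{j,i}$ at the same level of informality you do, so the overall structure is the same.

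There is, however, one genuine gap in your step (4). You write that the distance-$T$ hypothesis ``is used only to guarantee no shorter walk contributes ... but is not needed for the inequality itself.'' That is not how the paper's argument goes, and I do not think the inequality survives without it. Each layer update $\phi^{(t)}\bigl(h_{j\to i}^{(t)},\ \{\dots\}\bigr)$ has a residual term $\partial_1\phi^{(t)}\cdot\partial h_{j\to i}^{(t)}/\partial x_i$ whose derivative is only bounded by $\alpha$, with no accompanying $\beta$ and no degree normalization. The paper kills this term at every depth precisely by invoking $\mathrm{dist}(i,j)=T$: a message that has only had $t<T-1$ updates cannot yet depend on $x_i$, so $\partial h^{(t)}/\partial x_i = 0$ for all the residual branches. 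Your alternative of absorbing the self-branch into the $+I$ in $\widehat B = (D+I)^{-1/2}(B+I)(D+I)^{-1/2}$ does not work directly either: that would require $\alpha \le \alpha\beta\,(D+I)^{-1}_{(e),(e)}$, i.e. $\beta \ge d_e + 1$ for every edge $e$, which is not assumed. So the distance hypothesis is doing real work in the chain-rule bookkeeping, not just ensuring non-vacuity; any write-up should make that explicit rather than treat the residual branch as folded into $\widehat B$.

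Two smaller remarks. First, $\tilde C_{(k\to j),i} = C_{(k\to j),i} + C_{(j\to k),i}$ evaluates to $2\cdot\mathbf 1[i\in\{k,j\}]$ (the two conditions coincide), so it is twice the indicator you describe — harmless for an upper bound, but worth noting if you are matching constants. Second, your observation that the backbone normalization (e.g.\ the $\tfrac{1}{|\mathcal N(j)|-1}$ in NBA-GCN) is one-sided whereas $\widehat B$ is symmetrically normalized is a legitimate concern; the paper sidesteps it by postulating the update directly in the normalized form of the appendix (Eq.~\ref{eq:gnn-sens}) rather than deriving $\widehat B$ from Eq.~\ref{eq:nba-gnn}, so if you derive it you should flag that the symmetric scaling is an upper-bounding normalization choice rather than an exact rewriting of the GCN backbone.
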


We provide the proof in \cref{appx:sensitivity}. \cref{lmm:nonback-bound} states how \textit{the over-squashing effect is controlled by the power of $\widehat{B}$}. Consequently, one can infer that increasing the upper bound likely alleviates the over-squashing effect in GNNs \citep{topping2022understanding,black2023understanding,gutteridge2023drew} by reshaping the graph topology, i.e., the power of the adjacency matrix in the right-hand side of \cref{eq:sensitivity-bound}. Additionally, the assumptions on nabla bounds are derived considering the Lipschitz constant of the non-linearity function and the maximum entry value across all weight matrices \citep{di2023over}.

From this motivation, we provide an analysis to support our claim that the NBA-GNNs suffer less from the over-squashing effect due to its larger sensitivity bound. The key point is that the \textit{sensitivity bounds align with a random walk}, and the non-backtracking random walks result in a larger sensitivity bound.
\begin{restatable}{proposition}{expBound}
    \label{thm:sens-bound}
    Consider an MPNN defined as in \cref{eq:gnn} and a $(T-1)$-layer NBA-GNN described by \cref{eq:nba-gnn} and \cref{eq:final-agg}. For any pair of nodes $i, j \in \mathcal{V}$ with distance $T$, the following inequality holds between sensitivity bounds:
    \begin{equation*}
        (\widehat{A}^{T})_{j, i} \leq 
        (\Tilde{C}^{\top}\widehat{B}^{T-1}\Tilde{C})_{j, i} \,.
    \end{equation*}
    For $d$-regular graphs, $(\Tilde{C}^{\top}\widehat{B}^{T-1}\Tilde{C})_{j, i}$ decays slower by $O(d^{-T})$, while $(\widehat{A}^{T})_{j, i}$ decays with $O((d+1)^{-T})$.
\end{restatable}
We provide the full proof in \cref{appx:sensitivity}, based on comparing the degree-normalized number of non-backtracking and simple walks from node $i$ to node $j$. To the best of our knowledge, we are the first to compare the degree of over-squashing between GNNs aligned with different types of random walks. Hence, \cref{thm:sens-bound} indicates that NBA-GNNs has a larger sensitivity bound compared to conventional GNNs and suffers less from over-squashing, experimentally shown in \cref{tab:lrgb-improve,fig:abl-nba-ba-vocsp,fig:abl-nba-ba-func}. Moreover, for the sensitivity bound of $d$-regular graphs, consider the case of multiplying the power of $\widehat{B}$ or $\widehat{A}$ to a one-hot vector. Since every entry is always identical and smaller in $\widehat{B}$, all entries from the resulting vector from the non-backtracking matrix will have larger values than those from the adjacency matrix.

\subsection{Expressive Power of NBA-GNN on SBMs}\label{sec:nbt_spectrum}
In the literature on the expressive capabilities of GNNs, comparisons with the well-known $k$-WL test are common. However, since the $k$-WL test only focuses on graph isomorphism, i.e. graph level tasks, it is inadequate for measuring the expressive power in node classification tasks. Furthermore, due to the substantial performance gap between the 1-WL (equivalent to 2-WL) and 3-WL tests, many algorithms fall into the range between these two tests, making it more difficult to compare them with each other \citep{huang2021short, wang2023mathscr}. It is also worth noting that comparing GNNs with the WL test does not always accurately reflect their performance on real-world datasets. 

To address these issues, several studies have turned to spectral analysis of GNNs.
From a spectral viewpoint, GNNs can be seen as functions of the eigenvectors and eigenvalues of the given graph. \cite{nt2019revisiting} showed that GNNs operate as low-pass filters on the graph spectrum, and \cite{balcilar2020analyzing} analyzed the use of various GNNs as filters to extract the relevant graph spectrum and measure their expressive power. Moreover, \cite{oono2020graph} argue that the expressive power of GNNs is influenced by the topological information contained in the graph spectrum.

The eigenvalues and the corresponding adjacency matrix eigenvectors play a pivotal role in establishing the fundamental limits of community detection in SBM, as evidenced by \cite{yun2019optimal}. The adjacency matrix exhibits a spectral separation property, and an eigenvector containing information about the assignments of the vertex community becomes apparent \citep{lei2015consistency}. Furthermore, by analyzing the eigenvalues of the adjacency matrix, it is feasible to determine whether a graph originates from the Erdős–Rényi (ER) model or the SBM \citep{erdHos2013spectral, avrachenkov2015spectral}. However, these spectral properties are particularly salient when the average degree of the graph satisfies $\Omega(\log n)$. For graphs with average degrees $o(\log n)$, vertices with higher degrees predominate, affecting eigenvalues and complicating the discovery of the underlying structure of the graph \citep{benaych2019largest}.

\begin{table*}[t]
    \centering
    \caption{Comparison of conventional MPNNs and GNNs in the long-range graph benchmark, with and without Laplacian positional encoding (LapPE). We also denote the relative improvement by Impr.}
    \resizebox{\textwidth}{!}{%
    \begin{tabular}{lrrrrrr}
        \toprule
        \multirow{2}{*}{\textbf{Model}} & \multicolumn{2}{c}{$\dsfunc$} & \multicolumn{2}{c}{$\dsstruct$} & \multicolumn{2}{c}{$\dsvocsp$} \\
        \cmidrule(lr){2-3}\cmidrule(lr){4-5}\cmidrule(lr){6-7}
        & AP $\uparrow$ & Impr. & MAE $\downarrow$ & Impr. & F1 $\uparrow$ & Impr. \\
        \midrule
        GCN & 0.5930 \stdv{$\pm$ 0.0023} & & 0.3496 \stdv{$\pm$ 0.0013} & & 0.1268 \stdv{$\pm$ 0.0060} \\
        + NBA & 0.6951 \stdv{$\pm$ 0.0024} & +17\% & 0.2656 \stdv{$\pm$ 0.0009} & +22\% & 0.2537 \stdv{$\pm$ 0.0054} & +100\% \\
        + NBA+LapPE & \textbf{0.7206 \stdv{$\pm$ 0.0028}} & +22\% & \textbf{0.2472 \stdv{$\pm$ 0.0008}} & +29\% & \textbf{0.3005 \stdv{$\pm$ 0.0010}} & +137\% \\
        \midrule
        GIN & 0.5498 \stdv{$\pm$ 0.0079} & & 0.3547 \stdv{$\pm$ 0.0045} & & 0.1265 \stdv{$\pm$ 0.0076} & \\
        + NBA & 0.6961 \stdv{$\pm$ 0.0045} & +27\% & 0.2534 \stdv{$\pm$ 0.0025} & +29\% & 0.3040 \stdv{$\pm$ 0.0119} & +140\%  \\
        + NBA+LapPE & \textbf{0.7071 \stdv{$\pm$ 0.0067}} & +29\% & \textbf{0.2424 \stdv{$\pm$ 0.0010}} & +32\% & \textbf{0.3223 \stdv{$\pm$ 0.0010}} & +155\% \\
        \midrule
        GatedGCN & 0.5864 \stdv{$\pm$ 0.0077} & & 0.3420 \stdv{$\pm$ 0.0013} & & 0.2873 \stdv{$\pm$ 0.0219} & \\
        + NBA & 0.6429 \stdv{$\pm$ 0.0062} & +10\% & 0.2539 \stdv{$\pm$ 0.0011} & +26\% & 0.3910 \stdv{$\pm$ 0.0010} & +36\% \\ 
        + NBA+LapPE & \textbf{0.6982 \stdv{$\pm$ 0.0014}} & +19\% & \textbf{0.2466 \stdv{$\pm$ 0.0012}} & +28\% & \textbf{0.3969 \stdv{$\pm$ 0.0027}} & +38\% \\
        \bottomrule
    \end{tabular} %
    }
    
    \label{tab:lrgb-improve}
\end{table*}

In contrast, the non-backtracking matrix exhibits several advantageous properties, even for constant-degree cases. In \citet{stephan2022non}, the non-backtracking matrix demonstrates a spectral separation property and establishes the presence of an eigenvector containing information about vertex community assignments, when the average degree only satisfies $\omega(1)$ and $n^{o(1)}$. Furthermore, \cite{bordenave2015non} have demonstrated that by inspecting the eigenvalues of the non-backtracking matrix, it is possible to discern whether a graph originates from the ER model or the SBM, even when the graph's average degree remains constant. This capability enhances NBA-GNN's performance in both node and graph classification tasks, especially in sparse settings. These lines of reasoning lead to the formulation of the following propositions.

\begin{proposition}
\textbf{(Informal)} Assume the average degree in the stochastic block model satisfies the conditions of being at least $\omega(1)$ and $n^{o(1)}$. In such a scenario, NBA-GNN can map from graph $\mathcal{G}$ to node labels.
\label{thm:exp}
\end{proposition}

\begin{proposition}
\textbf{(Informal)} Suppose we have a pair of graphs with a constant average degree, one generated from the stochastic block model and the other from the Erdős–Rényi model. In this scenario, NBA-GNN is capable of distinguishing between them.
\label{thm:exp2}
\end{proposition}

\cref{thm:exp} suggests that even if a given graph is too sparse to extract node class information using a GNN with the adjacency matrix, NBA-GNN can still successfully classify the nodes with a probability approaching 1. Similarly, \cref{thm:exp2} extends this argument to the graph classification problem. The rationale behind these valuable properties of NBA-GNNs in sparse scenarios lies in the fact that the non-backtracking matrix $B^k$ exhibits similarity to the $k$-hop adjacency matrix, while $A^k$ is mainly influenced by high-degree vertices. This enables NBA-GNNs to extract valuable information from the spectrum of the non-backtracking matrix, aiding in the recovery of the hidden structure of the graph. For these reasons, NBA-GNNs would outperform traditional GNNs in both node and graph classification tasks, particularly in sparse graph environments. 
These propositions integrate prior work on the non-backtracking matrix into GNNs, contributing to a deeper understanding of the expressive power within GNN structures. Such an approach has the potential to advance the field by introducing new perspectives and methodologies within the GNN framework. For an in-depth exploration of this argument, please refer to \cref{appx:expressivity}.
\section{Experiment}
\label{sec:exp}

In this section, we assess the effectiveness of NBA-GNNs across multiple benchmarks on graph classification, graph regression, and node classification tasks$^1$. Additionally, we conduct a detailed comparison, including a complexity analysis, with existing methods aiming to reduce redundancy (see \cref{appx:compare-related-work}). Detailed experimental information is also provided in \cref{appx:exp-details}.
{\let\thefootnote\relax\footnote{{$^1$ The code is available at \url{https://github.com/seonghyun26/nba-gnn}}}}

\subsection{Long-Range Graph Benchmark}
The long-range graph benchmark \citep[LRGB]{dwivedi2022long} considers a set of tasks that require learning long-range interactions. We validate our method using three datasets from the LRGB benchmark: $\dsfunc$ (graph classification), $\dsstruct$ (graph regression), and $\dsvocsp$ (node classification). We adopt performance scores from \citet{dwivedi2022long} for GNNs and from each baseline paper: subgraph based GNNs \citep{abu2019mixhop, michel2023path, giusti2023cin++}, graph Transformers~\citep{kreuzer2021rethinking, rampavsek2022recipe, shirzad2023exphormer, he2023generalization}, graph rewiring methods~\citep{gasteiger2019diffusion, gutteridge2023drew}, and state space model \citep{wang2024graph}. For NBA-GNNs and NBA-GNNs with begrudgingly backtracking, we report the one with better performance. Furthermore, LapPE, i.e., Laplacian positional encoding \citep{dwivedi2020benchmarking}, is applied as it enhances the performance of NBA-GNNs in common cases.

\begin{table*}[t]
    \centering
    \caption{Evaluation of NBA-GNN on LRGB. The \textcolor{first}{\textbf{first-}}, \textcolor{second}{\textbf{second-}} and \textcolor{third}{\textbf{third-}}best results are colored. Scores within a standard deviation of one another is considered equal. Non-reported values are denoted by -.}
    \resizebox{\textwidth}{!}{%
    \begin{tabular}{llrrrr}
    \toprule
        \multirow{2}{*}{\textbf{Method}} & \multirow{2}{*}{\textbf{Model}} & \multicolumn{1}{r}{$\dsfunc$} & \multicolumn{1}{r}{$\dsstruct$} & \multicolumn{1}{r}{$\dsvocsp$} \\
        & & \multicolumn{1}{r}{AP $\uparrow$} & \multicolumn{1}{r}{MAE $\downarrow$} & \multicolumn{1}{r}{F1 $\uparrow$} \\
        \midrule
        \multirow{6}{*}{GNNs} & GCN & 0.5930  \stdv{$\pm$ 0.0023} & 0.3496  \stdv{$\pm$ 0.0013} & 0.1268  \stdv{$\pm$ 0.0060} \\
        & GCN+LapPE & 0.6213 \stdv{$\pm$ 0.0060} & 0.3250  \stdv{$\pm$ 0.0180} & 0.1370 \stdv{$\pm$ 0.0077} \\
        & GIN & 0.5498  \stdv{$\pm$ 0.0079} & 0.3547  \stdv{$\pm$ 0.0045} & 0.1265  \stdv{$\pm$ 0.0076} \\
        & GIN+LapPE & 0.5877 \stdv{$\pm$ 0.0044} & 0.3369 \stdv{$\pm$ 0.0026} & 0.1302  \stdv{$\pm$ 0.0105} \\
        & GatedGCN & 0.5864  \stdv{$\pm$ 0.0077} & 0.3420  \stdv{$\pm$ 0.0013} & 0.2873  \stdv{$\pm$ 0.0219} \\
        & GatedGCN+LapPE & 0.6069  \stdv{$\pm$ 0.0035} & 0.3357  \stdv{$\pm$ 0.0006} & 0.2860  \stdv{$\pm$ 0.0085} \\
        \midrule
        \multirow{4}{*}{Subgraph GNNs} & MixHop-GCN & 0.6592 \stdv{$\pm$ 0.0036} & 0.2921 \stdv{$\pm$ 0.0023} & 0.2506 \stdv{$\pm$ 0.0133} \\
        & MixHop-GCN+LapPE & 0.6843 \stdv{$\pm$ 0.0049} & 0.2614 \stdv{$\pm$ 0.0023} & 0.2218 \stdv{$\pm$ 0.0174} \\
        & PathNN & 0.6816 \stdv{$\pm$ 0.0026} & 0.2545 \stdv{$\pm$ 0.0032} & - \\
        & CIN++ & 0.6569 \stdv{$\pm$ 0.0117} & 0.2523 \stdv{$\pm$ 0.0013} & - \\
        \midrule
        \multirow{5}{*}{Transformers} & Transformer+LapPE & 0.6326 \stdv{$\pm$ 0.0126} & 0.2529 \stdv{$\pm$ 0.0016} & 0.2694 \stdv{$\pm$ 0.0098} \\
        & GraphGPS+LapPE & 0.6535 \stdv{$\pm$ 0.0041} & 0.2500 \stdv{$\pm$ 0.0005} & 0.3748 \stdv{$\pm$ 0.0109}  \\
        & SAN+LapPE & 0.6384 \stdv{$\pm$ 0.0121} & 0.2683 \stdv{$\pm$ 0.0043} & 0.3230 \stdv{$\pm$ 0.0039} \\
        & Exphormer & 0.6527 \stdv{$\pm$ 0.0043} & 0.2481 \stdv{$\pm$ 0.0007} & \textcolor{second}{\textbf{0.3966 \stdv{$\pm$ 0.0027}}} \\
        & Graph MLP-Mixer/ViT & 0.6970  \stdv{$\pm$ 0.0080} & \textcolor{second}{\textbf{0.2449 \stdv{$\pm$ 0.0016}}} & - \\
        \midrule
        \multirow{5}{*}{Graph Rewiring} & DIGL+MPNN & 0.6469 \stdv{$\pm$ 0.0019} & 0.3173 \stdv{$\pm$ 0.0007} & 0.2824 \stdv{$\pm$ 0.0039} \\
        & DIGL+MPNN+LapPE & 0.6830 \stdv{$\pm$ 0.0026} & 0.2616 \stdv{$\pm$ 0.0018} & 0.2921 \stdv{$\pm$ 0.0038} \\
        & DRew-GCN+LapPE & \textcolor{second}{\textbf{0.7150 \stdv{$\pm$ 0.0044}}} & 0.2536 \stdv{$\pm$ 0.0015} & 0.1851 \stdv{$\pm$ 0.0092} \\
        & DRew-GIN+LapPE & \textcolor{second}{\textbf{0.7126 \stdv{$\pm$ 0.0045}}} & 0.2606 \stdv{$\pm$ 0.0014} & 0.2692 \stdv{$\pm$ 0.0059} \\
        & DRew-GatedGCN+LapPE & 0.6977 \stdv{$\pm$ 0.0026} & 0.2539 \stdv{$\pm$ 0.0007} & 0.3314 \stdv{$\pm$ 0.0024} \\
        \midrule
        \multirow{1}{*}{State Space Models} & Graph-Mamba & 0.6739 \stdv{$\pm$ 0.0087} & \textcolor{third}{\textbf{0.2478 \stdv{$\pm$ 0.0016}}} & \textcolor{first}{\textbf{0.4191 \stdv{$\pm$ 0.0126}}}  \\
        \midrule
        \multirow{6}{*}{\textbf{NBA-GNNs (Ours)}} & NBA-GCN & 0.6951 \stdv{$\pm$ 0.0024} & 0.2656 \stdv{$\pm$ 0.0009} & 0.2537 \stdv{$\pm$ 0.0054}  \\
        & NBA-GCN+LapPE & \textcolor{first}{\textbf{0.7207 \stdv{$\pm$ 0.0028}}} & \textcolor{third}{\textbf{0.2472 \stdv{$\pm$ 0.0008}}} & 0.3005 \stdv{$\pm$ 0.0010} \\
        & NBA-GIN & 0.6961 \stdv{$\pm$ 0.0045} & 0.2775 \stdv{$\pm$ 0.0057} & 0.3040 \stdv{$\pm$ 0.0119}  \\
        & NBA-GIN+LapPE & \textcolor{third}{\textbf{0.7071 \stdv{$\pm$ 0.0067}}} & \textcolor{first}{\textbf{0.2424 \stdv{$\pm$ 0.0010}}} & 0.3223 \stdv{$\pm$ 0.0063} \\
        & NBA-GatedGCN & 0.6429 \stdv{$\pm$ 0.0062} & 0.2539 \stdv{$\pm$ 0.0011} & \textcolor{third}{\textbf{0.3910 \stdv{$\pm$ 0.0010}}} &   \\
        & NBA-GatedGCN+LapPE & 0.6982 \stdv{$\pm$ 0.0014} & \textcolor{third}{\textbf{0.2466 \stdv{$\pm$ 0.0012}}} & \textcolor{second}{\textbf{0.3969 \stdv{$\pm$ 0.0027}}} & \\
        \bottomrule    
    \end{tabular}%
    }
    \label{tab:lrgb-full}
    \vspace{-.06in}
\end{table*}

As one can see in \cref{tab:lrgb-improve}, NBA-GNNs show improvement regardless of the combined backbone GNNs, i.e., GCN \citep{kipf2016semi}, GIN \citep{xu2018how}, and GatedGCN \citep{bresson2017residual}, aligning with our results in \cref{thm:sens-bound}. Specifically, NBA-GCN outperforms GatedGCN, (i) confirming that simply updating both node and edge features does not lead to performance improvement and (ii) showing that non-backtracking is a key component for solving long-range interaction tasks. Furthermore, when compared to a variety of recent baselines in \cref{tab:lrgb-full}, at least one NBA-GNN shows competitive performance with the best baseline, except for the state space model for $\dsvocsp$. It is also noteworthy that the improvement of NBA-GNNs is higher in dense graphs, where $\dsvocsp$ has an average degree of 8 while $\dsfunc$ and $\dsstruct$ have an average degree of 2.

\begin{table*}[t]
    \centering
    \caption{Comparison of GNNs and their NBA-GNN counterpart on transductive node classification tasks for citation networks and heterophilic datasets, with and without Laplacian positional encoding (LapPE). We mark the best numbers in bold.}
    \resizebox{\textwidth}{!}{%
    \begin{tabular}{lrrrrrr}
        \toprule
        \multicolumn{1}{l}{\textbf{Model}} & \multicolumn{1}{r}{$\mathtt{Cora}$} & \multicolumn{1}{r}{$\mathtt{CiteSeer}$} & \multicolumn{1}{r}{$\mathtt{PubMed}$} & \multicolumn{1}{r}{$\mathtt{Texas}$} & \multicolumn{1}{r}{$\mathtt{Wisconsin}$} & \multicolumn{1}{r}{$\mathtt{Cornell}$} \\
        \midrule
        GCN & 0.8658\stdv{$\pm$0.0060} & 0.7532\stdv{$\pm$0.0134} & 0.8825\stdv{$\pm$0.0042} & 0.6162\stdv{$\pm$0.0634} & 0.6059\stdv{$\pm$0.0438} & 0.5946\stdv{$\pm$0.0662}\\
        + LapPE & 0.8592\stdv{$\pm$0.0083} & 0.7572\stdv{$\pm$0.0132} & 0.8817\stdv{$\pm$0.0040} & 0.6216\stdv{$\pm$0.0584} & 0.6000\stdv{$\pm$0.0600} & 0.5703\stdv{$\pm$0.0547} \\
        + NBA & \textbf{0.8722\stdv{$\pm$0.0095}} & 0.7585\stdv{$\pm$0.0175} & 0.8826\stdv{$\pm$0.0044} & \textbf{0.7108\stdv{$\pm$0.0796}} & \textbf{0.7471\stdv{$\pm$0.0386}} & 0.6108\stdv{$\pm$0.0614}\\
        + NBA+LapPE & 0.8720\stdv{$\pm$0.0129} & \textbf{0.7609\stdv{$\pm$0.0186}} & \textbf{0.8827\stdv{$\pm$0.0048}} & 0.6811\stdv{$\pm$0.0595} & \textbf{0.7471\stdv{$\pm$0.0466}} & \textbf{0.6378\stdv{$\pm$0.0317}}\\
        \midrule
        GraphSAGE & 0.8632\stdv{$\pm$0.0158} & 0.7559\stdv{$\pm$0.0161} & 0.8864\stdv{$\pm$0.0030} & 0.7108\stdv{$\pm$0.0556} & 0.7706\stdv{$\pm$0.0403} & 0.6027\stdv{$\pm$0.0625} \\ 
        + LapPE & 0.8700\stdv{$\pm$0.0117} & 0.7608\stdv{$\pm$0.0144} & \textbf{0.8895\stdv{$\pm$0.0047}} & 0.7162\stdv{$\pm$0.0653} & 0.7647\stdv{$\pm$0.0453} & 0.6189\stdv{$\pm$0.0484} \\
        + NBA & \textbf{0.8702\stdv{$\pm$0.0083}} & 0.7586\stdv{$\pm$0.0213} & 0.8871\stdv{$\pm$0.0044} & 0.7270\stdv{$\pm$0.0905} & \textbf{0.7765\stdv{$\pm$0.0508}} &  \textbf{0.6459\stdv{$\pm$0.0691}}\\ 
        + NBA+LapPE & 0.8650\stdv{$\pm$0.0120} & \textbf{0.7621\stdv{$\pm$0.0172}} & 0.8870\stdv{$\pm$0.0037} & \textbf{0.7486\stdv{$\pm$0.0612}} &0.7647\stdv{$\pm$0.0531} & 0.6378\stdv{$\pm$0.0544}\\
        \midrule
        GAT & 0.8694\stdv{$\pm$0.0119} & 0.7463\stdv{$\pm$0.0159} & 0.8787\stdv{$\pm$0.0046} & 0.6054\stdv{$\pm$0.0386} & 0.6000\stdv{$\pm$0.0491} & 0.4757\stdv{$\pm$0.0614}\\     
        + LapPE & 0.8686\stdv{$\pm$0.0152} & 0.7512\stdv{$\pm$0.0154} & 0.8775\stdv{$\pm$0.0045} & 0.6135\stdv{$\pm$0.0404} & 0.6294\stdv{$\pm$0.0448} & 0.5108\stdv{$\pm$0.0769} \\
        + NBA & \textbf{0.8722\stdv{$\pm$0.0120}} & 0.7549\stdv{$\pm$0.0171} & \textbf{0.8829\stdv{$\pm$0.0043}} & 0.6622\stdv{$\pm$0.0514} & 0.7059\stdv{$\pm$0.0562}	& \textbf{0.5838\stdv{$\pm$0.0558}}\\      
        + NBA+LapPE & 0.8692\stdv{$\pm$0.0098}	& \textbf{0.7561\stdv{$\pm$0.0175}} & 0.8822\stdv{$\pm$0.0047} & \textbf{0.6730\stdv{$\pm$0.0348}} & \textbf{0.7314\stdv{$\pm$0.0531}} & 0.5784\stdv{$\pm$0.0640}\\ 
        \midrule
        ChebNet & 0.8523\stdv{$\pm$0.0110} & 0.7399\stdv{$\pm$0.0160} & 0.8718\stdv{$\pm$0.0029} & 0.6811\stdv{$\pm$0.0554} & 0.7098\stdv{$\pm$0.0322} & 0.6473\stdv{$\pm$0.0520}\\
        + LapPE & 0.8531\stdv{$\pm$0.0139} & 0.7396\stdv{$\pm$0.0241} & 0.8701\stdv{$\pm$0.0047} & 0.6324\stdv{$\pm$0.0308} & 0.6941\stdv{$\pm$0.0905} & 0.6527\stdv{$\pm$0.0506}\\
        + NBA & \textbf{0.8823\stdv{$\pm$0.0159}} & 0.7379\stdv{$\pm$0.0099} & \textbf{0.8832\stdv{$\pm$0.0062}} & \textbf{0.7568\stdv{$\pm$0.0468}} & \textbf{0.7490\stdv{$\pm$0.0671}} & \textbf{0.6973\stdv{$\pm$0.0483}}\\
        + NBA+LapPE & 0.8795\stdv{$\pm$0.0217} & \textbf{0.7421\stdv{$\pm$0.0137}} & 0.8821\stdv{$\pm$0.0076} & 0.7243\stdv{$\pm$0.0520} & 0.7451\stdv{$\pm$0.0808} & 0.6757\stdv{$\pm$0.0634}\\
        \bottomrule
    \end{tabular}%
    }
    \label{tab:node-task}
\end{table*}

\subsection{Transductive Node Classification Tasks}
To validate the effectiveness of non-backtracking in transductive node classification tasks, we conduct experiments on three citation networks (Cora, CiteSeer, and Pubmed) \citep{sen2008collective} and three heterophilic datasets (Texas, Wisconsin, and Cornell) \citep{pei2019geom}. We use three conventional GNN architectures - GCN \citep{kipf2016semi}, GraphSAGE \citep{hamilton2017inductive}, and GAT \citep{velivckovic2018graph} - and one spectral GNN architecture, ChebNet \citep{defferrard2016convolutional}, as the backbone of NBA-GNN in \cref{tab:node-task}. The results indicate that the non-backtracking update improves the performance of all GNN variants. Furthermore, \cref{tab:node-task} shows significant enhancements in heterophilic datasets. Given that related nodes in heterophilic graphs are often widely separated \citep{zheng2022graph}, the ability of NBA-GNN to alleviate over-squashing plays a vital role in classifying nodes in such scenarios.

\subsection{Ablation Studies}\label{subsec:abl}
In this section, we conduct ablation studies to empirically verify our framework. For simplicity, we use BA for backtracking GNNs and BG for begrudgingly backtracking GNNs. All experiments are averaged over 3 seeds, and hyper-parameters for GCN from \citet{tonshoff2023did} are used for a fair comparison.

\paragraph{Non-backtracking vs. Backtracking.}
We first verify whether the performance improvements indeed stem from the non-backtracking updates. To this end, we compare our NBA-GNN with a backtracking variant, coined BA-GNN. To be clear, BA-GNN allows backtracking update prohibited in NBA-GNN, i.e., using $h_{i\rightarrow j}^{(\ell)}$ to update $h_{j\rightarrow i}^{(\ell+1)}$ in \cref{eq:nba-gnn}. From \cref{fig:abl-nba-ba-vocsp,fig:abl-nba-ba-func}, NBA-GCN consistently outperforms the BA-GCN and GCN regardless of the number of layers, aligning with the results of \cref{thm:sens-bound}. Intriguingly, one can also observe that BA-GCN outperforms the na\"ive backbone, i.e., GCN, consistently.
\begin{figure}[t]
    \begin{subfigure}[b]{0.32\linewidth}
        \centering
        \includegraphics[height=3.4cm]{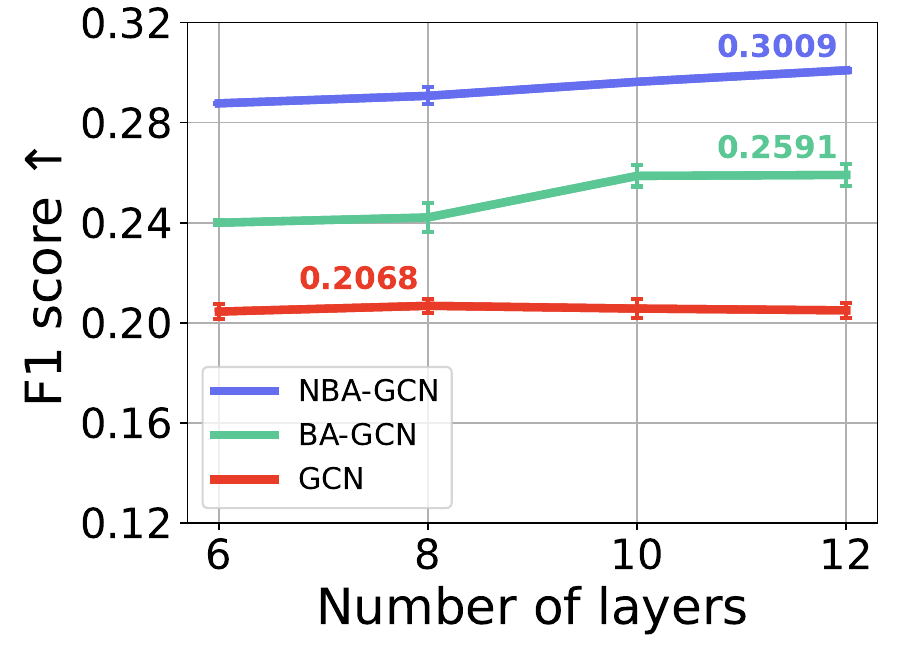}
        \caption{F1 for changes in number of layers in $\dsvocsp$}
        \label{fig:abl-nba-ba-vocsp}
    \end{subfigure}
    \hfill
    \begin{subfigure}[b]{0.32\linewidth}
        \centering
        \includegraphics[height=3.4cm]{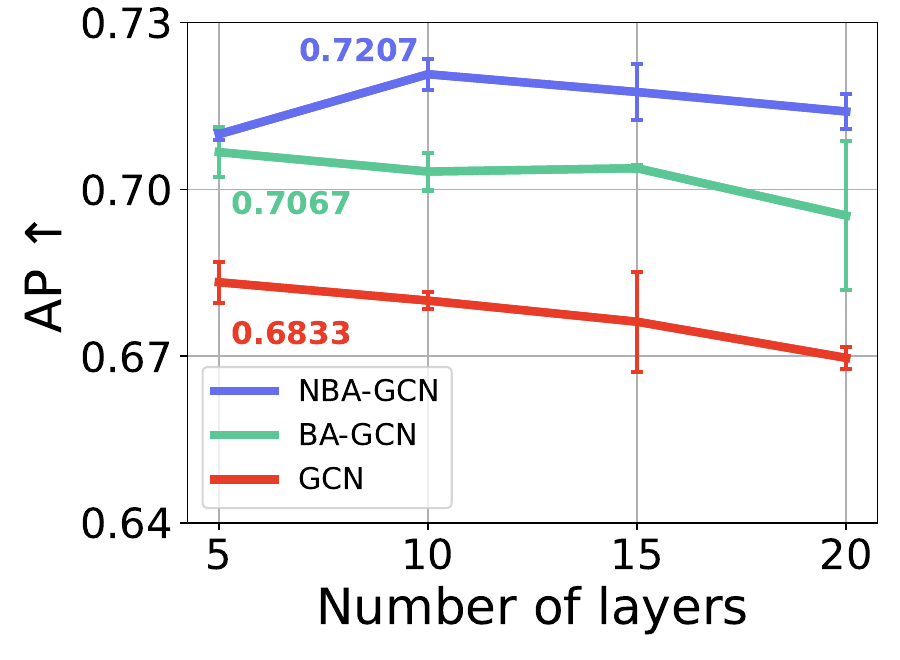}
        \caption{AP for changes in number of layers in $\dsfunc$}
        \label{fig:abl-nba-ba-func}
    \end{subfigure}
    \hfill
    \begin{subfigure}[b]{0.32\linewidth}
        \centering
        \resizebox*{!}{3.3cm}{%
        \begin{tabular}{lccc}
            \toprule
            \multirow{2}{*}{\textbf{Model}} & \multirow{2}{*}{BG} & AP & MAE \\
            &  & $\uparrow$ & $\downarrow$ \\
            \midrule
            \multirow{2}{*}{GCN} & \xmark & 0.7015 & \textbf{0.2472} \\
            & \cmark & \textbf{0.7207} & 0.2547 \\
            \midrule
            \multirow{2}{*}{GIN} & \xmark & 0.6825 & \textbf{0.2424}\\
            & \cmark & \textbf{0.7071} & 0.2479 \\
            \midrule
            \multirow{2}{*}{GatedGCN} & \xmark & 0.6710 & \textbf{0.2466}\\
            & \cmark & \textbf{0.6982} & 0.2489 \\
            \bottomrule
        \end{tabular}%
        }
        \caption{Performance of begrudgingly, non-backtracking in sparse graphs}
        \label{fig:abl-bg-nba-sparse}
    \end{subfigure}
    \caption{Ablation studies on the components of NBA-GNN.}
\end{figure}
\begin{table*}[t]
    \centering
    \caption{Comparison of NBA-GNN architectures and edge representation updating GNN architectures on transductive node classification tasks. We mark the best numbers in bold.}
    \label{tab:abl-edge}
    \resizebox{\textwidth}{!}{%
    \begin{tabular}{lrrrrrr}
        \toprule
        \multicolumn{1}{l}{\textbf{Model}} & \multicolumn{1}{r}{$\mathtt{Cora}$} & \multicolumn{1}{r}{$\mathtt{CiteSeer}$} & \multicolumn{1}{r}{$\mathtt{PubMed}$} & \multicolumn{1}{r}{$\mathtt{Texas}$} & \multicolumn{1}{r}{$\mathtt{Wisconsin}$} & \multicolumn{1}{r}{$\mathtt{Cornell}$} \\
        \midrule
        NBA-GCN & 0.8722\stdv{$\pm$0.0095} & 0.7585\stdv{$\pm$0.0175} & 0.8826\stdv{$\pm$0.0044} & 0.7108\stdv{$\pm$0.0796} & 0.7471\stdv{$\pm$0.0386} & 0.6108\stdv{$\pm$0.0614}\\
        NBA-GraphSAGE & 0.8702\stdv{$\pm$0.0083} & \textbf{0.7586\stdv{$\pm$0.0213}} & \textbf{0.8871\stdv{$\pm$0.0044}} & \textbf{0.7270\stdv{$\pm$0.0905}} & \textbf{0.7765\stdv{$\pm$0.0508}} &  \textbf{0.6459\stdv{$\pm$0.0691}}\\
        NBA-GAT & 0.8722\stdv{$\pm$0.0120} & 0.7549\stdv{$\pm$0.0171} & 0.8829\stdv{$\pm$0.0043} & 0.6622\stdv{$\pm$0.0514} & 0.7059\stdv{$\pm$0.0562}	& 0.5838\stdv{$\pm$0.0558}\\
        \midrule
        GatedGCN & 0.8477\stdv{$\pm$0.0156} & 0.7325\stdv{$\pm$0.0192} & 0.8671\stdv{$\pm$0.0060} & 0.6108\stdv{$\pm$0.0652} & 0.5824\stdv{$\pm$0.0641} & 0.5216\stdv{$\pm$0.0987}\\
        EGNN & \textbf{0.8769\stdv{$\pm$0.0125}} & 0.7567\stdv{$\pm$0.0221} & 0.8769\stdv{$\pm$0.0028} & 0.6595\stdv{$\pm$0.0527} & 0.6784\stdv{$\pm$0.0407} & 0.5946\stdv{$\pm$0.0573}\\
        CensNet & 0.8648\stdv{$\pm$0.0138} & 0.7516\stdv{$\pm$0.0162} & 0.8753\stdv{$\pm$0.0076} & 0.6405\stdv{$\pm$0.0510} & 0.6608\stdv{$\pm$0.0463} & 0.6162\stdv{$\pm$0.0707}\\
        \bottomrule
    \end{tabular}%
    }
\end{table*}

\paragraph{Begrudgingly Backtracking Updates in Sparse Graphs.} Additionally in \cref{fig:abl-bg-nba-sparse}, we investigate the effect of begrudgingly backtracking in sparse graphs, i.e., $\dsfunc$,  and $\dsstruct$. One can see that begrudgingly backtracking is effective in $\dsfunc$, and shows similar performance in $\dsstruct$ (Note that the $\dsvocsp$ does not have a vertex with degree one).

\paragraph{Non-backtracking \& Edge Features Updates.} From another point of view, NBA-GNN can be interpreted as an architecture only updating edge-wise features using backbone GNN layers. Therefore, we conduct additional experiments to validate that the performance improvement truly comes from non-backtracking rather than updating edge features in \cref{tab:node-task}. We consider three GNN architectures that also update edge features: GatedGCN \citep{bresson2017residual}, EGNN \citep{gong2019exploiting}, and CensNet \citep{jiang2020co}. As seen in \cref{tab:abl-edge}, NBA-GNN outperforms these models on most datasets, showing the effectiveness of non-backtracking over edge feature updates. It is noteworthy that NBA-GNN only updates edge features derived from initial node features, while EGNN and CensNet update both node and edge features from initial node and edge features. We report detailed implementation in \cref{subsubsec:baseline_impl}. 

\begin{table*}[t]
    \centering
    \caption{Training time and memory usage on the large datasets for the transductive node classification task.}
    \label{tab:node-memory}
    \resizebox{\textwidth}{!}{%
        \begin{tabular}{l|lrrr|lrrr}
            \toprule
            \textbf{Datasets} & \textbf{Model} & \textbf{Time} (s) & \# \textbf{Param.} (K) & \textbf{Memory} (MiB) & \textbf{Model} & \textbf{Time} (s) & \# \textbf{Param.} (K) & \textbf{Memory} (MiB)\\
            \midrule
            \multirow{3}{*}{$\mathtt{Cora}$} & GCN & 12.22 & 2317 & 24.01 & GCN+NBA &  38.95 & 3051 & 158.86 \\
            & SAGE &  10.74 & 3104 & 27.01 & SAGE+NBA &  47.07 & 3837 & 161.86 \\
            & GAT & 17.72 & 3130 & 27.21 & GAT+NBA & 58.20 & 3864 & 162.06  \\
            \midrule
            \multirow{3}{*}{$\mathtt{CiteSeer}$} & GCN & 15.74 & 3479 & 60.68 & GCN+NBA &  52.44 & 5375 & 373.36 \\
            & SAGE & 14.95 & 4265 & 63.68 & SAGE+NBA &  59.94 & 6161 & 376.36 \\
            & GAT &  19.68 & 4292 & 63.87 & GAT+NBA & 66.92 & 6188 & 376.56  \\
            \midrule
            \multirow{3}{*}{$\mathtt{PubMed}$} & GCN & 42.66 & 1838 & 47.9 & GCN+NBA & 242.66 & 2094 & 436.9  \\
            & SAGE & 56.10 & 2624 & 50.9 & SAGE+NBA & 320.92 & 2880 & 439.9  \\
            & GAT & 64.80 & 2650 & 51.0 & GAT+NBA & 416.85 & 2906 & 440.0  \\
            \bottomrule
        \end{tabular}}

\end{table*}

\subsection{Complexity analysis}

In this section, we analyze the space and time complexity of some baselines and NBA-GNN. All experiments were conducted on a single RTX 3090.
\paragraph{Space Complexity.}
NBA-GNNs generate messages for each edge considering directions and pass these messages in a non-backtracking matter. This process requires $2\vert\mathcal{E}\vert$ messages, and $(d_{avg}-1)\vert\mathcal{E}\vert$ connections among messages where $d_{avg}$ is the average degree of the graph. Although this may seem substantial, it has not been a bottleneck in practice and can be mitigated by adjusting the batch size. Moreover, this is a relatively less computation compared to DRew, which requires computation over $k$-hop neighbors (with $k$ being the number of layers). We report the memory usage for the transductive node classification task in \cref{tab:node-memory} and the LRGB task in \cref{tab:related-test-time} (DRew and PathNN could not fit into a single GPU due to memory overflow for $\dsvocsp$). NBA-GNN shows less memory usage compared to graph Transformer architectures, although it consumes more memory than MPNNs.
\paragraph{Time Complexity.}
We also report the average test time per epoch for LRGB in \cref{tab:related-test-time}. Every experiment has been conducted with the largest batch size that fits the GPU. NBA-GNN shows competitive time compared to other baselines, though it does suffer in graphs with a high average degree.
\begin{table}[t]
    \centering
    \caption{Average test time (s) per epoch and memory usage for $\dsfunc$, $\dsstruct$, $\dsvocsp$ dataset. Parenthesis refers to the batch size, and the best performance for each dataset is highlighted in bold.}
    \label{tab:related-test-time}
    \resizebox{\textwidth}{!}{%
        \begin{tabular}{ll|rrrrrr}
            \toprule
            \textbf{Metric} & \textbf{Dataset} & PathNN-SP & PathNN-SP+ & PathNN-AP & GraphGPS+LapPE & DRew-GCN+LapPE & NBA-GCN+LapPE \\
            \midrule
            \multirow{3}{*}{\textbf{Memory usage}} &
            $\dsfunc$ & OOM (128) & OOM (128) & OOM (128) & 89.70\% (128) & 46.82\% (128) & \textbf{18.86\% (128)} \\
            & $\dsstruct$ & OOM (128) & OOM (128) & OOM (128) & 88.08\% (128) & 46.82\% (128) & \textbf{16.98\% (128)}  \\
            & $\dsvocsp$ & N/A & N/A & N/A & \textbf{45.64\% (32)} & OOM (32) & 47.30\% (32) \\
            \midrule
            \multirow{3}{*}{\textbf{Test time (s) }} &
            $\dsfunc$ & 3.349 (64) & 5.156 (64) & 5.977 (64) & 0.639 (128) & 0.933 (128) & \textbf{0.541 (200)} \\
            & $\dsstruct$ & 2.120 (64) & 1.417 (64) & 1.372 (64) & 0.926 (128) & 1.053 (128) & \textbf{0.532 (128)} \\
            & $\dsvocsp$ & N/A & N/A & N/A & \textbf{1.307 (32)} & 6.468 (20) & 5.866 (30) \\
            \bottomrule
        \end{tabular}%
    }
    
    \vspace{0.5cm}
    \centering
    \caption{Comparison on theoretical and practical time complexity for preprocessing $\dsfunc$ dataset.}
    \label{tab:related-preprocessing}
    \begin{tabular}{lrrrr}
        \toprule
        \textbf{Model} & RFGNN & PathNN-SP & DRew & NBA-GNN \\
        \midrule
        Theoretical time complexity & $\vert \mathcal{V} \vert ! / \vert \mathcal{V}-k-1 \vert !$ & $\vert \mathcal{V} \vert * b^{k}$ & $\sum^{k}_{i=1} i * \vert \mathcal{E} \vert $ & $d_{avg} * \vert \mathcal{E} \vert$ \\
        Practical computation time (s) & N/A & 666 & 123 & \textbf{68} \\
        \bottomrule
    \end{tabular}
\end{table}

\paragraph{Preprocessing Time Complexity.}
Finally, we investigate the time complexity of preprocessing in RFGNN \citep{chen2022redundancy}, PathNN-SP \citep{michel2023path}, DRew \citep{gutteridge2023drew}, and NBA-GNN on the $\dsfunc$ dataset. To be specific, RFGNN requires $k$-depth non-redundant tree, PathNN-SP needs the single shortest path between nodes, and DRew demands the $k$-hop neighbors information.
NBA-GNN requires the computation of the non-backtracking edge adjacency, which can be computed in $\mathcal{O}(\vert\mathcal{E}\vert^{2})$, even $\mathcal{O}(d_{avg}\vert\mathcal{E}\vert)$ if the data is provided in the form of an adjacency list. We denote $b$ as the branching factor, $k$ as the number of layers, and $d_{avg}$ as the average degree of nodes in a graph. Inevitably, previous works are dependent on the number of layers, while NBA-GNN remains \textbf{irrelevant to the number of layers}. In \cref{tab:related-preprocessing}, one can see that NBA-GNN shows superiority in both theoretical time complexity and practical computation time (RFGNN codes were not reported by the authors).

\section{Conclusion}

We have introduced a message-passing framework applicable to any GNN architectures to alleviate over-squashing. As theoretically shown, NBA-GNNs mitigate over-squashing in terms of sensitivity and enhance their expressive power for both node and graph classification tasks on SBMs. Additionally, we have demonstrated that NBA-GNNs achieve competitive performance on the LRGB benchmark, and show improvements over conventional GNNs across transductive node classification tasks, even in heterophilic datasets.

\section*{Acknowledgements}
S. Park and S. Ahn were supported by Institute of Information \& communications Technology Planning \& Evaluation (IITP) grant funded by the Korea government (MSIT) (No. RS-2019-II191906, Artificial Intelligence Graduate School Program(POSTECH)), the National Research Foundation of Korea (NRF) grant funded by the Korea government (MSIT) (No. 2022R1C1C1013366), and Basic Science Research Program through the National Research Foundation of Korea (NRF) funded by the Ministry of Education(2022R1A6A1A0305295413).
N. Ryu and S. Yun were supported by Institute of Information \& communications Technology Planning \& Evaluation (IITP) grant funded by the Korea government (MSIT) (No.2019-0-00075, Artificial Intelligence Graduate School Program (KAIST), 10\%) and the Institute of Information \& communications Technology Planning \& Evaluation (IITP) grant funded by the Korea government (MSIT) (No. 2022-0-00871, Development of AI Autonomy and Knowledge Enhancement for AI Agent Collaboration, 90\%).

We thank Soo Yong Lee, Seongsu Kim, Hyosoon Jang, Yunhui Jang, Juwon Hwang, Hyomin Kim, for their valuable comments and suggestions in preparing the early version of the manuscript.

\bibliography{reference}
\bibliographystyle{tmlr}

\newpage
\appendix
\renewcommand{\contentsname}{Appendix}
\addtocontents{toc}{\protect\setcounter{tocdepth}{2}}
\tableofcontents

\newpage
\section{Comparison with Related Works}
\label{appx:compare-related-work}

This section delves into the detailed exploration of key related works, highlighting essential distinctions from our NBA-GNN. In addition, we present simple experiments to prove the superiority of our model.

\subsection{Line Graph Neural Networks}
Line Graph Neural Networks (LGNN) \citep{chen2017supervised} were the pioneers in applying the non-backtracking operator to GNNs. However, it does not resolve redundancy issues, as it employs both an adjacency matrix and non-backtracking matrix in a layer. In other words, our main contribution over LGNN is in consideration of the non-backtracking operator specifically for the message redundancy problem, in a end-to-end manner. Notably, the computational complexity of LGNN is acknowledged to be close to $\vert V \vert \operatorname{log}(\vert V \vert)$ by its authors.

\begin{table}[ht]
    \centering
    \caption{Comparison of LGNN and NBA-GCN on $\dsfunc$.}
    \begin{tabular}{lrrrr}
        \toprule
        \textbf{Model} & Training AP $\uparrow$ & Test AP $\uparrow$ & Test time per epoch (s) $\downarrow$ & GPU usage (\%) $\downarrow$\\
        \midrule
        LGNN & 0.4202 & 0.3778 & 87.761 & 97.10 \\ 
        NBA-GCN+LapPE & \textbf{0.9724} & \textbf{0.7207} & \textbf{0.541} & \textbf{51.18} \\
        \bottomrule
    \end{tabular}
    
    \label{tab:lgnn-exp}
\end{table}

In our experiments on $\dsfunc$, we compare the performance and complexity of LGNN and NBA-GCN+LapPE, as detailed in \cref{tab:lgnn-exp}. Given that LGNN was initially proposed using only node degrees for node features, we incorporated an additional node feature encoder, similar to our NBA-GNN setup. Specifically, we used a batch size of 64, hidden dimension of 24, and 4 layers for LGNN. Refer to \cref{appx:exp-lrgb} for the hyperparameters of NBA-GCN. The drawbacks of LGNN in terms of time and space complexity become evident when compared to NBA-GNN, while requiring space and time for computing both simple random walks and non-backtracking walks, but result in poor performance.

\subsection{Redundancy-free Graph Neural Network}
Redundancy-Free Graph Neural Network (RFGNN) \citep{chen2022redundancy} shares the common motivation with NBA-GNN, aiming to reduce redundant messages in the computation graph. RFGNN achieves this by constructing a tree for every node, coined Truncated ePath Tree (TPT). TPT ensures that there are no repeated nodes along the simple path from the root to the leaf, except for the root node which can appear twice in the path. This approach differs significantly from NBA-GNN, which eliminates redundancy by identifying non-backtracking edge adjacency.

In terms of complexity, RFGNN is known to have a space and time complexity of $\mathcal{O}\left(\frac{\vert V \vert !}{\vert V - t - 1 \vert !}\right)$, where $\vert V \vert$ is the number of nodes and $t$ is the number of GNN layers. In contrast, NBA-GNN achieves superior complexity with space complexity $\mathcal{O}(2\vert\mathcal{E}\vert)$ and time complexity $\mathcal{O}(d_{avg}\vert\mathcal{E}\vert)$, which remains \textbf{irrelevant to the number of layers}. The preprocessing process time complexity, involved in finding non-backtracking edge adjacency, is $\mathcal{O}(\vert\mathcal{E}\vert^{2})$. This scalability allows NBA-GNN to handle larger graph dataset compared to RFGNN. Furthermore, in theoretical aspects, our work distinguishes itself by providing the sensitivity upper bound of non-backtracking updates, rather than comparing the relative inference of paths.

\begin{table}[ht]
    \centering
    \caption{Average test accuracy of LGNN, NBA-GNN and BP on two sparse SBMs, where $n$ represents the number of vertices, $C$ is the number of classes, and $p$ and $q$ denote edge probabilities within and across communities, respectively.}
    \label{tab:compare_sbm}
    \begin{tabular}{lcrrr}
        \toprule
         \multirow{2}{*}{\textbf{Model}} & Parameters & \multirow{2}{*}{LGNN} & \multirow{2}{*}{NBA-GNN} & \multirow{2}{*}{BP}  \\
        & $(n,C,p,q)$\\
        \midrule
        (a) Binary assortative SBM & (400, 2, $20/n$, $10/n$) &0.4885 & 0.4900 & 0.5303 \\ 
        (b) 5-community dissociative SBM & (400, 5, 0, $18/n$) & 0.1821 & 0.1888 & 0.2869 \\
        \bottomrule
    \end{tabular}
\end{table}

\subsection{Stochastic Block Models}
We also conduct a comparative analysis for our method on two sparse Stochastic Block Models (SBMs) with distinct parameters for Belief propagation (BP), and LGNN: (a) a binary assortative SBM ($n = 400$, $C=2$, $p = 20/n$, $q = 10/n$), and (b) a $5$-community dissociative SBM ($n = 400$, $C=5$, $p = 0$, q = $18/n$). To be specific, $n$ denotes the number of vertices, $C$ denotes the number of classes, and $p$ and $q$ denote edge probabilities within and across communities, respectively. \cref{tab:compare_sbm} illustrates the average test accuracy across $100$ graphs. Notably, BP, known for achieving the information-theoretic threshold, exhibited the best performance, consistent with expectations. Additionally, NBA-GNN outperforms LGNN in both scenarios.

\newpage
\section{Proofs for Section \ref{subsec:motivation}}
\label{appx:access-time}
\newcommand*{\Tt}{\Tilde{\mathtt{t}}}
\newcommand*{\di}{d_{v_{i}}}
\newcommand*{\du}{d_u}

In this section, we present the findings discussed in \cref{subsec:motivation}. Similar to \cite{di2023over}, we concentrate on the relationship between over-squashing and access time of non-backtracking random walks. Our study establishes that the access time using a begrudgingly backtracking random walk (BBRW) is smaller than that of a simple random walk (SRW) between two nodes, in tree graphs. Also, it is noteworthy that the gap between these two access times increases as the length of the walk grows.

In the following, we will compare the access times between BBRW and SRW. First, we show that on the tree graph, the access time equals the sum of access times between neighboring nodes. Note that the access time between neighboring nodes, which is the cut-point, can be represented in terms of return time. The formulas for return time in \cite{fasino2023hitting} and \cref{lem:return-time-bbrw} allow us to derive the formulas for access time between neighboring nodes. Finally, we derive and compare the access time of BBRW and SRW, i.e., \cref{prop:nbrw}.

\BBRWAT*

\subsection{Preliminaries}
\subsubsection{Simple and Begrudgingly Random Walks}
A random walk on a graph $\mathcal{G} = (\mathcal{V}, \gE)$ is a sequence of $\mathcal{V}$-valued random variable $\rx_0, \rx_1, \rx_2 \dots $ where $\rx_{n+1}$ is chosen randomly from neighborhood of $\rx_{n}$. Different types of random walks have different probabilities for selecting neighboring nodes. 

Simple random walk (SRW) choose a next node $j$ uniformly from the neighbors of current node $i$:

\begin{equation*}
    P( \rx_{n+1}=j | \rx_{n}=i) = 
    \begin{cases}
        \frac{1}{d_i} & \text{if $(i, j) \in \gE$} \\
        0             & \text{otherwise} \\
    \end{cases}.
\end{equation*}

On the other hand, begrudgingly backtracking random walk (BBRW) tries to avoid the previous node when there is another option:
\begin{equation*}
    P( \rx_{n+2}=k | \rx_{n+1}=j, \rx_{n}=i) = 
    \begin{cases}
        \frac{1}{d_j - 1}   & \text{when $(j, k) \in \gE$, $k \ne i$, and $|\gN(j) \setminus \{i\}| \geq 1$} \\
        1                   & \text{when $(j, k) \in \gE$ and $|\gN(j) \setminus \{i\} | = 0$} \\
        0                   & \text{otherwise} \\
    \end{cases}.
\end{equation*}

\subsubsection{Access Time}
Consider a SRW starting at a node $i\in\mathcal{V}$. Let $T_i$ denote the time when a SRW first arrived at node $i$, $T_i \coloneqq \min\{n \geq 0 | \rx_{n} = i \}$, and $\overline{T}_i$ be the time when a random walk first arrived at node $i$ after the first step, $\overline{T}_i \coloneqq \min\{n > 0 | \rx_{n} = i \}$. With slight abuse of notation, we define access time $\mathtt{t}(i, j)$ from $i$ to $j$, access time $\mathtt{t}(i \rightarrow j, k)$ from $i$ to $k$ where $\rx_{1}=j$, and return time $\mathtt{t}(i;\mathcal{G})$ in a graph $\mathcal{G}$ as follows:
\begin{align*}
    \mathtt{t}(i, j) & \coloneqq \E [T_j | \rx_0 = i]  \\
    \mathtt{t}(i \rightarrow j, k) & \coloneqq \E [T_k | \rx_1 = j, \rx_0 = i] \\
    \mathtt{t}(i;\mathcal{G}) & \coloneqq \E [\overline{T}_i | \rx_0 = i]
\end{align*}
Similarly, we denote the access time from $i$ to $j$, access time from $i$ to $k$ with where $\rx_{1}=j$ and return time of BBRW as $\tilde{\mathtt{t}}(i, j), \tilde{\mathtt{t}}(i\rightarrow j, k)$ and $\tilde{\mathtt{t}}(i;\mathcal{G})$, respectively. Note that the first step of BBRW shows the same behavior as the SRW since there is no previous node on the first step. 

Finally, for a tree graph $\mathcal{G}$ and pair of nodes $i, j$ in \cref{prop:nbrw}, we denote the unique paths between $i$ and $j$ as $(v_0, \dots, v_{N})$ with $i = v_0$, $j = v_{N}$. We also let $N$ denote the distance between the nodes $i$ and $j$.

\subsection{Access Time of Simple Random Walks}
In this section, we derive the access time of simple random walks (SRW) on tree graphs between two nodes $i$ and $j$, i.e., $\mathtt{t}(i, j)$. To be specific, we show this in the following process.
\begin{enumerate}
    \item We decompose the access time for two nodes $i$ and $j$ into a summation of the access time of neighboring nodes in the path, i.e., $\mathtt{t}(v_{n}, v_{n+1})$ for $n \in 0, \cdots, N-1$ (\cref{pf:srw1,lem:acc-time-srw-decompose}).
    \item We evaluate the access time of neighboring nodes in the path, e.g., $\mathtt{t}(v_{n}, v_{n+1})$, using the number of edges in a graph (\cref{pf:srw2,lem:acc-time-srw-neighbor}).
    \item We formulate the access time of simple random walks (SRW) between two nodes $i$ and $j$, i.e., $\mathtt{t}(i, j)$ (\cref{pf:srw3,prop:acc-time-srw}).
\end{enumerate}

\subsubsection{Decomposition of Access Time}\label{pf:srw1}
First, we show that the access time is equal to the sum of access times between neighboring nodes. When a random walker travels from $v_0$ to $v_{N}$, it must pass through all nodes $v_{n}$ on the paths. Intuitively, We can consider the entire time taken as the summation of the time intervals between when the walker arrived at $v_{n}$ and when it arrived at $v_{n+1}$.

\begin{lemma} \label{lem:acc-time-srw-decompose}
    Given a tree $\mathcal{G}$ and path $(v_0, \dots, v_{N})$, 
    \begin{equation*}
        \mathtt{t}(v_0, v_{N}) = \sum_{n=0}^{N-1} \mathtt{t}(v_{n}, v_{n+1})\,.
    \end{equation*}
\end{lemma}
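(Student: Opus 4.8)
The plan is to prove $\mathtt{t}(v_0, v_N) = \sum_{n=0}^{N-1} \mathtt{t}(v_n, v_{n+1})$ by exploiting the fact that in a tree, the unique path from $v_0$ to $v_N$ forms a sequence of \emph{cut vertices}: any walk from $v_0$ that reaches $v_N$ must pass through $v_1$, then $v_2$, and so on, in order, because removing $v_n$ disconnects $v_0$ from $v_{n+1}, \dots, v_N$. The key structural observation is that to reach $v_{n+1}$ from $v_0$, the walk must first reach $v_n$, and conditioned on being at $v_n$ for the first time, the past is forgotten (Markov property), so the remaining expected time to reach $v_{n+1}$ is exactly $\mathtt{t}(v_n, v_{n+1})$.

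First I would set up the hitting times $T_{v_0} = 0 \le T_{v_1} \le T_{v_2} \le \cdots \le T_{v_N}$ under the law of the SRW started at $v_0$, and argue the monotonicity: since $v_n$ separates $v_0$ from $v_{n+1}$ in the tree $\mathcal{G}$, the event $\{T_{v_{n+1}} < T_{v_n}\}$ has probability zero, hence $T_{v_n} \le T_{v_{n+1}}$ almost surely. Then I would write the telescoping identity $T_{v_N} = \sum_{n=0}^{N-1} (T_{v_{n+1}} - T_{v_n})$ and take expectations. To evaluate $\E[T_{v_{n+1}} - T_{v_n} \mid \rx_0 = v_0]$, I would condition on $T_{v_n}$ and invoke the strong Markov property: at the stopping time $T_{v_n}$ the walk is at $v_n$, so $\E[T_{v_{n+1}} - T_{v_n} \mid \mathcal{F}_{T_{v_n}}] = \E[T_{v_{n+1}} \mid \rx_0 = v_n] = \mathtt{t}(v_n, v_{n+1})$, where the last equality again uses that $v_n$ separates $v_{n+1}$ from everything on the $v_0$-side, so starting the clock at $v_n$ gives precisely the neighbor-to-neighbor access time. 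Summing over $n$ yields the claim.

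The main obstacle I anticipate is handling the technical subtleties of the strong Markov property at a random time together with ensuring all the hitting times are almost surely finite (so the expectations and the telescoping are valid) — on a finite tree the SRW is recurrent so $\mathtt{t}(v_n, v_{n+1}) < \infty$, which must be noted; if $\mathcal{G}$ is infinite one would need a recurrence hypothesis or to interpret the identity in $[0,\infty]$. A secondary point worth making carefully is the separation argument itself: I would state explicitly that in a tree the removal of the single vertex $v_n$ leaves $v_0$ and $v_{n+1}$ in different connected components, which is what forces every $v_0 \to v_{n+1}$ trajectory to visit $v_n$ first, and this is the only place the tree structure is used. Once these are in place the proof is a short conditioning argument.
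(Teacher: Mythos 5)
Your proposal is correct and follows essentially the same route as the paper's proof: telescope $T_{v_N} = \sum_{n=0}^{N-1}(T_{v_{n+1}} - T_{v_n})$, take expectations, and invoke the (strong) Markov property at the first hitting time of each $v_n$ to identify $\E[T_{v_{n+1}} - T_{v_n} \mid \rx_0 = v_0]$ with $\mathtt{t}(v_n, v_{n+1})$. You are somewhat more explicit than the paper about why $T_{v_n} \le T_{v_{n+1}}$ almost surely (the cut-vertex separation argument) and about finiteness of the hitting times, which are reasonable points to spell out, but the underlying argument is the same.
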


\begin{proof}
    Given a unique path $(v_0, \dots, v_{N})$ between $v_{0}$ and $v_{N}$, any walk between $(v_{0}, v_{N})$ can be decomposed into a series of $N-1$ walks between $(v_{0}, v_{1}), (v_{1}, v_{2}), \ldots (v_{N-1}, v_{N})$ such that the walk between $(v_{n}, v_{n+1})$ does not contain a node $v_{n+1}$ except at the end point. The expected length of each walk is $t(v_{n}, v_{n+1})$.
    
    To be specific, consider the following decomposition:
    \begin{equation} \label{eq:acc-time-decompose}
       \mathtt{t}(v_0, v_{N}) = \E [T_{v_{N}}|\rx_0 = v_0] = \sum_{n=0}^{N-1} \E[T_{v_{n+1}} - T_{v_{n}} | \rx_0 = v_0]\,.
    \end{equation}
    From the Markov property of SRW:
    \begin{equation*}
        \E[T_{v_{n+1}} - T_{v_{n}} | \rx_0 = v_0] = \mathtt{t}(v_{n}, v_{n+1})\,.
    \end{equation*}
    Finally, we can formulate the access time as the following:
    \begin{equation*}
       \mathtt{t}(v_0, v_{N}) = \sum_{n=0}^{N-1} \E[T_{v_{n+1}} - T_{v_{n}} | \rx_0 = v_0]
                            = \sum_{n=0}^{N-1} \mathtt{t}(v_{n}, v_{n+1})\,.
    \end{equation*}
\end{proof}

\subsubsection{Access Time between Neighbors}\label{pf:srw2}
Now, we derive the formula for the access time between neighboring nodes. 
\begin{lemma}\label{lem:acc-time-srw-neighbor}
    Given a tree graph $\mathcal{G}$ and adjacent nodes $i, j$, the associated access time $\mathtt{t}(i,j)$ is defined as follows:
    \begin{equation*}
        \mathtt{t}(i, j) = 1 + 2|\gE(\mathcal{G}_i)|\,,
    \end{equation*}
    where $\gE(\mathcal{G})$ is edge set of graph $\mathcal{G}$ and $\mathcal{G}_i$ is the subtree produced by deleting edge $(i, j)$ and choosing connected component of $i$.
\end{lemma}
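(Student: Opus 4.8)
The essential structural observation is that $(i,j)$ is a cut edge of the tree $\mathcal{G}$: removing it splits $\mathcal{G}$ into $\mathcal{G}_i$ and $\mathcal{G}_j$, and $(i,j)$ is the \emph{only} edge joining these two components. Hence a simple random walk started at $i$ cannot reach $j$ without traversing $(i,j)$, and up to that moment it never leaves $\mathcal{G}_i$. The first step of the argument is therefore to note that $\mathtt{t}(i,j)$ is the same whether computed in $\mathcal{G}$ or in the smaller graph $H$ obtained from $\mathcal{G}_i$ by re-attaching $j$ as a leaf via the edge $(i,j)$: every vertex of $\mathcal{G}_i$ other than $i$ retains all of its neighbours in $H$, and $i$ keeps the same neighbour set $\{j\}\cup(\gN(i)\setminus\{j\})$, so the transition probabilities along any trajectory that has not yet hit $j$ are identical in the two graphs. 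Note $|\gE(H)| = |\gE(\mathcal{G}_i)| + 1$ and $\deg_H(j) = 1$.

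The plan is then to extract $\mathtt{t}(i,j)$ from the expected return time to $j$ in $H$. I would first invoke the standard fact that the simple random walk on a connected graph with $m$ edges has stationary distribution $\pi(v) = \deg(v)/(2m)$, so that (by Kac's return-time identity) the expected return time to any vertex $v$ equals $1/\pi(v) = 2m/\deg(v)$. Applied to $H$ at the leaf $j$ this gives $\mathtt{t}(j;H) = 2|\gE(H)| = 2|\gE(\mathcal{G}_i)| + 2$. Since $j$ is a leaf of $H$, a walk leaving $j$ deterministically steps to $i$, so a one-step conditioning of the return time yields $\mathtt{t}(j;H) = 1 + \mathtt{t}(i,j)$, where the access time on the right is computed in $H$ and hence, by the previous paragraph, equals the one in $\mathcal{G}$. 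Subtracting gives $\mathtt{t}(i,j) = 2|\gE(\mathcal{G}_i)| + 1$, as claimed.

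The only point requiring genuine care is the reduction in the first paragraph — spelling out precisely that passing from $\mathcal{G}$ to $H$ does not alter the law of the walk before it hits $j$ (this uses that $\mathcal{G}_i$ is a full connected component of $\mathcal{G} - (i,j)$). Everything else is a routine application of the $2m/\deg$ return-time formula together with first-step analysis. A self-contained alternative, avoiding the return-time formula altogether, would be to prove the identity by induction on $|\mathcal{V}(\mathcal{G}_i)|$ via first-step analysis at $i$ (relating $\mathtt{t}(i,j)$ to the access times $\mathtt{t}(i',i)$ over the neighbours $i'$ of $i$ inside $\mathcal{G}_i$), but the return-time route is considerably shorter. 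Combined with \cref{lem:acc-time-srw-decompose}, this lemma immediately produces the closed form for $\mathtt{t}(v_0,v_N)$ along a tree path used in \cref{prop:acc-time-srw}, which in turn feeds the comparison of access times in \cref{prop:nbrw}.
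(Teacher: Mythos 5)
Your proof is correct and reaches the same formula, but by a genuinely different route. The paper carries out its analysis at the vertex $i$: it conditions on the walk repeatedly returning to $i$, treats each visit to $i$ as a Bernoulli trial with success probability $1/d_i$ of crossing to $j$, invokes a geometric-distribution argument for the expected number of failures $(d_i-1)$, and charges each failure the return time of $i$ \emph{inside} $\mathcal{G}_i$, which it then evaluates as $2|\gE(\mathcal{G}_i)|/(d_i-1)$ using the $2m/\deg$ return-time formula. You instead carry out the analysis at $j$: you replace $\mathcal{G}$ by the smaller graph $H=\mathcal{G}_i\cup\{(i,j)\}$ (justified by the cut-edge observation, which both arguments implicitly rely on), apply the $2m/\deg$ formula to the leaf $j$ of $H$ to get $\mathtt{t}(j;H)=2|\gE(\mathcal{G}_i)|+2$, and then peel off one deterministic step from $j$ to $i$. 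The upshot is the same identity, but your version eliminates the geometric-trial/Wald-style bookkeeping entirely: applying the return-time formula at a leaf makes the first-step decomposition deterministic, so the only probabilistic ingredient is the Kac/stationary-distribution identity itself. The paper's route, while a bit more work, has the side benefit of explicitly producing the intermediate identity $\mathtt{t}(i,j)=1+(d_i-1)\mathtt{t}(i;\mathcal{G}_i)$, which it reuses (in modified form) when treating begrudgingly backtracking walks in the following subsection; your shortcut does not directly generalize there, since the BBRW is not reversible and Kac's identity is not available in the same clean form.
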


\begin{proof}
    Given a random walk from $i$ to $j$ that only contains $j$ once, every time the walk lands at the node $i$, the next step can be categorized into two scenarios:
        \begin{enumerate}
            \item The walk transitions from node $i$ to $j$ based on transitioning with respect to the edge $(i, j) \in \gE$ with probability $\frac{1}{d_i}$. The walk terminates.
            \item The walk fails to reach $j$ and continues the walk in the subtree $\mathcal{G}_{i}$ until arriving at the node $i$ again. Note that the walk cannot arrive at node $j$ without arriving at node $i$ in prior. In other words, the walk continues for the return time of $i$ concerning the graph $\mathcal{G}_{i}$.
        \end{enumerate}
    The two scenarios imply that every time the walk arrives at node $i$, the walk terminates with probability $\frac{1}{d_i}$. Then the number of trials follows the geometric distribution and consequently, the average number of trials is $d_{i}$. In other words, the walk falls into the scenario of type 2, for $d_{i}-1$ times on average. 
    We have to traverse at least one edge $(i, j)$. The expected total penalty is the product of the average failure penalty and the average number of failures. Thus, 
        \begin{equation}\label{eq:tryandreturn}
            \mathtt{t}(i, j) = 1 + (d_i - 1)\mathtt{t}(i; \gG_i)\,,
        \end{equation}
    where $\mathtt{t}(i; \gG_i)$ is the return time of $i$ with respect to the subgraph $\mathcal{G}_{i}$. Since the return time is the ratio of the sum of the degree to the degree of the node, as stated by \citet{fasino2023hitting}, the following equation holds:
        \begin{equation*}
            \mathtt{t}(i; \gG_i) = \frac{2|\gE(\mathcal{G}_i)|}{d_i - 1}\,,
        \end{equation*}    
    which directly implies our conclusion of $\mathtt{t}(i, j) = 1 + 2|\gE(\mathcal{G}_i)|$.
\end{proof}

\subsubsection{Main Result}\label{pf:srw3}
Using \cref{lem:acc-time-srw-decompose} and \cref{lem:acc-time-srw-neighbor}, we arrive at our main result for the access time of SRW on trees.

\begin{proposition}\label{prop:acc-time-srw}
    Given a tree $\mathcal{G}$ and pair of nodes $i, j \in \mathcal{V}$, the following equations hold for the access time of SRW between $u$ and $v$.
    \begin{equation*}
        \mathtt{t}(i, j) = \sum_{n=0}^{N-1} (1 + 2|\gE(\gG_n)|)\,,
    \end{equation*}
    where $\gE(\mathcal{G})$ is edge set of graph $\mathcal{G}$ and $\gG_n$ is the subtree produced by deleting edge $(v_{n}, v_{n+1})$ and choosing connected component of $v_{n}$.
\end{proposition}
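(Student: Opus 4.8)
The plan is to read off \cref{prop:acc-time-srw} directly from the two lemmas proved above, \cref{lem:acc-time-srw-decompose} and \cref{lem:acc-time-srw-neighbor}; the statement is essentially their composition, so no new idea is required. Fix the unique path $(v_0, \dots, v_N)$ between $i = v_0$ and $j = v_N$ in the tree $\mathcal{G}$.

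First I would apply \cref{lem:acc-time-srw-decompose} to split the access time along this path, obtaining
\begin{equation*}
    \mathtt{t}(i, j) = \mathtt{t}(v_0, v_N) = \sum_{n=0}^{N-1} \mathtt{t}(v_n, v_{n+1}).
\end{equation*}
This is where the tree structure is used essentially: since the path between any two vertices is unique, every walk from $v_0$ to $v_N$ must visit $v_0, v_1, \dots, v_N$ in that order, so the hitting time telescopes into a sum of hitting times between consecutive (hence adjacent) vertices by the strong Markov property. Next, for each consecutive pair $v_n, v_{n+1}$ I would apply \cref{lem:acc-time-srw-neighbor} to substitute $\mathtt{t}(v_n, v_{n+1}) = 1 + 2|\gE(\gG_n)|$, where $\gG_n$ is the connected component of $v_n$ after deleting the edge $\{v_n, v_{n+1}\}$. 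Summing over $n = 0, \dots, N-1$ then yields $\mathtt{t}(i, j) = \sum_{n=0}^{N-1}(1 + 2|\gE(\gG_n)|)$, which is the claim.

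The only point requiring a moment's care is notational consistency: \cref{lem:acc-time-srw-neighbor} must be read with the subtree $\gG_n$ defined exactly as ``the component of $v_n$ obtained by deleting $\{v_n, v_{n+1}\}$'', matching the statement of the proposition; it is worth noting in passing that these subtrees are nested, $\gG_0 \subset \gG_1 \subset \cdots \subset \gG_{N-1}$, since removing an edge farther along the path toward $j$ leaves a larger component around $i$. I do not expect any obstacle in this assembly step — the substantive work lies entirely in \cref{lem:acc-time-srw-neighbor}, whose proof combines the geometric-trials argument (each visit to $v_n$ terminates the walk at $v_{n+1}$ with probability $1/d_{v_n}$) with the return-time identity $\mathtt{t}(i;\gG_i) = 2|\gE(\gG_i)|/(d_i - 1)$ taken from \citet{fasino2021hitting}. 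If anything, I would double-check the boundary cases, namely $N = 1$ (adjacent $i, j$, where the proposition degenerates to \cref{lem:acc-time-srw-neighbor}) and the leaf case $d_{v_n} = 1$, to confirm the ``$+1$'' term and the edge count behave correctly.
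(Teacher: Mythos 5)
Your proposal matches the paper's proof exactly: combine \cref{lem:acc-time-srw-decompose} to telescope the access time along the unique path, then substitute \cref{lem:acc-time-srw-neighbor} for each adjacent pair. The extra remarks (nesting of the $\gG_n$, boundary checks) are correct but not needed; the core argument is identical.
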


\begin{proof}
The proof is a straightforward combination of \cref{lem:acc-time-srw-decompose} and \cref{lem:acc-time-srw-neighbor}.
\begin{equation*}
    \mathtt{t}(i, j) = \sum_{n=0}^{N-1} \mathtt{t}(v_{n}, v_{n+1})= \sum_{n=0}^{N-1} (1 + 2|\gE(\gG_n)|)\,.
\end{equation*}
\end{proof}

\newpage
\subsection{Access Time of Begrudgingly Backtracking Random Walks}
In this section, we derive the access time of begrudgingly backtracking random walks BBRW) between two nodes $i$ and $j$, i.e., $\Tt(i, j)$, on tree graphs. At a high level, this section consists of the following order.
\begin{enumerate}
    \item We decompose the access time for two nodes $v_{0}$ and $v_{N}$ into summations of the access time of neighboring nodes in the path (\cref{pf:bbrw1,lem:acc-time-bbrw-decompose}).
    \item The decomposed term in 1. can be formulated using (i) the return time and (ii) the access time between neighboring nodes (\cref{subsec:cond-at,lem:decomposed-bbrw-def}).
    \item The return time of a node can be formulated in terms of the number of edges and the degree of a node (\cref{pf:bbrw3,lem:return-time-bbrw}).
    \item The access time between neighboring nodes can be formulated using the number of edges and the degree of the node (\cref{pf:bbrw4,lem:acc-time-bbrw-neighbor}).
    \item Finally we formulate the access time of begrudgingly backtracking random walks (BBRW), i.e., $\Tt(i, j)$ (\cref{pf:bbrw5,prop:acc-time-bbrw}).
\end{enumerate}

\subsubsection{Decomposition of Access Time}\label{pf:bbrw1}
We start by decomposing the access time $\Tt(i, j)$ similar to the one in \cref{lem:acc-time-srw-decompose}. However, a key difference exists in the BBRW random walk. When the walk first arrives at node $v_{n}$, it previously passed the node $v_{n-1}$. Therefore, we cannot return to $v_{n-1}$ upon the first failure to reach $v_{n+1}$. 

\begin{lemma} \label{lem:acc-time-bbrw-decompose}
    Consider a tree graph $\mathcal{G}$ and path $(v_0, \dots, v_{N})$. Then the access time of BBRW between node $v_0$ and $v_{N}$ can be derived as follows:
    \begin{equation*}
        \Tt(v_0, v_{N}) = \Tt(v_{0} \rightarrow v_1, v_0) + \sum_{n=1}^{N-1} \Bigg( \Tt(v_{n-1} \rightarrow v_{n}, v_{n+1})-1 \Bigg)\,.
    \end{equation*}
\end{lemma}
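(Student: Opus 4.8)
The plan is to mimic the decomposition argument of \cref{lem:acc-time-srw-decompose}, but to track carefully the ``memory'' of the begrudgingly backtracking walk at the moment it first reaches each path vertex $v_{n}$. First I would observe that, since $\mathcal{G}$ is a tree, the unique path $(v_0,\dots,v_N)$ is a cut-path: any walk from $v_0$ to $v_N$ must visit $v_1,v_2,\dots,v_{N-1}$ in order, because removing the edge $\{v_{n},v_{n+1}\}$ disconnects $v_0$ from $v_N$. Hence the first-hitting times satisfy $0 = T_{v_0} \le T_{v_1} \le \cdots \le T_{v_N}$ almost surely, and by linearity
\begin{equation*}
    \Tt(v_0,v_N) = \E[T_{v_N}\mid \rx_0 = v_0] = \sum_{n=0}^{N-1}\E[T_{v_{n+1}} - T_{v_{n}} \mid \rx_0 = v_0].
\end{equation*}
This is the same skeleton as the SRW case; the work is in identifying each summand.

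The key point where BBRW differs from SRW is the conditioning needed to invoke the (second-order) Markov property. A begrudgingly backtracking walk is Markov in the pair $(\rx_{n-1},\rx_n)$, not in $\rx_n$ alone, so to evaluate $\E[T_{v_{n+1}} - T_{v_n}\mid \rx_0 = v_0]$ I must know which neighbor of $v_n$ the walk occupied just before first hitting $v_n$. For $n\ge 1$, the tree structure forces that predecessor to be $v_{n-1}$: the only way to enter $v_n$ for the first time is from the $v_0$-side component, and the unique neighbor of $v_n$ in that component along the path is $v_{n-1}$ — more carefully, on a tree every first entry into $v_n$ from the side containing $v_0$ must cross the edge $\{v_{n-1},v_n\}$. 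Therefore for $n \ge 1$,
\begin{equation*}
    \E[T_{v_{n+1}} - T_{v_n}\mid \rx_0 = v_0] = \Tt(v_{n-1}\rightarrow v_n,\, v_{n+1}),
\end{equation*}
the expected time for a BBRW that has just stepped $v_{n-1}\to v_n$ to reach $v_{n+1}$, which by the stated definition counts the step already taken into $v_n$; subtracting $1$ removes that already-counted step, giving the summand $\Tt(v_{n-1}\rightarrow v_n, v_{n+1}) - 1$. For the initial segment $n=0$ there is no previous node, so the first step of the BBRW behaves exactly like an SRW step and the term is $\Tt(v_0, v_0 \rightarrow v_1)$ as written. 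Summing over $n$ yields the claimed identity.

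The main obstacle I expect is making the ``the predecessor must be $v_{n-1}$'' claim fully rigorous together with the accounting of the $-1$ offsets: one must argue via the strong Markov property at the stopping time $T_{v_n}$ that the post-$T_{v_n}$ process is a BBRW started from the state $(v_{n-1}, v_n)$, and be careful that $\Tt(v_{n-1}\rightarrow v_n, v_{n+1})$ as defined includes the already-performed transition into $v_n$ (hence the $-1$), whereas the very first term $\Tt(v_0, v_0\rightarrow v_1)$ is indexed so that no such correction is needed. A secondary subtlety is the degenerate case where some intermediate $v_n$ has degree $1$ in $\mathcal{G}$ — impossible for $1 \le n \le N-1$ since such a vertex lies on the path and thus has degree at least $2$ — so the begrudgingly ``dead-end'' rule never triggers at interior path vertices, and the decomposition is clean.
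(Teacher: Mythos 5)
Your plan matches the paper's proof in all essential respects: both start from the telescoping decomposition $\Tt(v_0,v_N)=\sum_{n=0}^{N-1}\E[T_{v_{n+1}}-T_{v_n}\mid \rx_0=v_0]$, and both identify each summand for $n\ge 1$ as $\Tt(v_{n-1}\rightarrow v_n,v_{n+1})-1$ by observing that at the stopping time $T_{v_n}$ the walk must have just arrived from $v_{n-1}$, invoking the (second-order) Markov property of the BBRW. Your write-up is if anything slightly more careful than the paper's — you make the cut-path ordering of hitting times explicit, you spell out why the predecessor at first entry to $v_n$ must be $v_{n-1}$ rather than asserting it, and you note that interior path vertices have degree at least $2$ so the begrudgingly rule never fires there — but none of these are genuinely different ideas, just filled-in details of the same argument.
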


\begin{proof}
The proof is similar to \cref{lem:acc-time-srw-decompose} such that we decompose the walk into a series of $N-1$ walks between $(v_{0}, v_{1}), (v_{1}, v_{2}), \ldots, (v_{N-1}, v_{N})$ such that the walk between $v_{n}, v_{n+1}$ does not contain a node $v_{n+1}$ except at the endpoint. The expected length of each walk is $\tilde{\mathtt{t}}(v_{n-1}\rightarrow v_{n}, v_{n+1}) - 1$. Note that $\tilde{\mathtt{t}}(v_{n-1}\rightarrow v_{n}, v_{n+1})$ counts the length of walk from $v_{n-1}$ to $v_{n+1}$, hence should be substracted to represent the length of walk from $v_{n}$ to $v_{n+1}$.

To be specific, we have noted that \cref{eq:acc-time-decompose} holds for general random walks:
    \begin{equation*}
       \Tt(v_0, v_{N}) = \sum_{n=0}^{N-1} \E[T_{v_{n+1}} - T_{v_{n}} | \rx_0 = v_0]\,.
    \end{equation*}

Next, when the BBRW random walk reaches $v_{n}$ at $T_{v_{n}}$, it has passed node $v_{n-1}$, i.e., $\rx_{T_{v_{n}} - 1} = v_{n-1}$. Therefore, the random walk after the time $t = T_{v_{n}} - 1$ is the random walk starting at $v_{n-1}$ with the second node being $\rx_1 = v_{n}$. Therefore,

\begin{equation*}
    \E[T_{v_{n+1}} - T_{v_{n}} | \rx_0 = v_0] = \Tt(v_{n-1} \rightarrow v_{n}, v_{n+1}) - 1 \text{\quad for \quad } n \geq 1\,.
\end{equation*}

\end{proof}

\subsubsection{Expressing Transition-conditioned Access Time Using Subgraph-return Time} \label{subsec:cond-at}
Next, we formulate the transition-conditioned access time $\Tt(v_{n-1} \rightarrow v_{n}, v_{n+1})$.
\begin{lemma}\label{lem:decomposed-bbrw-def}
    Given a tree $\mathcal{G} = (\mathcal{V}, \gE)$ and three nodes $i$, $j$, and $k$ such that $(i, j), (j, k) \in \gE$, 
    \begin{equation} \label{eq:acc-time-inter}
        \Tt(i \rightarrow j, k) - 1 = \Tt(j, k) - \frac{d_i - 1}{d_j} \Tt(i; \mathcal{G}_{i}) - \frac{2}{d_j},
    \end{equation}
    where $\Tt(i; \mathcal{H})$ is return time of node $i$ in a subgraph $\mathcal{H}$, and $\mathcal{G}_i$ is the subtree produced by deleting edge $(i, j)$ and choosing connected components of $i$.
\end{lemma}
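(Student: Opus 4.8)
The plan is to decompose the quantity $\Tt(i \rightarrow j, k)$ — the expected time for a begrudgingly backtracking walk to reach $k$, given it starts at $i$ and moves to $j$ on its first step — by conditioning on what the walk does on its \emph{second} step, namely from $j$. Once the walk is at $j$ having just come from $i$, the BBRW rule forbids immediately returning to $i$ (since $d_j \geq 2$, because $k \in \mathcal{N}(j)$ and $k \neq i$), so the walk picks uniformly among the $d_j - 1$ neighbors of $j$ other than $i$. With probability $\frac{1}{d_j-1}$ it steps to $k$ and we are done; otherwise it steps into one of the other subtrees hanging off $j$, wanders there, and must eventually return to $j$ before it can hope to reach $k$. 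This is the same ``try-and-return'' structure used in the proof of \cref{lem:acc-time-srw-neighbor}, and I would mirror that argument.

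The key steps, in order: First, write $\Tt(j,k)$ itself via the same second-step conditioning, but now starting \emph{at} $j$ with no forbidden neighbor, so the walk picks uniformly among all $d_j$ neighbors — one of which is $i$. This gives an expression for $\Tt(j,k)$ in terms of (a) the event of stepping directly to $k$, (b) the event of stepping to $i$ and returning to $j$ (contributing a term involving $\Tt(i;\mathcal{G}_i)$, the return time of $i$ in the subtree $\mathcal{G}_i$ obtained by deleting edge $(i,j)$), and (c) the events of stepping into the other subtrees and returning. Second, write $\Tt(i \rightarrow j, k) - 1$ via the conditioning from $j$-with-$i$-forbidden; this is almost the same expression but with the $i$-branch removed and the remaining $d_j - 1$ branches reweighted from $\frac{1}{d_j}$ to $\frac{1}{d_j-1}$. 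Third, subtract: the ``other subtree'' contributions are structurally identical in both expansions and should cancel up to the reweighting bookkeeping, leaving exactly the $i$-branch correction $-\frac{d_i-1}{d_j}\Tt(i;\mathcal{G}_i)$ plus a clean-up constant. I expect the constant to come out to $-\frac{2}{d_j}$ from carefully tracking the ``$+1$ per step'' costs: the direct-to-$k$ step and the forbidden-neighbor arithmetic each contribute, and reconciling the $\frac{1}{d_j}$ versus $\frac{1}{d_j-1}$ normalizations produces the $\frac{2}{d_j}$ (heuristically, one unit for the extra step to reach $j$ in the $\Tt(i\to j,k)$ count, which is why we subtract $1$ on the left, and one unit from averaging).

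The main obstacle will be the bookkeeping in Step~3: setting up the two second-step expansions so that the ``return from subtree $T_m$ and restart at $j$'' terms are written in a common form and visibly cancel, rather than getting lost in a sum over $j$'s neighbors. The cleanest route is probably to introduce a single symbol, say $R$, for the combined expected contribution of ``step into some subtree other than $\mathcal{G}_i$ and come back to $j$,'' derive that $\Tt(j,k) = 1 + \frac{1}{d_j}\big[(d_i-1)\Tt(i;\mathcal{G}_i) + R\big] + \big(1 - \tfrac{1}{d_j} - \tfrac{\text{(branches to }k)}{d_j}\big)\Tt(j,k)$-type self-referential equation, do the same for $\Tt(i\to j,k)$ without the $i$-branch, and eliminate $R$ by subtraction — at which point $\Tt(j,k)$ also drops out on both sides and only the claimed closed form survives. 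I would double-check the constant $\frac{2}{d_j}$ on the smallest nontrivial case (a path graph, $d_j = 2$, where both formulas are elementary) before trusting the general manipulation, and also verify the consistency of \eqref{eq:acc-time-inter} with \cref{lem:acc-time-bbrw-decompose} by summing along a path.
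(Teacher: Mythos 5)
Your high-level strategy matches the paper's exactly: expand both $\Tt(j,k)$ and $\Tt(i\to j,k)-1$ by conditioning on the step out of $j$, compare the unconstrained expansion with the $i$-forbidden one, and let the $i$-branch generate a subtree-return term $\Tt(i;\mathcal{G}_i)$. Where your sketch has a genuine gap is the bookkeeping you propose for Step~3. The self-referential equation you write, with $\Tt(j,k)$ reappearing on the right to model ``wander into a subtree and come back to $j$,'' does not hold for a begrudgingly backtracking walk: when the walk returns to $j$ from a neighbor $m$, its previous node is $m$, so the remaining expected time is the \emph{transition-conditioned} quantity $\Tt(m\to j,k)-1$, not the unconstrained $\Tt(j,k)$, and this quantity depends on which subtree the walk came back from. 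There is no renewal at $j$ because the walk carries one step of memory. (The internal tension in your sketch --- ``$\Tt(j,k)$ also drops out on both sides'' --- is a symptom of this: the target identity keeps $\Tt(j,k)$ on its right-hand side.)

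The paper's proof avoids this by working with the transition-conditioned access times $\Tt(j\to l,k)$ throughout, recording
\begin{align*}
\Tt(i\to j,k)-1 &= \frac{1}{d_j-1}\Bigl[1+\sum_{l\in\mathcal{N}(j)\setminus\{i,k\}}\Tt(j\to l,k)\Bigr],\\
\Tt(j,k) &= \frac{1}{d_j}\sum_{l\in\mathcal{N}(j)}\Tt(j\to l,k),
\end{align*}
so that the ``other subtree'' terms cancel exactly once the restricted sum is written as the full sum minus the $l=i$ and $l=k$ contributions. The $l=k$ term is simply $1$, and --- this is the step your plan is missing --- the $l=i$ term is resolved by the recursion
\begin{equation*}
\Tt(j\to i,k)=1+(d_i-1)\,\Tt(i;\mathcal{G}_i)+\Tt(i\to j,k),
\end{equation*}
which feeds $\Tt(i\to j,k)$ back into the right-hand side. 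Collecting that term with the left-hand side is precisely what converts the $\tfrac{1}{d_j-1}$ normalization into $\tfrac{1}{d_j}$ and produces the $-\tfrac{2}{d_j}$ constant. Without this explicit recursion, the $\tfrac{d_i-1}{d_j}$ coefficient cannot be isolated by subtracting your two self-referential equations; you would need to keep track of each $\Tt(m\to j,k)$ separately, which is exactly what the paper's three equations do.
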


\begin{figure}[t]
\centering
\begin{subfigure}[t]{0.4\linewidth}
    \centering
    \includegraphics[scale=0.24]{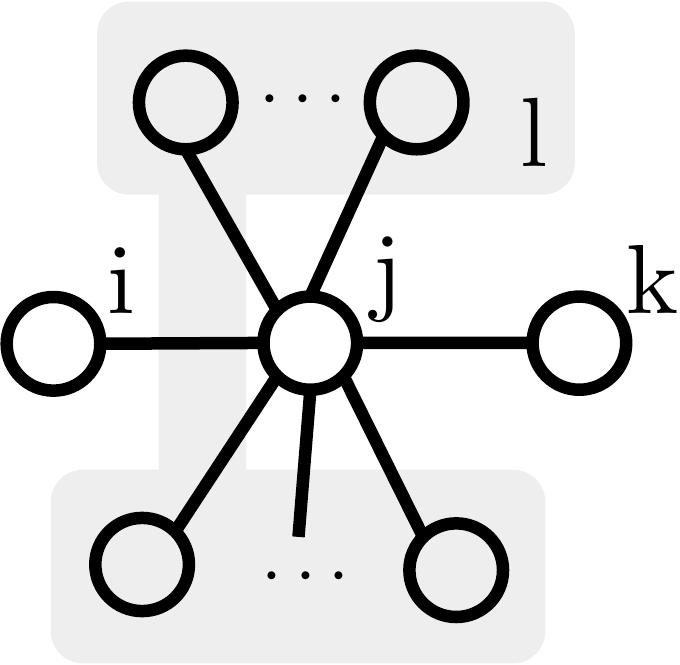}
    \caption{Figure of nodes $i, j, k, l$ in \ref{subsec:cond-at}}\label{subfig:pf-a-3-2}
\end{subfigure}
\begin{subfigure}[t]{0.58\linewidth}
    \centering
    \includegraphics[scale=0.18]{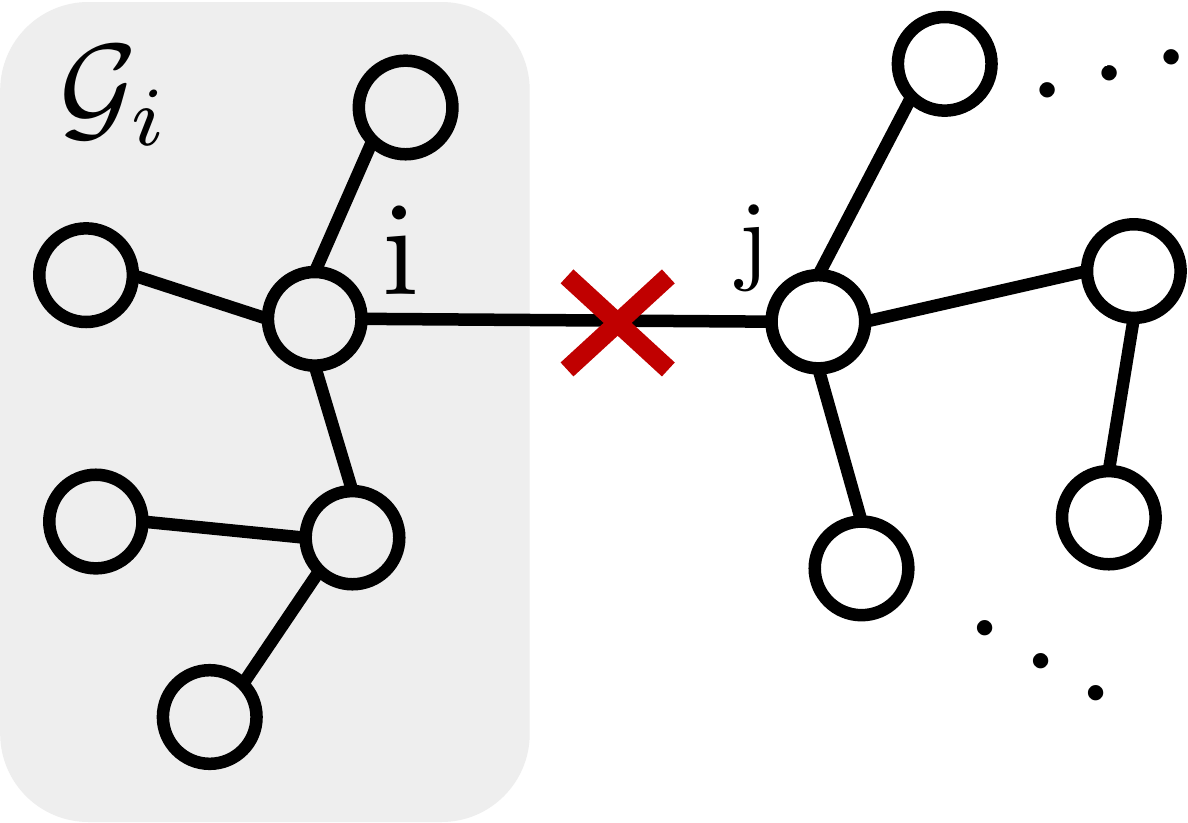}
    \caption{Subtree $\mathcal{G}_{i}$ produced by deleting edge (i, j) and choosing connected components to node $i$}\label{subfig:pf-a-3-3}
\end{subfigure}
\caption{
References for section \ref{subsec:cond-at}
}
\label{fig:pf-at}
\end{figure}

\begin{proof}
    We start with establishing basic equalities for access time of BBRW. 

    The first equality describes how the access time of a random walk from $i\rightarrow j$ to $k$ can be expressed by its subset $j\rightarrow l, k$ where $l\neq i$ due to the non-backtracking property, 
    \begin{equation}\label{eq:11}
        \Tt(i \rightarrow j, k) - 1 = \frac{1}{d_j - 1} + \sum_{l \in \mathcal{N}(j) \setminus \{ i, k \}} \frac{1}{d_j - 1} \Tt(j \rightarrow l, k)\,.
    \end{equation}

    The next equality describes how the access time from $j$ to $k$ can be decomposed by considering subsequent transitions to some $l$ in the neighborhood of $j$.
    \begin{equation}\label{eq:12}
        \Tt(j, k) = \frac{1}{d_j} \sum_{l \in \mathcal{N}(j)} \Tt(j \rightarrow l, k)\,.
    \end{equation}
    
    When plugging in \cref{eq:12} into \cref{eq:11}, one can see that we need to consider $l=i, k$ for $\Tt(j\rightarrow l, k)$. In the case of $k$, it is trivially 1. For the case of $i$, i.e., $\Tt(j\rightarrow i, k)$, it can be defined as follow:
    \begin{equation}\label{eq:13}
        \Tt(j \rightarrow i, k) = 1 + (d_i - 1) \Tt(i; \mathcal{G}_{i}) + \Tt(i \rightarrow j, k)\,.
    \end{equation}
    This is similar to \cref{eq:tryandreturn}, where $\mathcal{G}_{i}$ describes how the walk from $j\rightarrow i$ to $k$ can be divided into two scenarios:
    (i) continues the walk in $\mathcal{G}_{i}$ or
    (ii) $i$ transitions into $j$ with probability $1/(d_{i}-1)$.
    
    Starting from \cref{eq:11}, one can derive the following relationship:
    \begin{align*}
        \Tt(i \rightarrow j, k) - 1 &= 
        \frac{1}{d_j - 1} \cdot 1 + \sum_{l \in \mathcal{N}(j) \setminus \{ i, k \}} \frac{1}{d_{j} - 1} \Tt(j \rightarrow l, k)
        \\
        &\stackrel{(a)}{=}
            \frac{1}{d_j-1}
            + \frac{1}{d_j-1}
            \Bigg[ 
                \sum_{l \in \mathcal{N}(j)} \Tt(j \rightarrow l, k) 
                 - \Tt(j \rightarrow i, k) 
                - \Tt(j \rightarrow k, k)
            \Bigg] 
        \\
        &\stackrel{(b)}{=} 
            \frac{1}{d_j-1}
            + \frac{1}{d_j-1} 
            \Bigg[
            d_j \Tt(j, k)
            - \Tt(j \rightarrow i, k) - 1
            \Bigg]
        \\
        &\stackrel{(c)}{=}
            \frac{1}{d_j-1}
            \Bigg[
            d_j \Tt(j, k)
            - (d_i - 1) \Tt(i; \gG_i) - \Tt(i \rightarrow j, k) - 1
            \Bigg]\,,
    \end{align*}
    where $(a)$ is from \cref{eq:11}, $(b)$ is from \cref{eq:12} and $\tilde{\mathtt{t}}(j\rightarrow k, k)=1$, (c) is from \cref{eq:13}.
    
    Now, by multiplying $d_j - 1$ on both sides, we get
    \begin{align*}
        (d_j - 1) \Bigg(\Tt(i \rightarrow j, k) - 1 \Bigg) 
            &=
            d_j \Tt(j, k)
            - (d_i - 1) \Tt(i; \gG_i) - \Tt(i \rightarrow j, k) - 1
            \\
            &= d_j \Tt(j, k) - (d_i - 1) \Tt(i; \gG_i) - \Bigg(\Tt(i \rightarrow j, k) - 1\Bigg) - 2\,.
    \end{align*}
    
    Thus, we can conclude for $\Tt(i \rightarrow j, k) - 1$:
    \begin{equation*} 
        \Tt(i \rightarrow j, k) - 1 = \Tt(j, k) - \frac{d_i - 1}{d_j} \Tt(i; \gG_i) - \frac{2}{d_j}\,.
    \end{equation*}
\end{proof}

\subsubsection{Return Time with respect to a Subgraph}\label{pf:bbrw3}
Now, we formulate the return time $\Tt(i;\mathcal{G})$ for a tree-graph $\mathcal{G}$. Considering the return time, we prove that the return time of BBRW is less than or equal to that of SRW.
\begin{lemma}
    \label{lem:return-time-bbrw}
    Given a tree $\mathcal{G} = (\mathcal{V}, \gE)$ and a node $i$, the return time of $i$ for a BBRW is the following:
    \begin{equation*}
        \Tt(i;\mathcal{G}) = \frac{2|\gE|}{d_i}\,.
    \end{equation*}
\end{lemma}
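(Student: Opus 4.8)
The plan is to lift the begrudgingly backtracking walk (BBRW) to a Markov chain on the $2|\gE|$ oriented edges of the tree, observe that this lifted chain is stationary for the \emph{uniform} distribution, and then read off the return time from Kac's formula. Concretely, after its first step a BBRW is the Markov chain on $\vec{\gE}=\{u\to v : \{u,v\}\in\gE\}$ that moves from $u\to v$ to $v\to w$, with $w$ chosen uniformly in $\gN(v)\setminus\{u\}$ when $d_v\geq 2$, and (deterministically) to $v\to u$ when $d_v=1$. First I would verify that the uniform law $\pi\equiv\tfrac{1}{2|\gE|}$ on $\vec{\gE}$ is stationary: the inflow into an oriented edge $v\to w$ equals $\tfrac{1}{2|\gE|}$, since when $d_v\geq 2$ it receives mass $\tfrac{1}{2|\gE|}\cdot\tfrac{1}{d_v-1}$ from each of the $d_v-1$ edges $u\to v$ with $u\neq w$, and when $d_v=1$ it receives the whole mass $\tfrac{1}{2|\gE|}$ from the unique edge $w\to v$. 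On a finite tree this edge chain is also irreducible (from any oriented edge one may bounce at leaves and eventually traverse every oriented edge, by a short induction on the tree size), hence positive recurrent with stationary law $\pi$.

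Next I would convert the return time to the \emph{node} $i$ into a return time to a \emph{set} of oriented edges. Writing the walk started at $i$ as $Z_0=i,Z_1,Z_2,\dots$ and setting $Y_t=(Z_{t-1}\to Z_t)$ for $t\geq 1$, we have $Z_t=i$ exactly when $Y_t\in\mathrm{In}(i):=\{u\to i : u\in\gN(i)\}$, so $\Tt(i;\gG)=\E\big[\min\{t\geq 1 : Y_t\in\mathrm{In}(i)\}\big]$. Because the first step of a BBRW from $i$ has no previous node to avoid, $Z_1$ is uniform on $\gN(i)$, i.e.\ $Y_1$ is uniform on $\mathrm{Out}(i):=\{i\to v : v\in\gN(i)\}$; a one-line computation shows that pushing the uniform law on $\mathrm{In}(i)$ through one step of the edge chain yields precisely the uniform law on $\mathrm{Out}(i)$ (for $d_i\geq 2$ each $i\to v$ is reached from the $d_i-1$ edges $u\to i$ with $u\neq v$, with total probability $\tfrac{d_i-1}{d_i}\cdot\tfrac{1}{d_i-1}=\tfrac1{d_i}$; the case $d_i=1$ is immediate). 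Consequently $\Tt(i;\gG)$ equals the expected first return time to the set $\mathrm{In}(i)$ for the edge chain started from $\pi$ conditioned on $\mathrm{In}(i)$.

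Finally I would invoke Kac's formula: for an irreducible positive recurrent chain, the expected first return time to a set $A$, started from the stationary distribution restricted to $A$, is $1/\pi(A)$. With $A=\mathrm{In}(i)$ and $\pi(\mathrm{In}(i))=d_i/(2|\gE|)$ this yields $\Tt(i;\gG)=2|\gE|/d_i$, as claimed. (Kac's return-time identity needs only irreducibility, not aperiodicity, which is important here because on a path the lifted edge chain is periodic.)

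The step I expect to be the main obstacle is the reduction in the second paragraph: one must argue carefully that the anomalous first step of the BBRW at $i$ — uniform over all of $\gN(i)$, rather than begrudging — is reproduced exactly by prepending a single edge-chain step from the uniform law on $\mathrm{In}(i)$, so that Kac's formula applies verbatim. It is essential to use the \emph{set} version of Kac's formula rather than the single-state version, since the expected time to return to a fixed oriented edge of $\mathrm{In}(i)$ is in general not $2|\gE|/d_i$; only the average over $\mathrm{In}(i)$ is.
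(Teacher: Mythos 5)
Your proof is correct, but it takes a genuinely different route from the paper's. The paper proves the lemma by rooting the tree at $i$ and doing induction on the height: for the base case (height $1$) the walk returns in exactly two steps, and for the inductive step the return time is decomposed as a ``try-and-return'' argument at each child $l\in\gN(i)$, using $\Tt(l;\gG_l) = 2|\gE(\gG_l)|/(d_l-1)$ for the subtree rooted at $l$ and then summing. Your proof instead lifts the BBRW to the Markov chain on oriented edges, verifies (correctly, including the leaf case) that the uniform distribution on $\vec\gE$ is stationary for this chain, reduces the node return time to the set return time to $\mathrm{In}(i)$ after observing that the anomalous first step of the BBRW from $i$ matches one transition of the edge chain started from $\pi|_{\mathrm{In}(i)}$, and then applies Kac's return-time formula for sets. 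Each approach has something to offer: the paper's induction is elementary and self-contained, staying entirely inside the recursive structure of trees, and directly produces the intermediate quantities $\Tt(l;\gG_l)$ needed elsewhere in Appendix A; your argument is shorter once Kac's formula is available, makes transparent \emph{why} the BBRW and SRW share the same return time $2|\gE|/d_i$ (both edge/vertex lifts have degree-proportional stationary measure), and would extend beyond trees to any graph for which the begrudging edge chain is irreducible. Two small points worth tightening if this were used in the paper: the irreducibility of the edge chain on a general tree is asserted with only a one-line sketch and deserves an actual induction, and you should state explicitly which version of Kac's formula you invoke (the set version, not the single-state one), since --- as you yourself flag --- the single-state version would give a wrong answer here.
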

\begin{proof}
    Consider the tree as a rooted tree where the root is $i$. In the following, we will use mathematical induction based on the tree height of $i$.
    
    First, consider the base case where the height of the tree is 1. Then, whatever we choose as the next node $\rx_1$ from $\rx_0 = i$, we return to $i$ at the second transition (i.e., $\rx_2 = i$) since all the neighbors of $i$ are leaf node. Since $d_i = |\gE|$ for tree with height 1, $\Tt(i;\mathcal{G}) = 2 = \frac{2|\gE|}{d_i}$.

    Now, assume that the lemma holds for the tree with a height less than $k \geq 1$. It suffices to show that the lemma also holds for a tree with height $k+1$ and its root $i$. From the same perspective in the proofs of the \cref{lem:acc-time-srw-neighbor}, we can view the random walk returning to $i$ as follows:

    \begin{enumerate}
        \item We choose a node $\rx_1 = l$ from $\mathcal{N}(i)$ uniformly. Then, from node $l$, we return to $l$ to reach $i$. (i.e., we have to pay penalty amounts to return time of $l$)
        \item In node $l$, we try to reach $i$ by selecting edge $(l, i)$ with probability $\frac{1}{d_l - 1}$. Thus, the average number of failures is $d_l - 2$.
        \item If we fail to reach $i$ from $l$, we should return to $l$ for a next chance to reach $i$. i.e., an average failure penalty amounts to a return time of $l$ in the subtree with root $l$.
    \end{enumerate}
    Hence, 
    \begin{equation*}
        \Tt(i;\mathcal{G}) = 1 + \sum_{l \in \mathcal{N}(i)} \frac{1}{d_i} \Bigg\{ 1 + \Tt(l; \gG_l) \cdot \Bigg( (d_l - 2) + 1\Bigg) \Bigg\} = 2 + \sum_{l \in \mathcal{N}(i)} \frac{1}{d_i} \cdot \Tt(l;\gG_l) \cdot (d_l - 1)\,,
    \end{equation*}
    where $\Tt(l; \gG_l)$ is the return time from node $l$ for graph $\gG$ and $\gG_l$ denotes the subtree with root $l$, respectively.

    Since the subtree with root $l$ has height less than $k$, $\Tt(l; \gG_l) = \frac{2|\gE(G_l)|}{d_l - 1}$. Therefore,
    \begin{align*}
        \Tt(i;\mathcal{G}) 
        &= 2 + \frac{1}{d_i} \sum_{l \in \mathcal{N}(i)} \Tt(l; \gG_l) \cdot (d_l - 1) \\
        &= 2 + \frac{1}{d_i} \sum_{l \in \mathcal{N}(i)} \frac{2|\gE(\gG_l)|}{d_l - 1} \cdot (d_l - 1) \\
        &= 2 + \frac{1}{d_i} \sum_{l \in \mathcal{N}(i)} 2|\gE(\gG_l)| \\
        &= \frac{2 \Bigg( d_i + \sum_{l \in \mathcal{N}(i)} |\gE(\gG_l)|  \Bigg) }{d_i} \\
        &= \frac{2|\gE|}{d_i}\,.
    \end{align*}
\end{proof}

\subsubsection{Access Time between Neighbors}\label{pf:bbrw4}
For access time between neighbors, we achieve a similar result to \cref{lem:acc-time-srw-neighbor}.
\begin{lemma}
    \label{lem:acc-time-bbrw-neighbor}
    Given a tree $\mathcal{G}$ and adjacent nodes $i, j$, 
    \begin{equation*}
        \Tt(i, j) = 1 + 2|\gE(\mathcal{G}_i)| \cdot \frac{d_i - 1}{d_i},
    \end{equation*}
    where $\gE(\mathcal{G})$ is edge set of graph $\mathcal{G}$, and $\mathcal{G}_i$ is the subtree produced by  deleting edge $(i, j)$ and choosing connected component of $i$.
\end{lemma}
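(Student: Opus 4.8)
The plan is to run a first-step / renewal analysis at node $i$, mirroring the proof of \cref{lem:acc-time-srw-neighbor} but carrying along the begrudgingly-backtracking memory. First I would observe that, since $\gG$ is a tree, a walk started at $i$ reaches $j$ only by traversing the edge $(i,j)$ and never visits anything on $j$'s side before doing so; hence we may delete everything on $j$'s side of $(i,j)$ and assume without loss of generality that $j$ is a leaf, so $\gE(\gG) = \gE(\gG_i)\sqcup\{(i,j)\}$. Write $d = d_i$ and $n = |\gE(\gG_i)|$. If $d=1$ the walk steps straight into $j$ and both sides of the claim equal $1$, so assume $d\ge 2$ from now on.

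Next I would decompose the trajectory into \emph{excursions}: each time the walk is at $i$ it either escapes to $j$ and terminates, or it steps into one of the $d-1$ subtrees $T_1,\dots,T_{d-1}$ hanging off $i$ inside $\gG_i$, wanders there, and eventually returns to $i$. An excursion entering $T_m$ cannot leave $T_m$ without first returning to $i$, so its expected length is exactly the BBRW return time of $i$ in the subgraph $\{i\}\cup T_m$ (in which $i$ has degree $1$), which by \cref{lem:return-time-bbrw} equals $2(1+|\gE(T_m)|)$; summing over $m$ gives $\sum_m 2(1+|\gE(T_m)|) = 2n$, and the BBRW return time of $i$ inside $\gG_i$ itself is $\Tt(i;\gG_i) = \tfrac{2n}{d-1}$, again by \cref{lem:return-time-bbrw}. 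The access time is then $1$ (the final step into $j$) plus the total length of all excursions performed.

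It remains to count the excursions and to size them. For the count: at the first visit the walk picks uniformly among all $d$ neighbours of $i$, so it escapes with probability $1/d$; at every later visit it has just come from the root of the previous subtree and picks uniformly among the other $d-1$ neighbours, so it escapes with probability $1/(d-1)$. Hence the number $K$ of excursions satisfies $\Pr[K\ge k] = \tfrac{d-1}{d}\bigl(\tfrac{d-2}{d-1}\bigr)^{k-1}$ for $k\ge 1$, giving $\E[K] = \tfrac{(d-1)^2}{d}$. For the size: I would prove by induction on $k$ the invariant that, conditioned on $\{K\ge k\}$, the index of the subtree entered during excursion $k$ is uniform on $\{1,\dots,d-1\}$ — the inductive step uses only the elementary fact that if $X$ is uniform on a finite set $S$ and $Y\mid X$ is uniform on $S\setminus\{X\}$, then $Y$ is uniform on $S$. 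This invariant makes the conditional expected length of every excursion equal to $\tfrac1{d-1}\sum_m 2(1+|\gE(T_m)|) = \tfrac{2n}{d-1} = \Tt(i;\gG_i)$, so linearity of expectation (via $\E[K] = \sum_{k\ge1}\Pr[K\ge k]$) yields
\begin{equation*}
\Tt(i,j) = 1 + \E[K]\cdot\Tt(i;\gG_i) = 1 + \frac{(d-1)^2}{d}\cdot\frac{2n}{d-1} = 1 + 2|\gE(\gG_i)|\cdot\frac{d_i-1}{d_i}.
\end{equation*}

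The one genuine obstacle, and the sole way this proof departs from the simple-random-walk case of \cref{lem:acc-time-srw-neighbor}, is that the begrudgingly-backtracking rule forbids re-entering the subtree just visited, so successive excursions are \emph{not} i.i.d.\ and a naive Wald identity does not literally apply; the uniformity invariant above is exactly what restores a constant per-excursion conditional expectation, and one also has to handle the first visit (uniform over $d$ neighbours, escape probability $1/d$) separately from all subsequent visits (uniform over $d-1$, escape probability $1/(d-1)$). An essentially equivalent alternative would be induction on the height of $\gG_i$ rooted at $i$, exactly paralleling the proof of \cref{lem:return-time-bbrw}.
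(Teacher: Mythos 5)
Your proof is correct and uses the same excursion/renewal decomposition at $i$ as the paper: multiply the expected number of failed excursions, $(d_i-1)^2/d_i$, by the per-excursion expected length $\Tt(i;\gG_i)=2|\gE(\gG_i)|/(d_i-1)$, and add the final step into $j$. The one substantive addition you make is the uniformity invariant: the paper asserts, without justification, that \emph{every} failed trial costs $\Tt(i;\gG_i)$ in expectation, even though after the first excursion the walker carries memory of the subtree it just left, so the $d_i-2$ admissible subtrees on the next visit are not symmetric and the excursion lengths are neither independent nor identically distributed. Your observation that, conditionally on $\{K\ge k\}$, the subtree entered on the $k$-th excursion is still uniform on $\{1,\dots,d_i-1\}$ is precisely the missing lemma that restores a constant conditional per-excursion expectation and legitimizes the Wald-type product the paper writes down.
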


\begin{proof}
    We follow the same perspective of the proofs in \cref{lem:acc-time-srw-neighbor}.
    
    We first analyze the success probability for each trial, which is $\frac{1}{d_i}$ since there is no previous node. After the first trial, the success probability is fixed to $\frac{1}{d_i - 1}$. On each trial, the average failure penalty amounts to $\Tt(i;\gG_i)$, which is the return time of $i$ on subtree $\mathcal{G}_i$.

    Thus, the average number of trials until first success is,
    \begin{equation*}
        \text{(Average number of trial)} = \frac{1}{d_i} \cdot 1 + \frac{d_i - 1}{d_i} \cdot \Bigg(1 + (d_i - 1) \Bigg)\,.
    \end{equation*}
    The average number of failures is as follows:
    \begin{align*}
        \text{(Average number of failure)} &= \text{(Average number of trial)} - 1 \\
        &= \frac{1}{d_i} \cdot 1 + \frac{d_i - 1}{d_i} \cdot (1 + d_i - 1) - 1 \\
        &= \frac{(d_i - 1)^2}{d_i}\,.
    \end{align*}
        
    Thus, the expected total penalty is as follows:
    \begin{equation*}
        \text{(Expected total penalty)} = \Tt(i; \gG_i) \cdot \frac{(d_i - 1)^2}{d_i}\,.
    \end{equation*}
    Since $\Tt(i; \gG_i) = \frac{2|\gE(\mathcal{G}_i)|}{d_i - 1}$ by \cref{lem:return-time-bbrw}, $\Tt(u, v) = 1 + 2|\gE(\mathcal{G}_i)| \frac{d_i - 1}{d_i}$\,.
\end{proof}

\subsubsection{Main Result}\label{pf:bbrw5}
Finally, we show that the access time of BBRW between two nodes $i$ and $j$, i.e., $\Tt(i, j)$, in \cref{prop:acc-time-bbrw}.
\begin{proposition}
    \label{prop:acc-time-bbrw}
    Given a tree $\mathcal{G}$ and a pair of nodes $i, j$, the following equations hold for the access time of BBRW between $i$ and $j$.
    \begin{equation*}
        \Tt(i, j) = \sum_{n=0}^{N-1} \Bigg( 1 + 2|\gE(\gG_n)| \cdot \frac{d_{v_{n}} - 1}{d_{v_{n}}} \Bigg) 
            - \sum_{n=1}^{N-1} \frac{2|\gE(\gG_{n-1})|}{d_{v_{n}}}
            - \sum_{n=1}^{N-1} \frac{2}{d_{v_{n}}}\,,
    \end{equation*}
    where $\gE(\gG)$ is the edge set of $\gG$ and $\gG_n$ is the subtree produced by deleting edge $(v_{n}, v_{n+1})$ and choosing connected component of $v_{n}$.
\end{proposition}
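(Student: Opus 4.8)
The plan is to chain together the four preceding lemmas, in exactly the spirit of how Proposition~\ref{prop:acc-time-srw} was obtained from Lemmas~\ref{lem:acc-time-srw-decompose} and~\ref{lem:acc-time-srw-neighbor}, but now with one extra layer of substitution because the begrudgingly backtracking walk remembers its previous node. I would start from the path decomposition of Lemma~\ref{lem:acc-time-bbrw-decompose},
\begin{equation*}
\Tt(v_0, v_N) = \Tt(v_0, v_0\rightarrow v_1) + \sum_{n=1}^{N-1}\bigl(\Tt(v_{n-1}\rightarrow v_n, v_{n+1}) - 1\bigr),
\end{equation*}
and then evaluate each of these $N$ summands in closed form using the remaining lemmas.

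For the leading term $\Tt(v_0, v_0\rightarrow v_1)$, which is the access time of the begrudgingly walk from $v_0$ to its neighbour $v_1$ (the first step carries no backtracking constraint), Lemma~\ref{lem:acc-time-bbrw-neighbor} gives $1 + 2|\gE(\gG_0)|\tfrac{d_{v_0}-1}{d_{v_0}}$, where $\gG_0$ is the component of $v_0$ after deleting $(v_0,v_1)$. For each remaining summand with $1\le n\le N-1$, I would apply Lemma~\ref{lem:decomposed-bbrw-def} with $(i,j,k)=(v_{n-1},v_n,v_{n+1})$:
\begin{equation*}
\Tt(v_{n-1}\rightarrow v_n, v_{n+1}) - 1 = \Tt(v_n, v_{n+1}) - \frac{d_{v_{n-1}}-1}{d_{v_n}}\,\Tt(v_{n-1};\gG_{v_{n-1}}) - \frac{2}{d_{v_n}},
\end{equation*}
where $\gG_{v_{n-1}}$ is precisely the subtree $\gG_{n-1}$ of the proposition (delete $(v_{n-1},v_n)$, keep $v_{n-1}$'s side), and similarly $\gG_{v_n}=\gG_n$.

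The last step is pure substitution: use Lemma~\ref{lem:acc-time-bbrw-neighbor} for $\Tt(v_n,v_{n+1})$ and Lemma~\ref{lem:return-time-bbrw} for the return time $\Tt(v_{n-1};\gG_{v_{n-1}})$. The one point requiring care, and the only place the argument is not completely mechanical, is the degree bookkeeping: in the pruned subtree $\gG_{v_{n-1}}$ the vertex $v_{n-1}$ has degree $d_{v_{n-1}}-1$, so Lemma~\ref{lem:return-time-bbrw} gives $\Tt(v_{n-1};\gG_{v_{n-1}})=\tfrac{2|\gE(\gG_{n-1})|}{d_{v_{n-1}}-1}$, and the prefactor $\tfrac{d_{v_{n-1}}-1}{d_{v_n}}$ cancels the $d_{v_{n-1}}-1$ cleanly, leaving $\tfrac{2|\gE(\gG_{n-1})|}{d_{v_n}}$. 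Each summand then becomes $1 + 2|\gE(\gG_n)|\tfrac{d_{v_n}-1}{d_{v_n}} - \tfrac{2|\gE(\gG_{n-1})|}{d_{v_n}} - \tfrac{2}{d_{v_n}}$, and since the leading $n=0$ term fits the pattern $1 + 2|\gE(\gG_n)|\tfrac{d_{v_n}-1}{d_{v_n}}$ with no correction terms, collecting and re-indexing yields
\begin{equation*}
\Tt(v_0,v_N) = \sum_{n=0}^{N-1}\Bigl(1 + 2|\gE(\gG_n)|\tfrac{d_{v_n}-1}{d_{v_n}}\Bigr) - \sum_{n=1}^{N-1}\frac{2|\gE(\gG_{n-1})|}{d_{v_n}} - \sum_{n=1}^{N-1}\frac{2}{d_{v_n}},
\end{equation*}
which is the claimed formula (with the proposition's index $l$ identified with $n$). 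Since there is no analytic content beyond these substitutions, I do not expect a genuine obstacle; the only things to watch are consistently distinguishing a vertex's degree in $\gG$ from its degree in the various pruned subtrees, and matching the proposition's $\gG_l$ to the $\gG_{v_{n-1}},\gG_{v_n}$ appearing in Lemmas~\ref{lem:decomposed-bbrw-def}--\ref{lem:acc-time-bbrw-neighbor}.
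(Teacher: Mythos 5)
Your proposal is correct and mirrors the paper's own proof: decompose via Lemma~\ref{lem:acc-time-bbrw-decompose}, rewrite each transition-conditioned term with Lemma~\ref{lem:decomposed-bbrw-def}, then substitute Lemma~\ref{lem:acc-time-bbrw-neighbor} for the neighbor access times and Lemma~\ref{lem:return-time-bbrw} for the return times. Your explicit bookkeeping of the degree drop in the pruned subtree and the resulting cancellation of $d_{v_{n-1}}-1$ is exactly the step the paper elides when it writes $\frac{d_{v_{n-1}}-1}{d_{v_n}}\Tt'(v_{n-1}) = \frac{2|\gE(\gG_{n-1})|}{d_{v_n}}$, so you have in fact made the argument slightly more careful than the original.
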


\begin{proof}
    By \cref{lem:acc-time-bbrw-decompose} and \cref{eq:acc-time-inter} of \cref{lem:decomposed-bbrw-def},
    \begin{align*}
        \Tt(v_0, v_{N}) 
        &= \Tt(v_0, v_1) + \sum_{n=1}^{N-1} \Bigg\{ \Tt(v_{n-1} \rightarrow v_{n}, v_{n+1}) - 1 \Bigg\} \\
        &= \Tt(v_0, v_1) + \sum_{n=1}^{N-1} \Bigg\{ 
                \Tt(v_{n}, v_{n+1}) - \frac{d_{v_{n-1}}-1}{d_{v_{n}}} \Tt(v_{n-1}) - \frac{2}{d_{v_{n}}} 
            \Bigg\} \\
        &= \sum_{n=0}^{N-1} \Tt(v_{n}, v_{n+1}) - \sum_{n=1}^{N-1} \frac{d_{v_{n-1}}-1}{d_{v_{n}}} \Tt(v_{n-1}) - \sum_{n=1}^{N-1} \frac{2}{d_{v_{n}}} \\
        &=  \sum_{n=0}^{N-1} \Bigg( 1 + 2|\gE(\gG_l)| \cdot \frac{d_{v_{n}} - 1}{d_{v_{n}}} \Bigg) 
            - \sum_{n=1}^{N-1} \frac{2|\gE(\gG_{l-1})|}{d_{v_{n}}}
            - \sum_{n=1}^{N-1} \frac{2}{d_{v_{n}}}\,.
    \end{align*}
\end{proof}

\newpage
\subsection{Proof of Proposition \ref{prop:nbrw}}
Finally, we compare the access time of two random walks in a tree. Recall \cref{prop:nbrw}.

\BBRWAT*

\begin{proof}
    Let $\gE(\gG)$ be the edge set of $\gG$ and $\gG_n$ be the subtree produced by deleting edge $(v_{n}, v_{n+1})$ and choosing connected component of $v_{n}$. Then,

    \begin{align*}
        \Tt(v_0, v_{N}) -\mathtt{t}(v_0, v_{N}) 
        &= 
            \sum_{n=0}^{N-1} \Bigg( 1 + 2|\gE(\gG_n)| \cdot \frac{d_{v_{n}} - 1}{d_{v_{n}}} \Bigg) 
            - \sum_{n=1}^{N-1} \frac{2|\gE(\gG_{n-1})|}{d_{v_{n}}}
        \\
        &   - \sum_{n=1}^{N-1} \frac{2}{d_{v_{n}}}
            - \sum_{n=0}^{N-1} (1 + 2|\gE(\gG_n)|) 
        \\
        &= - \sum_{n=0}^{N-1} \frac{2|\gE(\gG_n)|}{d_{v_{n}}} 
            - \sum_{n=1}^{N-1} \frac{2|\gE(\gG_{n-1})|}{d_{v_{n}}}
            - \sum_{n=1}^{N-1} \frac{2}{d_{v_{n}}}
        \leq 0\,.
    \end{align*}

Therefore the access time of two nodes is always less or equal in begrudgingly backtracking random walks than simple random walks, where equality holds for random walks with length 1.

\end{proof}

\newpage
\section{Proofs for Section \ref{sec:osq-sensitivity}}
\label{appx:sensitivity}
\subsection{Preliminaries}
Let $\mathcal{G}$ be a graph with a set of $n$ vertices $\mathcal{V}$, a set of m edges $\mathcal{E} \in \mathcal{V}^2$. We use $ x_{i}$ to denote the node-wise feature for ${i \in \mathcal{V}}$, and $d_{i}$ for the degree of node $ i \in \mathcal{V}$. The adjacency matrix $A \in \mathbb{R}^{n \times n}$ encodes the connectivity of graph $\mathcal{G}$.
For node $i \in \mathcal{V}$, we define the set of incident outgoing edges of $i$ as $N_e^+(i)$, the set of incident incoming edges of $i$ as $N_{e}^{-}(i)$, and let $N_{e}(i) = N_{e}^{+}(i) \cup N_{e}^{-}(i)$ be the set of all incident edges of node $i$.
Also, recall the non-backtracking matrix $B\in\{0,1\}^{2|\mathcal{E}|\times 2|\mathcal{E}|}$ and the incidence matrix $C\in \mathbb{R}^{2|\mathcal{E}| \times |\mathcal{V}|}$:
\begin{equation*}
B_{(\ell \rightarrow k), (j\rightarrow i)} = \begin{cases}
    1 \quad \text{if $k=j, \ell \neq i$}\\
    0 \quad \text{otherwise}
\end{cases},
\qquad 
C_{(k\rightarrow j), i} = 
\begin{cases}
    1 \quad \text{if $j=i$ or $k=i$}\\
    0 \quad \text{otherwise}
\end{cases}.
\end{equation*}
We also define $D_{out}$ and $D_{in}$ as the out-degree and in-degree matrices of NBA-GNN, respectively, counting the number of outgoing and incoming edges for each edge. These are diagonal matrices with $(D_{out})_{(j\rightarrow i), (j\rightarrow i)} = \sum_{\ell \rightarrow k}B_{(j \rightarrow i), (\ell \rightarrow k)}$ and $(D_{in})_{(j\rightarrow i), (j\rightarrow i)} = \sum_{\ell \rightarrow k}B_{(\ell \rightarrow k),(j \rightarrow i)}$, for each index $j\rightarrow i$. Next, we introduce $\widehat{B}$ as the normalized non-backtracking matrix augmented with self-loops: $\widehat{B} = (D_{out} + I)^{-\frac{1}{2}} (B+I) (D_{in} + I)^{-\frac{1}{2}}$. Then one obtains the following sensitivity bound of NBA-GNN. We also let $\tilde{C}$ denote a matrix where $\tilde{C}_{(k\rightarrow j), i} = C_{(k\rightarrow j), i}+C_{(j\rightarrow k), i}$.

Consequently, one can express our NBA-GNN updates:
\begin{equation}\label{eq:gnn-sens}
    \msg{t+1}{j}{i} = \phi^{(t)}\left(
    \msg{t}{j}{i}, \sum_{(k, \ell) \in \mathcal{E}} \widehat{B}_{(\ell \rightarrow k),(j \rightarrow i)} \psi^{(t)} \left(\msg{t}{\ell}{k}, \msg{t}{j}{i} \right)
    \right)\,,
\end{equation}
where $\phi^{(t)}$ and $\psi^{(t)}$ corresponds to the update and the aggregation as described in \cref{eq:nba-gnn}. Next, the node-wise aggregation step can be described as follows:
\begin{equation*}
    h_{i} = \sigma\left(
    \sum_{(j, k) \in \mathcal{E}} C_{(k\rightarrow j), i}\rho\left(h_{k\rightarrow j}^{(T)}\right), 
    \sum_{(j, k) \in \mathcal{E}}C_{(j\rightarrow k), i}\rho\left(\msg{T}{j}{k}\right)
    \right)\,.
\end{equation*}

\subsection{Proof of Lemma \ref{lmm:nonback-bound}}\label{appx:pf-nba-bound}

The original sensitivity bound from \cite{topping2022understanding} for a hidden representation of node $j$ and an initial feature of node $i$ is defined as the following:
\Sensbound*

Now, we show that the sensitivity bound for non-backtracking GNNs can be defined as following. Following \citet{topping2022understanding}, we assume the node features and hidden representations are scalar for better understanding.
\NBABound*

\begin{proof}
Just a straight calculation using the chain rule is enough to derive the above upper bound.
\begin{align}
    \norm{ \dfrac{\partial h_{j}^{}} {\partial x_{i}} }_1
    &= \norm{
        \partial_{1}\sigma \partial_{2}\rho \left( \sum_{k \rightarrow \ell \in N_{e}^{-}(j)} \dfrac{\partial \msg{T-1}{k}{\ell}}{\partial x_{i}} \right)
        + \partial_{1}\sigma \partial_{3}\rho \left( \sum_{k \rightarrow \ell \in N_{e}^{+}(j)} \dfrac{\partial \msg{T-1}{k}{\ell}}{\partial x_{i}} \right)
    }_1  \cr
    &= \norm{
        \partial_{1}\sigma \partial_{2}\rho \left( \sum_{k \rightarrow \ell \in N_{e}^{-}(j)} \dfrac{\partial \msg{T-1}{k}{\ell}}{\partial x_{i}} \right)
    }_1  \cr
    &\leq 
    \alpha\beta \left(
        \sum_{k \rightarrow \ell \in N^{-}_{e}(j)} 
        \norm{\dfrac{\partial \msg{T-1}{k}{\ell}}{\partial x_{i}}}_1 
    \right) \label{edge_bound},
\end{align}
where the bound for the derivatives were $\norm{\nabla \sigma}_1 \leq \alpha, \norm{\nabla \rho}_1 \leq \beta$.

Thus considering the message update from \cref{eq:gnn-sens},
\begin{align*}
    \dfrac{\partial \msg{T-1}{k}{\ell}}{\partial x_{i}} 
    &= 
    \partial_1 \phi^{(T-2)} \dfrac{\partial \msg{T-2}{k}{\ell}}{\partial {x_{i}}} +
    \partial_2 \phi^{(T-2)} \left( \sum_{k' \rightarrow \ell' \in N_{e}^{-}(k)} \widehat{B}_{k' \rightarrow \ell', k \rightarrow \ell} \dfrac{\partial \msg{T-2}{k}{\ell}}{\partial x_{i}}  \right)
    \\
    &= 
    \partial_2 \phi^{(T-2)} \left( \sum_{k' \rightarrow \ell' \in N_{e}^{-}(k)} \widehat{B}_{k' \rightarrow \ell', k \rightarrow \ell} \dfrac{\partial \msg{T-2}{k}{\ell}}{\partial x_{i}} \right)\,,
\end{align*}
since the distance between node $i$ and node $j$ is at least $T$, therefore $\dfrac{\partial \msg{T-2}{k}{\ell}}{\partial {x_{i}}}=0$.

Now, let the path from node $i$ to node $j$ as $s(i,j)$, where $s_{t}$ denotes the $t$-th node in the walk $s$, i.e., $s_{0}=i, s_{T}=j $. Using the bound of the derivative of functions, we can iterate like the following.
\begin{align} 
    & \norm{\dfrac{\partial \msg{T-1}{k}{\ell}}{\partial x_{i}}}_1\cr
    & \leq 
    \alpha\beta \left(
    \sum_{k' \rightarrow \ell' \in \mathcal{N}_{e}^{-}(k)}
    \widehat{B}_{k'\rightarrow \ell', k \rightarrow l}
    \norm{\dfrac{\partial \msg{T-2}{k}{l}}{\partial x_{i}}}_1
    \right) \cr
    & \leq (\alpha\beta)^{T-1} \Bigg(
        \sum_{(s_{0}, \dots, s_{T}) \in s(i, j)}
        \widehat{B}_{s_{0} \rightarrow s_{1}, s_{1} \rightarrow s_{2}} \cdots \widehat{B}_{s_{T-2} \rightarrow s_{T-1}, s_{T-1} \rightarrow s_{T}} \cdot
        \norm{\dfrac{\partial \msg{0}{s_{0}}{s_{{1}}}}{\partial x_{i}}}_1
    \Bigg) 
    \cr \quad \quad
    &= (\alpha\beta)^{T-1} \Bigg(
        \sum_{(s_{0}, \dots, s_{T}) \in s(i, j)}
        \prod^{T-1}_{t=1} \widehat{B}_{s_{t-1} \rightarrow s_{t}, s_{t} \rightarrow s_{t+1}} 
        \norm{\dfrac{\partial \msg{0}{s_{0}}{s_{{1}}}}{\partial x_{i}}}_1
    \Bigg) 
    \cr \quad \quad
    & \leq (\alpha\beta)^{T-1} \gamma \Biggl( 
        \sum_{(s_{0}, \dots, s_{T}) \in s(i, j)}
        \biggl( \prod^{T-1}_{t=1} \widehat{B}_{s_{t-1} \rightarrow s_{t}, s_{t} \rightarrow s_{t+1}} \biggr)
    \Biggr)\,. \label{final_inequal}
\end{align}

Substitute the inequality \cref{final_inequal} into \cref{edge_bound} to get the final result. Then, we can get
\begin{equation*}
    \begin{split}
        \norm{\dfrac{\partial h_{j}^{}}{\partial x_{i}}}_1
        &\leq (\alpha \beta)^{T} \gamma\Biggl(
            \sum_{(s_{0}, \dots, s_{T}) \in s(i, j)}
            \biggl( \prod^{T-1}_{t=1} \widehat{B}_{s_{t-1} \rightarrow s_{t}, s_{t} \rightarrow s_{t+1}} \biggr)
        \Biggr) \\
        &= (\alpha \beta)^{T} \gamma \sens \,,
    \end{split}
\end{equation*}
since paths can be expressed using the power of adjacency-type matrix.
\end{proof}

\subsection{Proof of Proposition \ref{thm:sens-bound}}
We have defined the sensitivity bound for NBA-GNNs. Now, we show that the sensitivity bound of NBA-GNNs are bigger than the sensitivity bound of GNNs.

\expBound*

\begin{proof}
Identical to the notations above, we denote the list of nodes from node $i$ to node $j$ as path $s(i, j)$ where $s(i, j)=(s_{0}=i, s_{1}, \cdots, s_{T-1}, s_{T}=j)$.
\begin{align*}
    (\widehat{A}^{T})_{j, i}
    &= \sum_{(s_{0}, \dots, s_{T}) \in s(i, j)} \widehat{A}_{s_{0}, s_{1}} \cdots \widehat{A}_{s_{T-1}, s_{T}} \\
    &= \sum_{(s_{0}, \dots, s_{T}) \in s(i, j)} (d_{i}+1)^{-\frac{1}{2}} \cdot (d_{j}+1)^{-\frac{1}{2}} \cdot \prod_{t=1}^{T-1} (d_{s_{t}}+1)^{-1}\,.
    \\
    \sens
    &= \sum_{(s_{0}, \dots, s_{T}) \in s(i, j)}
    d_{s_{0}}^{-\frac{1}{2}} \cdot \widehat{B}_{s_{0} \rightarrow s_{1}, s_{1} \rightarrow s_{2}} \cdots \widehat{B}_{s_{T-2} \rightarrow s_{T-1}, s_{T-1} \rightarrow s_{T}} \cdot d_{s_{T}}^{-\frac{1}{2}} \\
    &= \sum_{(s_{0}, \dots, s_{T}) \in s(i, j)} d_{s_{0}}^{-\frac{1}{2}} \cdot d_{s_{1}}^{-1} \cdots d_{s_{T-1}}^{-1} \cdot d_{s_{T}}^{-\frac{1}{2}} \\
    &= \sum_{(s_{0}, \dots, s_{T}) \in s(i, j)} d_{s_{0}}^{-\frac{1}{2}} \cdot d_{s_{T}}^{-\frac{1}{2}} \cdot \prod_{t=1}^{T-1} d_{s_{t}}^{-1}\,.
\end{align*}

Now, for a path $(s_{0} = i, \dots, s_{T} = j)$,
\begin{align*}
    (d_{i} + 1)^{-\frac{1}{2}} (d_{j} + 1)^{-\frac{1}{2}} \prod_{t=1}^{T-1} (d_{s_{t}} + 1)^{-1}
    \leq
    d_{i}^{-\frac{1}{2}} d_{j}^{-\frac{1}{2}} \prod_{t=1}^{T-1} d_{s_{t}}^{-1} 
\end{align*}

Each term in $\sens$ is larger than $(\widehat{A}^{T})_{j, i}$, thus $\sens \geq (\widehat{A}^{T})_{j, i}$ is trivial.

For $d$-regular graphs, $d_{i} = d, \forall i \in \mathcal{V}$. Therefore the sensitivity bound can can be written as following:
\begin{align*}
    (\widehat{A}^{T})_{j, i} = (d+1)^{-T}, \qquad\sens = d^{-T}.
\end{align*}

So $\sens$ decays slower by $O(d^{-T})$, while $(\widehat{A}^{T})_{j, i}$ decays with $O((d+1)^{-T})$.

\end{proof}

\newpage
\section{Proofs for Section \ref{sec:nbt_spectrum}}
\label{appx:expressivity}
To assess the expressive capabilities, we initially make an assumption about the graphs, considering that it is generated from the Stochastic Block Model (SBM), which is defined as follows:
\begin{definition}
\textbf{Stochastic Block Model (SBM)} is generated using parameters $(n, K, \alpha, P)$, where $n$ denotes the number of vertices, $K$ is the number of communities, $\alpha=(\alpha_1,...,\alpha_K)$ represents the probability of each vertex being assigned to communities $\mathcal{V}_1,...,\mathcal{V}_K$, and $P_{ij}$ denotes the probability of an edge $(v,w)\in \mathcal{E}$ between $v\in\mathcal{V}_i$ and $w\in\mathcal{V}_j$.
\end{definition}

Numerous studies have focused on the problem of achieving exact recovery of communities within the SBM. However, these investigations typically address scenarios in which the average degree is at least on the order of $\Omega\left({\log n}\right)$ \citep{abbe2017community}. It is well-established that the information-theoretic limit in such cases can be reached through the utilization of the spectral method, as demonstrated by \citet{yun2016optimal}. In contrast, when dealing with a graph characterized by the average degree much smaller, specifically $o\left({\log n}\right)$, recovery using the graph spectrum becomes a more challenging endeavor. This difficulty arises due to the fact that the $n^{1-o(1)}$ largest eigenvalues and their corresponding eigenvectors are influenced by the high-degree vertices, as discussed in \citet{benaych2019largest}.

However, real-world benchmark datasets often fall within the category of very sparse graphs. For example, the citation network dataset discussed in \citet{sen2008collective} has an average degree of less than three. In such scenarios, relying solely on an adjacency matrix may not be an efficient approach for uncovering the hidden graph structure. Fortunately, an alternative strategy is available by utilizing a non-backtracking matrix.

\subsection{Proof of Proposition \ref{thm:exp}}
Let's rephrase the formal version of \cref{thm:exp} as follows:

\textbf{\cref{thm:exp}.} \textit{ Consider a Stochastic Block Model (SBM) with parameters $\left(n,2,\left(\frac{1}{2},\frac{1}{2}\right),\left(\frac{a}{n},\frac{b}{n}\right)\right)$, where $(a+b)$ satisfies the conditions of being at least $\omega(1)$ and $n^{o(1)}$. In such a scenario, the non-backtracking graph neural network can accurately map from graph $\mathcal{G}$ to node labels, with the probability of error decreasing to 0 as $n\to\infty$.}

In \citet{stephan2022non}, the authors demonstrate that the non-backtracking matrix exhibits valuable properties when the average degree of vertices in the graph satisfies $\omega(1)$ and $n^{o(1)}$. When $K=2$, we define a function $\sigma(v)$ for $v\in\mathcal{V}$, such that $\sigma(v)=1$ if $v$ is in the first class, and $\sigma(v)=-1$ in the second class. Then, they establish the following proposition:
\begin{proposition}
\citep{stephan2022non} Suppose we have a SBM with parameters $\left(n,2,\left(\frac{1}{2},\frac{1}{2}\right),\left(\frac{a}{n},\frac{b}{n}\right)\right)$. In this case, we have two eigenvalues $\mu_1>\mu_2$ of $ndiag(\alpha)P$, and the eigenvector $\phi_2$ corresponding to $\mu_2$, where $v$-th component is set to $\sigma(v)$. Then, for any $n$ larger than an absolute constant, the eigenvalues $\lambda_1$ and $\lambda_2$ of the non-backtracking matrix $B$ satisfies $|\lambda_i-\mu_i| = o(1)$, and all other eigenvalues of $B$ are confined in a circle with radius $(1+o(1))\sqrt{\frac{a+b}{2}}$. Also, there exists an eigenvector $\nu_2$ of the non-backtracking matrix $B$ associated with $\lambda_2$ such that
$$\langle \nu_2 , \xi_2 \rangle \ge \sqrt{1-\frac{8}{(a+b)(a-b)^2}} + o(1) \coloneqq 1-f(a,b)$$
where $\xi_2 = \frac{T\Theta \phi_2}{\|T\Theta \phi_2\|}$, $T\in \{0,1\}^{2m\times n}, T_{ei}=1\{e_2=i\}$ and $\Theta\in \{0,1\}^{n\times 2}, \Theta_{ij}=1$ if the vertex $i$ is in the $j$-th community, and $0$ otherwise.
\label{prop:nonspec}
\end{proposition}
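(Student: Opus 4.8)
The plan is to establish Proposition~\ref{prop:nonspec} by the linearization-and-local-coupling strategy developed for non-backtracking spectra of sparse random graphs, specialized to the two-block SBM in the regime $\omega(1) \le a+b \le n^{o(1)}$. Rather than analyzing $B$ directly (it is non-normal, so naive eigenvalue perturbation is unavailable), I would study powers $B^{\ell}$ at a scale $\ell \asymp c\log n$ with $c$ small enough that the $\ell$-neighborhood of a typical vertex is tree-like. The expectation of $B^{\ell}$ counts non-backtracking walks and is governed by the mean connectivity operator $M = n\,\mathrm{diag}(\alpha)P$, whose eigenvalues are exactly $\mu_1 > \mu_2$. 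The structural heart of the argument is a decomposition $B^{\ell} = \sum_{i=1}^{2}\mu_i^{\ell}\,\chi_i\zeta_i^{\!\top} + R_{\ell}$, where $\chi_i$ and $\zeta_i$ are right/left lifts of the community eigenvectors $\phi_i$ to the edge space $\mathbb{R}^{2m}$ (built from $T\Theta\phi_i$ and its time-reversal), and $R_{\ell}$ is a remainder with $\|R_{\ell}\| \le (1+o(1))\rho^{\ell/2}$, where $\rho = (a+b)/2$ is the branching ratio.

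To bound $\|R_{\ell}\|$ I would use the trace/moment method: estimate $\mathbb{E}\,\mathrm{Tr}\big((R_{\ell}R_{\ell}^{\!\top})^{m}\big)$ by enumerating non-backtracking walks of length $2\ell m$, splitting the sum into a \emph{tangle-free} part (walks whose traversed subgraph contains at most one cycle), which concentrates around $\rho^{\ell}$ after subtracting the rank-two expectation, and a \emph{tangled} part, which is shown to be negligible because short cycles through a fixed vertex occur with probability $o(1)$. This is precisely where the hypothesis $a+b = n^{o(1)}$ enters: it keeps the expected cycle count small enough that the local neighborhood couples with a multitype Galton--Watson tree with offspring mean $M$, so that the leading constant in the $\rho^{\ell/2}$ bound is sharp.

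Next I would transfer the information from $B^{\ell}$ back to the spectrum of $B$ via the Ihara--Bass identity: $\lambda$ is an eigenvalue of $B$ iff the deformed vertex-space matrix $H(\lambda) = \lambda^2 I - \lambda A + (D-I)$ is singular, and $A, D$ concentrate around their rank-two means in the sense needed. The decomposition of $B^{\ell}$ forces every eigenvalue of $B$ of modulus $> \rho^{1/2}$ to lie within $o(1)$ of a real zero of $\det H$, and those zeros are $\mu_1, \mu_2$ up to $o(1)$; conversely a Rouch\'e / winding-number argument applied to $\det H(\lambda)$ along a contour separating each $\mu_i$ from the disk of radius $\rho^{1/2}$ shows each $\mu_i$ genuinely produces an eigenvalue $\lambda_i$ with $|\lambda_i - \mu_i| = o(1)$, while all remaining eigenvalues are confined to the disk of radius $(1+o(1))\sqrt{\rho} = (1+o(1))\sqrt{(a+b)/2}$. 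For the quantitative alignment, applying $B^{\ell}$ to the unit vector $\xi_2 = T\Theta\phi_2/\|T\Theta\phi_2\|$ gives $\mu_2^{\ell}\langle\zeta_2,\xi_2\rangle\,\chi_2$ plus an error of norm $\le (1+o(1))\rho^{\ell/2}$; normalizing and iterating shows the resulting direction is within $O(\rho^{\ell/2}/\mu_2^{\ell}) = o(1)$ of the true eigenvector $\nu_2$, and a Davis--Kahan-type estimate phrased through $H(\lambda)$ near $\lambda = \mu_2$ converts the ratio of subdominant mass $\rho^{1/2}$ to spectral gap $\mu_2 - \sqrt{\rho} \asymp (a-b)/2$ into the explicit constant, yielding $\langle\nu_2,\xi_2\rangle \ge \sqrt{1 - 8/((a+b)(a-b)^2)} + o(1)$.

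I expect the main obstacle to be the tangled-walk estimate in the trace method: one must make the cycle-counting uniform as the degree grows like $n^{o(1)}$ while keeping the leading constant sharp enough to land exactly the radius $(1+o(1))\sqrt{(a+b)/2}$ and the constant in $f(a,b)$ — any slack there blurs both the spectral-separation claim and the eigenvector bound. The full argument, including the delicate tangle-free decomposition and the resolvent estimates, is carried out in \citep{stephan2022non,bordenave2015non}, on which this statement relies.
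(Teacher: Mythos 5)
The statement you are proving is not proved in the paper at all: it is quoted verbatim, with attribution, from \citet{stephan2022non}, and the paper's surrounding argument simply uses it as an imported black box (feeding into Lemma~\ref{lem:topK} and Theorem~\ref{thm:exp}). So there is no paper-internal proof to measure your attempt against; the only fair comparison is with the original sources. Judged that way, your outline does identify the right strategy used there and in \citet{bordenave2015non}: work with $B^{\ell}$ at scale $\ell\asymp\log n$ where neighborhoods couple with a multitype Galton--Watson tree, decompose $B^{\ell}$ into a rank-two part built from lifts of $\phi_1,\phi_2$ plus a remainder, control the remainder by a trace/moment count over tangle-free versus tangled non-backtracking walks, and then read off the two outlier eigenvalues, the bulk radius $(1+o(1))\sqrt{(a+b)/2}$, and the alignment of $\nu_2$ with $\xi_2$. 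You also correctly flag the tangled-walk estimate in the $n^{o(1)}$-degree regime as the technical crux.

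That said, your write-up is a roadmap rather than a proof: every quantitatively hard step (the sharp $(1+o(1))\rho^{\ell/2}$ remainder bound, the tangle-free/tangled split, the extraction of the constant in $f(a,b)$) is asserted and then deferred to the very references being cited, which leaves it no more self-contained than the paper's citation. Two details are also presented as easier than they are. First, the remainder bounds in \citet{bordenave2015non} and \citet{stephan2022non} come with sub-polynomial (e.g.\ polylogarithmic or $n^{o(1)}$) prefactors, not a clean $(1+o(1))\rho^{\ell/2}$; the conclusion survives because the outliers live at scale $\mu_i^{\ell}$ with $\mu_2>\sqrt{\rho}$, but the sharpness you invoke for the bulk radius needs the careful tangle-free bookkeeping, not a generic trace bound. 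Second, a Davis--Kahan-type argument is not directly available for the non-normal matrix $B$; the cited proofs instead obtain the eigenvector correlation by showing $B^{\ell}\xi_2/\lVert B^{\ell}\xi_2\rVert$ is an approximate eigenvector and bounding the residual against the gap between $\mu_2$ and the bulk, and your hybrid route through the Ihara--Bass identity plus Rouch\'e, while plausible, is a different (and unverified) path for the eigenvector statement. None of this is a fatal error, but as it stands your proposal reconstructs the shape of the known proof without supplying the estimates that make it true.
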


The proposition above highlights that the non-backtracking matrix possesses a spectral separation property, even in the case of very sparse graphs. Moreover, the existence of an eigenvector $\nu_2$ such that $\langle \nu_2, \xi_2 \rangle=1-o(1)$ suggests that this eigenvector contains information about the community index of vertices. The foundation for these advantageous properties of the non-backtracking matrix $B$ in sparse scenarios can be attributed to the observation that the matrix $B^k$ shares similarities with the $k$-hop adjacency matrix, while $A^k$ is predominantly influenced by high-degree vertices. Consequently, we can establish the following lemma:

\begin{restatable}{lemma}{lemtopK}
    \label{lem:topK}
    Suppose we have a SBM with parameters defined in \cref{prop:nonspec}. Then, there exists a function of the eigenvectors of the non-backtracking matrix that can accurately recover the original community index of vertices.
\end{restatable}

The proof for \cref{lem:topK} can be found in \cref{appx:topK} In the following, we will demonstrate that the non-backtracking GNN can compute an approximation to the top $K$ eigenvectors mentioned in \cref{lem:topK}. To achieve this, we first require the following lemma:

\begin{lemma}
Assuming that the non-backtracking matrix $B$ has a set of orthonormal eigenvectors $\nu_i$ with corresponding eigenvalues $\lambda_1>...>\lambda_K \ge \lambda_{K+1} \ge ... \ge \lambda_{2m}$, then there exists a sequence of convolutional layers in the non-backtracking GNNs capable of computing the eigenvectors of the non-backtracking matrix.
\label{lem:nbt_conv}
\end{lemma}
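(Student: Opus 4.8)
The idea is to implement the classical (block) power method for the leading eigenvectors of $B$ inside a stack of non-backtracking convolutions, delegating the two global steps — vector normalization and orthonormalization of a frame — to the pooling/readout maps. First, one NBA layer already realizes a single application of $B$: instantiating \cref{eq:nba-gnn} with the linear choices $\psi^{(t)}(h_{k\to j},h_{j\to i})=h_{k\to j}$ and $\phi^{(t)}(h_{j\to i},S)=\sum_{s\in S}s$ — both admissible special cases of, e.g., the GIN backbone — gives $h^{(t+1)}_{j\to i}=\sum_{k\in\mathcal N(j)\setminus\{i\}}h^{(t)}_{k\to j}=(B^{\top}h^{(t)})_{j\to i}$ by the definition of $B$, while the weighted self-loop form of \cref{eq:nba-gcn} realizes $c_1 I+c_2\widehat B$ for arbitrary scalars $c_1,c_2$. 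Since shifting and rescaling the operator leaves its eigenvectors unchanged (and $B,B^{\top}$ share eigenvectors under the normality hypothesis of the lemma), we may assume one layer applies the non-backtracking operator to the $2|\mathcal E|$-dimensional stack of edge features, and $K$ such channels can be run in parallel.

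\textbf{Power iteration for one eigenvector.} Fix $v^{(0)}$ with $\langle v^{(0)},\nu_1\rangle\neq0$ and write $v^{(0)}=\sum_i c_i\nu_i$ in the assumed orthonormal eigenbasis. Then $T$ layers output $v^{(T)}=B^{T}v^{(0)}=\sum_i c_i\lambda_i^{T}\nu_i$, so $\bigl\|v^{(T)}/\|v^{(T)}\|-\operatorname{sgn}(c_1\lambda_1^{T})\,\nu_1\bigr\|\le C\,(|\lambda_2|/|\lambda_1|)^{T}\to0$ since $|\lambda_1|>|\lambda_2|$. The single non-message-passing operation, division by $\|v^{(T)}\|$, is a scalar computed from the graph-level pooling (a sum of squared edge features), inverted and square-rooted inside the readout of \cref{eq:final-agg} and broadcast back to all edges.

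\textbf{Subspace iteration for the top $K$, and the main obstacle.} Run $K$ channels $V=[v_1,\dots,v_K]$ in generic position and, after each convolution $V\mapsto BV$, insert an orthonormalization $V\mapsto V(V^{\top}V)^{-1/2}$. The Gram matrix $V^{\top}V\in\mathbb{R}^{K\times K}$ is produced by the permutation-invariant pooling as a sum over the $2|\mathcal E|$ edges of the pairwise products of channel values, and its inverse square root is a fixed smooth map of a bounded-size matrix; both are reproduced to arbitrary accuracy by the backbone MLPs (universal approximation on the compact set in which the normalized iterates live), and the subsequent right-multiplication by this $K\times K$ matrix is a linear channel mixing performed by $\phi$. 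Standard subspace-iteration analysis, using the strict ordering $\lambda_1>\cdots>\lambda_K$ and the spectral gap $\lambda_K>\lambda_{K+1}$ — which is precisely what \cref{prop:nonspec} guarantees with high probability, the top eigenvalues detaching from a bulk of radius $(1+o(1))\sqrt{(a+b)/2}$ — then forces each column of $V^{(T)}$ to converge, geometrically in $T$, to the corresponding $\nu_i$ up to sign, hence to within any prescribed $\epsilon$ for $T$ large enough. The genuinely delicate part of a full write-up is this representability claim: pinning down the compact domain the iterates never leave, invoking universal approximation for the bilinear Gram map and for the matrix inverse-square-root, and using the distinctness of the $\lambda_i$ to remove the within-eigenspace rotation ambiguity.

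\textbf{Existence shortcut.} Since the lemma only asserts existence, the in-network orthonormalization can be bypassed: $T$ plain NBA convolutions output $W=B^{T}V^{(0)}$, which lies within a geometrically small distance of the top-$K$ eigenspace, and composing with one further layer implementing the \emph{fixed} linear map $M$ carrying the limiting frame to $[\nu_1,\dots,\nu_K]$ — a map depending only on the fixed graph and the fixed initialization — recovers the eigenvectors to arbitrary precision, with $M$ absorbed into the last layer's weights.
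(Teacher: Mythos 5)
Your main argument --- orthogonal (block power) iteration on the non-backtracking operator with Gram--Schmidt interleaved between convolutions, using the spectral gap $\lambda_1 > \cdots > \lambda_K \ge \lambda_{K+1}$ to force each normalized column to converge geometrically to the corresponding $\nu_j$ --- is exactly the route the paper takes (its proof is an explicit error analysis of precisely this deflate-by-Gram--Schmidt scheme, by induction on the column index), so the plan is sound and matches the paper. One caution on your ``existence shortcut'': the unnormalized iterate $W = B^{\top}V^{(0)}$ does not converge to a frame --- after column-wise normalization every column aligns with $\nu_1$ --- so the postcomposed map $M$ carrying $W$ to $[\nu_1,\dots,\nu_K]$ must be $T$-dependent with norm growing like $(\lambda_1/\lambda_K)^{T}$, not a fixed map absorbed into the last layer; the interleaved Gram--Schmidt version you describe first is the one that actually works and is what the paper analyzes.
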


For the proof of \cref{lem:nbt_conv}, please refer to \cref{appx:nbt_conv}. With this lemma in mind, we can observe that a sequence of convolutional layers, followed by a multilayer perceptron, can approximate the function outlined in \cref{lem:topK}, leveraging the universal approximation theorem. This argument leads to \cref{thm:exp}.

\subsection{Proof of Proposition \ref{thm:exp2}}
Let's rephrase the formal version of \cref{thm:exp2} as follows:

\textbf{\cref{thm:exp2}.} \textit{Consider two graphs, one generated from a SBM ($\mathcal{G}$) with parameters $\left(n,2,\left(\frac{1}{2},\frac{1}{2}\right),\left(\frac{a}{n},\frac{b}{n}\right)\right)$ and the other from an Erdős–Rényi model ($\mathcal{H}$) with parameters $(n,\frac{c}{n})$, for some constants $a,b,c>1$. When $(a-b)^2 > 2(a+b)$, the non-backtracking graph neural network is capable of distinguishing between $\mathcal{G}$ and $\mathcal{H}$ with probability tending to 1 as $n\to\infty$.}

To establish \cref{thm:exp2}, we rely on the following \cref{prop:eigenvalues} from \citet{bordenave2015non}:

\begin{proposition} \citep{bordenave2015non}
Suppose we have two graphs $\mathcal{G}$ and $\mathcal{H}$ as defined in the formal statement of Theorem \ref{thm:exp2}. Then, the eigenvalues $\lambda_i(B)$ of the non-backtracking matrix satisfy the following:
$$\lambda_1(B_\mathcal{G})=\frac{a+b}{2}+o(1), \lambda_2(B_\mathcal{G})=\frac{a-b}{2}+o(1),\quad\text{and}\quad |\lambda_k(B_\mathcal{G})|\le\sqrt{\frac{a+b}{2}}+o(1) \quad \text{for } k>2$$
$$\lambda_1(B_\mathcal{H})=c+o(1) \quad\text{and}\quad |\lambda_2(B_\mathcal{H})|\le\sqrt{c}+o(1)$$
\label{prop:eigenvalues}
\end{proposition}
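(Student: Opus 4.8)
The statement is precisely the spectral separation result of \citet{bordenave2015non}, so the plan is to follow their strategy: reduce the (non-normal) spectral problem to a controlled count of non-backtracking walks on a graph that is, locally, a multi-type Poisson Galton--Watson tree. Write $B$ for the $2m\times 2m$ non-backtracking matrix on directed edges, so that $(B^k)_{e,f}$ counts non-backtracking walks of length $k$ from $f$ to $e$. The governing object is the mean offspring (branching) matrix of the local tree: for $\mathcal{H}$ it is the scalar $c$, and for $\mathcal{G}$ it is $M=\tfrac12\begin{pmatrix} a & b\\ b & a\end{pmatrix}$, with eigenvalues $\mu_1=\tfrac{a+b}{2}$ and $\mu_2=\tfrac{a-b}{2}$ and spectral radius $\rho(M)=\mu_1$. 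The target assertion is the general principle that each eigenvalue $\mu_i$ of $M$ with $|\mu_i|>\sqrt{\rho(M)}$ reappears as an eigenvalue of $B$ up to $o(1)$, while every other eigenvalue of $B$ lies in the disk of radius $\sqrt{\rho(M)}+o(1)$; specializing $M=c$ for $\mathcal{H}$ and $M$ as above for $\mathcal{G}$ then gives the four displayed estimates (the outlier $\lambda_2(B_{\mathcal{G}})\approx\tfrac{a-b}{2}$ appearing exactly because $(a-b)^2>2(a+b)$ is the condition $|\mu_2|>\sqrt{\mu_1}$).

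\textbf{Steps.} First I would fix $\ell=\lfloor\kappa\log n\rfloor$ for a small constant $\kappa>0$ and establish a local-structure lemma: with probability $1-o(1)$, simultaneously over all directed edges $e$, the depth-$\ell$ ball around $e$ contains at most one cycle ($\ell$-\emph{tangle-free}) and couples with the $\ell$-generation Poisson (resp.\ two-type Poisson) Galton--Watson tree; this is a first-moment/union-bound estimate on the expected number of short cycles, using that $a,b,c$ are constant. Second, for the bulk bound I would pass from $B$ to its restriction $B^{(\ell)}$ to tangle-free walks of length $\ell$ and to the \emph{centered} matrix $\underline{B}^{(\ell)}$ obtained by subtracting expected edge indicators step by step, then bound $\mathbb{E}\,\mathrm{tr}\big((\underline{B}^{(\ell)})(\underline{B}^{(\ell)})^{*}\big)^{m}$ for a slowly growing power $m$; the combinatorial input is that a closed tangle-free non-backtracking walk must retrace most of its edges, so each fresh edge contributes a factor $O(1/n)$ against a factor $n$ per vertex, and the surviving configurations grow like $\rho(M)^{\ell}$ per traversal, yielding $\|\underline{B}^{(\ell)}\|\le(1+o(1))\rho(M)^{\ell/2}$ and hence that all non-structured eigenvalues of $B$ have modulus $\le\rho(M)^{1/2}+o(1)$. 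For $\mathcal{H}$ this is already $|\lambda_2(B_{\mathcal{H}})|\le\sqrt{c}+o(1)$.

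Third, for each eigenpair $(\mu_i,w_i)$ of $M$ with $|\mu_i|>\sqrt{\rho(M)}$ I would construct an approximate eigenvector $\chi_i$ of $B$ by lifting $w_i$ along the tree structure (assigning to a directed edge $u\to v$ a value depending on the type of $v$, reweighted by the tree growth), and show via the coupling of Step 1 that $B^{\ell}\chi_i=\mu_i^{\ell}\chi_i+O\!\big(\rho(M)^{\ell/2}\|\chi_i\|\big)$. Combining this with the bulk bound and a quantitative perturbation lemma --- if $\|B^{\ell}\chi-\mu^{\ell}\chi\|$ is small while the rest of the spectrum sits well inside radius $|\mu|$, then $B$ has a genuine eigenvalue within $o(1)$ of $\mu$ --- produces the eigenvalue $\lambda_1(B_{\mathcal{H}})=c+o(1)$, and $\lambda_1(B_{\mathcal{G}})=\tfrac{a+b}{2}+o(1)$, $\lambda_2(B_{\mathcal{G}})=\tfrac{a-b}{2}+o(1)$. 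A counting/continuity argument (dimension of the structured subspace, or Rouch\'e along an interpolation between $B$ and its annealed version) shows there are no further outliers, giving $|\lambda_k(B_{\mathcal{G}})|\le\sqrt{(a+b)/2}+o(1)$ for $k>2$.

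\textbf{Main obstacle.} The hard part will be Step~2: controlling the centered tangle-free matrix. Non-backtracking walks are strongly dependent, $B$ is non-normal so a trace/moment method must be set up carefully, the non-tangle-free contributions (negligible only after truncation, not for $B^{\ell}$ itself) must be separated cleanly, and the centering has to be organized so that it genuinely yields the square-root cancellation $\rho(M)^{\ell/2}$ rather than $\rho(M)^{\ell}$. Making all error terms in Steps~1 and~3 $o(1)$ uniformly --- not merely $o(\rho^{\ell})$ --- also requires the path-switching bookkeeping that makes the original argument long; I would rely on the estimates developed in \citet{bordenave2015non} and the sparse-regime refinements of \citet{stephan2022non} for these.
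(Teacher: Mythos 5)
The paper does not prove this proposition at all: it is imported verbatim from \citet{bordenave2015non} as a known result, so there is no internal argument to compare against. Your outline is a faithful high-level reconstruction of that reference's actual proof (local tangle-free/tree coupling, trace--moment bound on the centered tangle-free matrix yielding the $\sqrt{\rho(M)}+o(1)$ bulk radius, and lifted approximate eigenvectors for the eigenvalues $\tfrac{a+b}{2},\tfrac{a-b}{2}$ of the mean offspring matrix above the threshold $(a-b)^2>2(a+b)$), and, like the paper, it ultimately defers the heavy estimates to the cited work, so there is nothing substantive to flag.
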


\cref{prop:eigenvalues} informs us that by examining the distribution of eigenvalues, we can discern whether a graph originates from the Erdős–Rényi model or the SBM. Leveraging \cref{lem:nbt_conv}, we can obtain the top two normalized eigenvectors of the non-backtracking matrix using convolutional layers, denoted as $\nu_1$ and $\nu_2$. Applying the non-backtracking convolutional layer to these vectors yields resulting vectors with $\ell_2$-norms corresponding to $\lambda_i(B)$. Consequently, we can distinguish between the two graphs, $\mathcal{G}$ and $\mathcal{H}$, by examining the output of the convolutional layer in the non-backtracking GNN.

\subsection{Proof of Lemma \ref{lem:topK}}\label{appx:topK}
\begin{proof}
    Let us revisit the matrix $T$, defined as $T_{ei}=1\{e_2=i\}$, and its pseudo-inverse denoted as $T^{+}=D^{-1}T^\top$, where $D$ is a diagonal matrix containing the degrees of vertices on the diagonal. Considering the definition of $\xi_2$ as provided in \cref{prop:nonspec}, we can deduce the label of vertex $v$. Specifically, if $(T^+ \xi_2)_v>0$, it implies that the vertex belongs to the first class; otherwise, it belongs to the other class.
    
    Additionally, we are aware that $\|\left(T^+\nu_2- T^+\xi_2\right)_{i}\|_2= O(f(a,b))$ as indicated in \cref{prop:nonspec}, considering the property that the sum of each row of $T^+$ is equal to 1. Consequently, by examining the signs of elements in the vector $T^+\nu_2$, we can classify nodes without encountering any misclassified ones.
\end{proof}





\newpage
\subsection{Proof of Lemma \ref{lem:nbt_conv}}\label{appx:nbt_conv}
\begin{proof}
Suppose we have $f$ arbitrary vectors $x_1,...,x_f$ and a matrix $X=\left[x_1,...,x_f\right]\in \mathbb{R}^{2m\times f}$, which has $x_1,...,x_f$ as columns. Without loss of generality, we assume that $f\le 2m$ and $\|x_v\|_2 = 1$. Let $x_j=\sum_{i=1}^{2m} c^{(j)}_i \nu_i$ for $1\le j \le f$. We will prove the lemma by showing that if we multiply $X$ by $B$ and repeatedly apply the Gram–Schmidt orthnormalization to the columns of the resulting matrix, the $j$-th column of the resulting matrix converges towards the direction of $\nu_j$. Further, we will show that there exists a series of convolutional layers equivalent to this process.

First, we need to show $B^k x_1$ converges to the direction of $\nu_1$. $B^k x_1$ can be expressed as:

\begin{align*}
    B^k x_1 = \lambda_1^k\left(c^{(1)}_1 \nu_1 + c^{(1)}_2\left(\frac{\lambda_2}{\lambda_1}\right)^k \nu_2 +...++ c^{(1)}_{2m}\left(\frac{\lambda_{2m}}{\lambda_1}\right)^k \nu_{2m}\right)\,.
\end{align*}

Let's fix $\epsilon>0$ and take $K_0$ such that for all $k \ge K_0$, the inequality $\left(\frac{\lambda_2}{\lambda_1}\right)^k < \frac{\epsilon|c_1^{(1)}|}{4m}$ holds.  Furthermore, we define the following for brevity:

\begin{align*}
e_\ell^{(k)} := \begin{cases*}
  \frac{A^k x_1}{\|A^k x_1\|_2} & for $\ell=1$,\\
  \frac{u_\ell^{(k)}}{\|u_\ell^{(k)}\|_2} & where $u_\ell^{(k)}=GS(B e_\ell^{(k-1)})$ for $\ell \ge 2$
\end{cases*}\,.
\end{align*}
Then, the distance between $e_1^{(k)}$ and $\textrm{sign}\left(c_1^{(1)}\right)\nu_1$ is
{\small
\begin{align*}
    \left\|e_1^{(k)} - \textrm{sign}\left(c_1^{(1)}\right)\nu_1\right\|_2 
    &\stackrel{(a)}{=} \left\|\frac{c_1^{(1)} \nu_1 + \sum_{i=2}^{2m} \left(\frac{\lambda_i}{\lambda_1}\right)^k c_i^{(1)} \nu_i}{\mathcal{D}} - \textrm{sign}\left(c_1^{(1)}\right)\nu_1\right\|_2\cr
    &\stackrel{(b)}{<}\left\|\frac{c_1^{(1)} \nu_1 -\textrm{sign}\left(c_1^{(1)}\right) \mathcal{D} \nu_1 }{\mathcal{D}}\right\|_2 + \frac{\epsilon}{2}\cr
    &= \frac{-|c_1^{(1)}| + \mathcal{D}}{\mathcal{D}} + \frac{\epsilon}{2}\cr
    &\stackrel{(c)}{\le} \frac{\sqrt{\sum_{i=2}^{2m} \left(\frac{\lambda_i}{\lambda_1}\right)^{2k} (c_i^{(1)})^2}}{\mathcal{D}} + \frac{\epsilon}{2}\cr
    &\stackrel{(d)}{<} \epsilon\,,
\end{align*}
}
where (a) stems from defining $\sqrt{(c_1^{(1)})^2 + \sum_{i=2}^{2m} \left(\frac{\lambda_i}{\lambda_1}\right)^{2k} (c_i^{(1)})^2}$ as $\mathcal{D}$, (b) and (d) are obtained from the assumption $\left(\frac{\lambda_2}{\lambda_1}\right)^k < \frac{\epsilon|c_1^{(1)}|}{4m}$, and for (c) we use the inequality $\sqrt{x^2+y^2} \le \vert x \vert + \vert y \vert$.

Moreover, we assume that for all $1\le\ell< L$ and any $\epsilon_\ell>0$, there exists $K_\ell$ such that for all $k \ge K_\ell$, $\left\| e_\ell^{(k)} - \textrm{sign}\left(c_\ell^{(\ell)}\right)\nu_\ell\right\|_2 < \epsilon_\ell$. To simplify the notations, we define $K_0=\max_{1\le\ell< L} K_\ell$ and $z_\ell^{(k)} = e_\ell^{(k)} - \textrm{sign}\left(c_\ell^{(\ell)}\right)\nu_\ell$ for all $\ell$.
 Then, we must show there exists $K_L$ such that for all $k \ge K_L$, $\left\|z_L^{(k)}\right\|_2 < \epsilon_L$. With no loss of generality, let's assume that $\textrm{sign}\left(c_\ell^{(\ell)}\right)=1$ for all $\ell$. Furthermore, let us fix $\epsilon_L>0$ and $\epsilon_\ell=\min(1,\epsilon_0)$ for $1\le\ell< L$. Then, $u_L^{(k)}$ for $k\ge K_0$ can be written as
{\small
\begin{align*}
    u_L^{(k+1)} &= B e_L^{(k)} -\sum_{i<L} \langle e_i^{(k+1)}, B e_L^{(k)} \rangle e_i^{(k+1)} \cr
    &= B e_L^{(k)} -\sum_{i<L} \left(\langle \nu_i, B e_L^{(k)} \rangle \nu_i  + \langle z_i^{(k+1)}, B e_L^{(k)} \rangle \nu_i + \langle \nu_i+z_i^{(k+1)}, B e_L^{(k)} \rangle z_i^{(k+1)}\right) \cr
    &\stackrel{(a)}{=} \sum_{i\ge L} \langle \nu_i, B e_L^{(k)} \rangle \nu_i - y_L^{(k)}\,,
\end{align*}}
where (a) comes from the definition $y_L^{(k)}:= \sum_{i<L} \langle z_i^{(k+1)}, B e_L^{(k)} \rangle \nu_i + \langle \nu_i+z_i^{(k+1)}, B e_L^{(k)} \rangle z_i^{(k+1)}$.

Additionally, let $d_\text{max}$ be the maximum degree of vertices in $\gG$. For any vector $x\in \mathbb{R}^{2m}$ with an $\ell^2$-norm less than $\epsilon_x$, the following inequality holds:

\begin{align}
    \left\|Bx\right\|_2 = \sqrt{\sum_i \left(\sum_j b_{ij}x_j\right)^2} < \sqrt{\sum_i \epsilon_x^2\left(\sum_j b_{ij}\right)^2} \le \epsilon_x d_\text{max} \sqrt{2m} .
    \label{eq:lemma10}
\end{align}

Thus, using (\ref{eq:lemma10}), we can deduce that $\left\|y_L^{(k)}\right\|_2 \le 3L\epsilon_0 d_\text{max} \sqrt{2m}$.



In contrast, for any integer $p>0$, $u_L^{(k+p)}$ is
{\small
\begin{align*}
    u_L^{(k+p)} &= \sum_{i\ge L} \langle \nu_i, B e_L^{(k+p-1)} \rangle \nu_i- y_L^{(k+p)} \cr
    &= \frac{1}{\left\|u_L^{(k+p-1)}\right\|_2} \sum_{i\ge L} \left\langle \nu_i, \sum_{j\ge L} \langle  \nu_j, B e_L^{(k+p-2)} \rangle \lambda_j \nu_j - B y_L^{(k+p-2)} \right\rangle \nu_i - y_L^{(k+p)} \cr
    &= \frac{1}{\left\|u_L^{(k+p-1)}\right\|_2} \sum_{i\ge L} \left(\lambda_i \langle  \nu_i, B e_L^{(k+p-2)} \rangle - \langle  \nu_i, B y_L^{(k+p-2)} \rangle \right)\nu_i -  y_L^{(k+p)} \cr
    &= \frac{1}{\left\|u_L^{(k+p-1)}\right\|}_2 \sum_{i\ge L}  \langle  \nu_i, B e_L^{(k+p-2)} \rangle \lambda_i \nu_i - \frac{1}{\left\|u_L^{(k+p-1)}\right\|_2} \sum_{i\ge L} \langle  \nu_i, B y_L^{(k+p-2)} \rangle \nu_i -  y_L^{(k+p)}\cr
    & \vdots \cr
    &= \frac{1}{\prod_{j=1}^{p-1}\left\|u_L^{(k+j)}\right\|_2} \sum_{i\ge L}  \langle  \nu_i, B e_L^{(k)} \rangle \lambda_i^{p-1} \nu_i \cr
    &\qquad\qquad -\sum_{j=1}^{p-1} \frac{1}{\left\|u_L^{(k+j)}\right\|_2} \left(\sum_{i\ge L} \langle  \nu_i, B y_L^{(k+j-1)} \rangle \lambda_i^{p-j-1} \nu_i\right) -  y_L^{(k+p)}\cr
    &= \frac{1}{\prod_{j=1}^{p-1}\left\|u_L^{(k+j)}\right\|_2} \sum_{i\ge L}  \langle  \nu_i, B e_L^{(k)} \rangle \lambda_i^{p-1} \nu_i \cr 
    &\qquad\qquad - \sum_{j=1}^{p-1} \frac{1}{\left\|u_L^{(k+j)}\right\|_2} \left(\sum_{i\ge L} \langle  \nu_i, B y_L^{(k+j-1)} \rangle \lambda_i^{p-j-1} \nu_i\right) - y_L^{(k+p)}\cr
    &\stackrel{(a)}{=} C_0\lambda_L^{p-1}\left(\langle  \nu_L, B e_L^{(k)} \rangle \nu_L + \sum_{i>L}  \langle  \nu_i, B e_L^{(k)} \rangle \left(\frac{\lambda_i}{\lambda_L}\right)^{p-1} \nu_i\right) \cr
    & \qquad\qquad\qquad - \sum_{i\ge L}\sum_{j=1}^{p-1} C_j \left(\langle  \nu_i, B y_L^{(k+j-1)} \rangle \lambda_i^{p-j-1} \nu_i\right) - y_L^{(k+p)},
\end{align*}
}
where (a) stems from the definitions $C_0=\frac{1}{\prod_{j=1}^{p-1}\left\|u_L^{(k+j)}\right\|_2}$ and $C_j=\frac{1}{\left\|u_L^{(k+j)}\right\|_2}$.

Now, define $\hat{e}_L^{(k+p)} := \frac{u_L^{(k+p)}}{C_0\lambda_L^{p-1} \langle \nu_L, B e_L^{(k)} \rangle}$. Then,

\begin{align}
    &\hat{e}_L^{(k+p)} = \nu_L + \sum_{i>L}  \frac{\langle  \nu_i, A e_L^{(k)} \rangle}{\langle \nu_L, B e_L^{(k)} \rangle} \left(\frac{\lambda_i}{\lambda_L}\right)^{p-1} \nu_i -\frac{\sum_{i\ge L}\sum_{j=1}^{p-1} C_j \left(\langle  \nu_i, B y_L^{(k+j-1)} \rangle \lambda_i^{p-j-1} \nu_i\right) - y_L^{(k+p)}}{C_0\lambda_L^{p-1} \langle \nu_L, B e_L^{(k)} \rangle}\,.
    \label{eq:elkm}
\end{align}
Take $p_0$ such that $\left(\frac{\lambda_i}{\lambda_L}\right)^{p_0-1} \le \frac{\epsilon \langle  \nu_L, B e_L^{(k)} \rangle}{4(L-N)\langle \nu_i, B e_L^{(k)} \rangle}$, then, the $\ell^2$-norm of the second term of \cref{eq:elkm} is less than $\epsilon_L/4$. 

Meanwhile, the $\ell^2$-norm of the third term in \cref{eq:elkm} is upper-bounded by
\begin{align*}
    \frac{3L\epsilon_0 d_\text{max} \sqrt{2m}}{C_0\lambda_L^{p-1} \langle \nu_L, B e_L^{(k)} \rangle} \left(Cd_\text{max} \sqrt{2m}\sum_{i \ge L}\frac{\lambda^{p-2}_i-1}{\lambda_i-1} + 1\right),
\end{align*}
where $C:= \max_{j=1}^{p-1} C_j$.

Therefore, if we take $\epsilon_0<\frac{\epsilon C_0\lambda_L^{p-1} \langle \nu_L, B e_L^{(k)} \rangle}{12L d_\text{max} \sqrt{2m}} \left(Cd_\text{max} \sqrt{2m}\sum_{i \ge L}\frac{\lambda^{p-2}_i-1}{\lambda_i-1} + 1\right)^{-1}$, we can get $\left\|\hat{e}_L^{(k+p)} - \nu_L\right\|_2<\epsilon_L/2$ for $p>p_0$. 
Finally, the upper bound of $\left\|z_L^{(k)}\right\|_2$ is
\begin{align*}
    \left\|z_L^{(k)}\right\|_2 &\le \left\|{e}_L^{(k+p)}-\hat{e}_L^{(k+p)}\right\|_2 + \left\|\hat{e}_L^{(k+p)}-\nu_L\right\|_2\cr
    & \stackrel{(a)}{<} \left|\left\|\hat{e}_L^{(k+p)}\right\|_2-1\right| + \frac{\epsilon_L}{2}\cr
    & \stackrel{(b)}{\le} \epsilon_L \,,
\end{align*}
where (a) follows from ${e}_L^{(k+p)}=\frac{\hat{e}_L^{(k+p)}}{\left\|\hat{e}_L^{(k+p)}\right\|_2}$, and (b) is obtained from the inequality $1-\frac{\epsilon_L}{2} \le \left\|\hat{e}_L^{(k+p)}\right\|_2 \le 1+\frac{\epsilon_L}{2}$.
\end{proof}

\newpage
\section{Experiment Details}
\label{appx:exp-details}
In this section, we provide details of NBA-GNN implementation and experiments.

\subsection{Implementation}
\label{apx:impl}

\subsubsection{Message Initialization and Aggregation} For message initialization and final aggregation of messages, we have proposed functions $\phi, \sigma, \rho$. To be specific, the message initialization can be written as follows:
\begin{equation*}
    h^{(0)}_{i\rightarrow j} = 
    \begin{cases}
        \phi({e_{ij}, x_{i}, x_{j}}) & (\text{if $e_{ij}$ exists})    \\
        \phi({x_{i}, x_{j}})  & (\text{otherwise})    \\
    \end{cases}\,.
\end{equation*}
For our experiments, we use concatenation for $\phi$, weighted sums for $\sigma$, and average for $\rho$.

\subsubsection{Non-backtracking Updates}
Here, we provide the details of using the non-backtracking operator in our NBA-GNNS. To begin, let's revisit the non-backtracking matrix $B\in\{0,1\}^{2|\mathcal{E}|\times 2|\mathcal{E}|}$ from Section \ref{sec:osq-sensitivity} defined as follows:
\begin{equation*}
B_{(\ell \rightarrow k), (j\rightarrow i)} = \begin{cases}
    1 \quad \text{if $k=j, \ell \neq i$}\\
    0 \quad \text{otherwise}
\end{cases}\,.
\end{equation*}


Returning to our NBA-GNN, the non-backtracking message passing update for a hidden feature $\msg{t}{j}{i}$ at the $(t+1)$-th layer, introduced in \cref{sec:method}, is expressed as follows:
\begin{equation*}
    h_{j\rightarrow i}^{(t+1)} = \phi^{(t)} \biggl(
        h_{j\rightarrow i}^{(t)}, 
        \left\{
        \psi^{(t)} \left(
        h_{k\rightarrow j}^{(t)}, h_{j\rightarrow i}^{(t)}\right)
        : k\in\mathcal{N}(j)\setminus \{i\}
        \right\}
    \biggl),
    \tag{\ref{eq:nba-gnn}}
\end{equation*}
where $\phi^{(t)}$ and $\psi^{(t)}$ are backbone-specific non-linear update and permutation-invariant aggregation functions at the $t$-th layer, respectively. Using the non-backtracking matrix $B$, we can rewrite \cref{eq:nba-gnn} as following:
\begin{equation*}
    H^{(t+1)} = \phi^{(t)} \biggl(
        H^{(t)},
        \psi^{(t)} \left(
            B^{\top} H^{(t)}, H^{(t)}
        \right)
    \biggl),
\end{equation*}
where $H^{(t)}\in\mathbb{R}^{2|\mathcal{E}| \times d}$ are edge-wise features, i.e., each row representing $h_{j\rightarrow i}^{(t)}$.

\subsubsection{Implementation Example}
Now, the NBA-GNN implementation example in \cref{sec:nba-gnn} using GCN \citep{kipf2016semi} as a backbone, can be written as the following:
\begin{equation*}
    h_{j\rightarrow i}^{(t+1)} = h_{j\rightarrow i}^{(t)} + \frac{1}{\vert \mathcal{N}(j) \vert -1}
    \textbf{W}^{(t)} 
    \sum_{k \in \mathcal{N}(j) \setminus \{i\}}
    h_{k\rightarrow j}^{(t)}\,.
\end{equation*}
Recall the out-degree and in-degree matrices of NBA-GNN, denoted as $D_{out}$ and $D_{in}$, where $(D_{out})_{(j\rightarrow i), (j\rightarrow i)} = \sum_{\ell \rightarrow k}B_{(j \rightarrow i), (\ell \rightarrow k)}$ and $(D_{in})_{(j\rightarrow i), (j\rightarrow i)} = \sum_{\ell \rightarrow k}B_{(\ell \rightarrow k),(j \rightarrow i)}$, for each index $j\rightarrow i$. We also introduced the normalized non-backtracking matrix augmented with self-loops as $\widehat{B} = (D_{out} + I)^{-\frac{1}{2}} (B+I) (D_{in} + I)^{-\frac{1}{2}}$. In summary, \cref{eq:nba-gcn} can be expressed as follows:

\begin{equation*}
    H^{(t+1)} = \widehat{B}H^{(t)}\textbf{W}^{(t)}\,.
\end{equation*}
 Hence, the message passing update in NBA-GCN is equivalent to the multiplication of non-backtracking operator and edge-wise representations constructed from node-wise features.
\label{appx:non-back upd}

\subsection{Long-Range Graph Benchmark}
\label{appx:exp-lrgb}

\subsubsection{Dataset Statistics}
From LRGB \citep{dwivedi2022long}, we experiment for 3 tasks: graph classification ($\dsfunc$), graph regression ($\dsstruct$), and node classification ($\dsvocsp$). We provide the dataset statistics in \cref{tab:lrgb-stats}. Note that for $\dsvocsp$, we use SLIC compactness of 30, and edge weights are based only on super-pixels coordinates following the recent work \citep{gutteridge2023drew}.

\begin{table}[ht]
    \centering
    \caption{Statistics of datasets in LRGB.}
        \resizebox{\textwidth}{!}{ \begin{tabular}{lrrrrrrrrr}
        \toprule
        \multicolumn{1}{l}{\multirow{2}{*}{\textbf{Dataset}}} &
        \textbf{Total} & \textbf{Total} & \textbf{Avg} & \textbf{Mean} & \textbf{Total} & \textbf{Avg} & \textbf{Avg} & \textbf{Avg} & \textbf{Dataset}\\
        \multicolumn{1}{l}{} & \textbf{Graphs} & \textbf{Nodes} & \textbf{Nodes} & \textbf{Deg.} & \textbf{Edges} & \textbf{Edges} & \textbf{Short. Path.} & \textbf{Diameter}  & \textbf{Splits} \\
        \midrule
         $\mathtt{PascalVOC}$-$\mathtt{SP}$ &
         11,355 & 5,443,545 & 479.40 & 8.00 & 43,548,360 & 3,835.17 & 8.05 $\pm$ 0.18 & 19.40 $\pm$ 0.65 & 75/12.5/12.5  \\
         $\mathtt{Peptides}$-$\mathtt{func}$ & 
         15,535 & 2,344,859 & 150.94 & 2.04 & 4,773,974 & 307.30 & 20.89 $\pm$ 9.79 & 56.99 $\pm$ 28.72 & 70/15/15 \\
         $\mathtt{Peptides}$-$\mathtt{struct}$ & 
         15,535 & 2,344,859 & 150.94 & 2.04 & 4,773,974 & 307.30 & 20.89 $\pm$ 9.79 & 56.99 $\pm$ 28.72 & 70/15/15 \\
        \bottomrule
    \end{tabular}}
    
    \label{tab:lrgb-stats}
\end{table}

\subsubsection{Experiments Details}
All experiment results are averaged over three runs (seed 0$\sim$2) and trained for 300 epochs, with a $\sim$500k parameter budget. Baseline scores were taken from the LRGB \citep{dwivedi2022long}, Table 1 of DRew \citep{gutteridge2023drew}, and papers of each work.
\begin{itemize}
    \item We use an AdamW optimizer \citep{loshchilov2018decoupled} with lr decay=0.1 , min lr=1e-5, momentum=0.9, and base learning rate lr=0.001 (0.0005 for $\dsvocsp$). 
    \item We use cosine scheduler with reduce factor=0.5 , schedule patience=10 with 50 warm-up.
    \item Laplacian positional encoding was used with hidden dimension 16 and 2 layers.
    \item We use the ''Atom Encoder'', ''Bond Encoder'' for $\dsfunc, \dsstruct$ from based on OGB molecular feature \citep{hu2020open,hu2021ogb}, and the ''VOCNode Encoder'', ''VOCEdge Encoder'' for $\dsvocsp$.
    \item $\dsvocsp$ results in \cref{fig:abl-nba-ba-vocsp} all use the same setup described above.
    \item $\dsfunc$ results in \cref{fig:abl-nba-ba-func,fig:abl-bg-nba-sparse} all use the same setup described above.
    \item GCN results in \cref{fig:abl-nba-ba-vocsp,fig:abl-nba-ba-func,fig:abl-bg-nba-sparse} use the hyperparameters from \citet{tonshoff2023did}.
\end{itemize}

We searched the following range of hyperparameters, and reported the best in \cref{tab:lrgb-hps}.
\begin{itemize}
    \item We searched layers 6 to 12 for $\dsvocsp$ 2, layers 5 to 20 for $\dsfunc$ and $\dsstruct$.
    \item The hidden dimension was chosen by the maximum number in the parameter budget.
    \item Dropout was searched from 0.0$\sim$0.8 for $\dsvocsp$ in steps of 0.1, and 0.1$\sim$0.4 in steps of 0.1 for $\dsfunc$ and $\dsstruct$.
    \item We used the batch size of 30 for $\dsvocsp$ on GPU memory, and 200 for $\dsfunc$ and $\dsstruct$.
\end{itemize}
\begin{table}[ht]
\centering
    \caption{Best hyperparameters for each NBA-GNN and dataset in LRGB.}
        \resizebox{\textwidth}{!}{ \begin{tabular}{llrrrrrr}
        \toprule
        \textbf{Model} & \textbf{Dataset} & \multicolumn{1}{c}{\textbf{\# Param.}} & \multicolumn{1}{c}{\textbf{\# Layers}} & \multicolumn{1}{c}{\textbf{hidden dim.}} & \multicolumn{1}{c}{\textbf{dropout}} & \textbf{Batch size} & \textbf{\# epochs} \\
        \midrule
        \multirow{3}{*}{NBA-GCN} &
            $\mathtt{PascalVOC}$-$\mathtt{SP}$ &472k & 12 & 180 & 0.7 & 30 & 200\\
            & $\mathtt{Peptides}$-$\mathtt{func}$ & 510k & 10 & 186 & 0.1 & 200 & 300 \\
            & $\mathtt{Peptides}$-$\mathtt{struct}$ & 505k & 20 & 144 & 0.1 & 200 & 300 \\
        \midrule
        \multirow{3}{*}{NBA-GIN} &
            $\mathtt{PascalVOC}$-$\mathtt{SP}$ & 472k & 12 & 180 & 0.7 & 30 &  200 \\
            & $\mathtt{Peptides}$-$\mathtt{func}$ & 502k & 10 & 144 & 0.1 & 200 & 300 \\
            & $\mathtt{Peptides}$-$\mathtt{struct}$ & 503k & 10 & 144 & 0.1 & 200 &  300 \\
        \midrule
        \multirow{3}{*}{NBA-GatedGCN} &
            $\mathtt{PascalVOC}$-$\mathtt{SP}$ & 486k & 10 & 96 & 0.25 & 30 &  200 \\
            & $\mathtt{Peptides}$-$\mathtt{func}$ & 511k & 10 & 96 & 0.1 & 200 & 300 \\
            & $\mathtt{Peptides}$-$\mathtt{struct}$ & 511k & 8 & 108 & 0.1 & 200 &  300 \\
        \bottomrule
    \end{tabular}}
        
    \label{tab:lrgb-hps}
\end{table}

\subsection{Transductive Node Classification}
\subsubsection{Dataset statistics}
We conducted experiments involving three citation networks (Cora, CiteSeer, and Pubmed in \citet{sen2008collective}), and three heterophilic datasets (Texas, Wisconsin, and Cornell in \citet{pei2019geom}), focusing on transductive node classification. Our reported results in \cref{tab:node-task} are the averages obtained from 10 different seed runs to ensure robustness and reliability. 

For the citation networks, we employed the dataset splitting procedure outlined in \citet{yang2016revisiting}. In contrast, for the heterophilic datasets, we randomly divided the nodes of each class into training (60\%), validation (20\%), and testing (20\%) sets. We provide more details of dataset statistics in \cref{tab:node-stats}.

\subsubsection{Experiment Details}
The training duration spanned 1,000 epochs for citation networks and 100 epochs for heterophilic datasets. Following training, we selected the best epoch based on validation accuracy for evaluation on the test dataset. We used the AdamW optimizer \citep{loshchilov2018decoupled} with a learning rate of 3e-5. The model's hidden dimension and dropout ratio were set to 512 and 0.2, respectively, consistent across all datasets, after fine-tuning these hyperparameters on the Cora dataset. Additionally, we conducted optimization for the number of convolutional layers within the set $\{1,2,3,4,5\}$. The results revealed that the optimal number of layers is typically three for most of the models and datasets. However, there are exceptions, such as CiteSeer-GatedGCN,  PubMed-\{GraphSAGE, GraphSAGE+NBA+PE\}, Wisconsin-\{GraphSAGE+NBA, GraphSAGE+NBA+PE\} and Cornell-\{GraphSAGE, GraphSAGE+NBA, GraphSAGE+NBA+PE\}, where the optimal number of layers is found to be four. Furthermore, for Cora-\{GraphSAGE+NBA+PE, GAT+NBA\}, CiteSeer-GraphSAGE+NBA, the optimal number of layers is determined to be five. 
\begin{table}[ht]
    \centering
    \caption{Statistics of the datasets for the transductive node classification task}
    \begin{tabular}{lrrrrr}
        \toprule
        \multicolumn{1}{l}{\multirow{2}{*}{\textbf{Dataset}}} &
        \textbf{Total} & \textbf{Num} & \textbf{Num} & \textbf{Dim} & \textbf{Num} \\
        \multicolumn{1}{l}{} & \textbf{Graphs} & \textbf{Nodes} & \textbf{Edges} & \textbf{Features} & \textbf{Classes} \\
        \midrule
         $\mathtt{Cora}$ &1& 2,708 & 10,556 & 1,433 & 7 \\
         $\mathtt{CiteSeer}$ &1& 3,327 & 9,104 & 3,703 & 6 \\
         $\mathtt{PubMed}$ &1& 19,717 & 88,648 & 500 & 3 \\
         $\mathtt{Texas}$ &1& 183 & 309 & 1,703& 5\\
         $\mathtt{Wisconsin}$ &1& 251 & 499 & 1,703 & 5\\
         $\mathtt{Cornell}$ &1& 183 & 295 & 1,703 & 5\\
        \bottomrule
    \end{tabular}
    
    \label{tab:node-stats}
\end{table}

\subsubsection{Baseline implementation}
\label{subsubsec:baseline_impl}
In \cref{subsec:abl}, we compared NBA-GNNs with several baselines to verify the effectiveness of non-backtracking updates. Three baselines that update edge features were considered: GatedGCN \citep{bresson2017residual}, EGNN \citep{gong2019exploiting}, and CensNet \citep{jiang2020co}. Though these baselines update edge features, the six datasets used for transductive node classification do not have initial edge features. Based on each paper and its code, we used the pairwise cosine similarities between corresponding node features as edge features for CensNet, and encoded each directed edge into a vector $\mathbf{v}\in \{0,1\}^3$ for EGNN. Additionally, we utilize the attention-based EGNN model, referred to as EGNN(A) in the original paper.



\end{document}